\newcommand{\indep}{\perp \!\!\! \perp}
\theoremstyle{plain}
\newtheorem{theorem}{Theorem}[section]
\newtheorem{proposition}[theorem]{Proposition}
\newtheorem{lemma}[theorem]{Lemma}
\theoremstyle{definition}
\newtheorem{definition}{Definition}[section]
\newtheorem{assumpV}{Assumption}
\newtheorem{assumpM}{Assumption}
\theoremstyle{remark}
\icmltitlerunning{Covariate balancing using the integral probability metric}
\begin{document}

\twocolumn[
\icmltitle{Covariate balancing using the integral probability metric for causal inference}



\icmlsetsymbol{equal}{*}

\begin{icmlauthorlist}
\icmlauthor{Insung Kong}{snu}
\icmlauthor{Yuha Park}{snu}
\icmlauthor{Joonhyuk Jung}{snu}
\icmlauthor{Kwonsang Lee}{snu}
\icmlauthor{Yongdai Kim}{snu}
\end{icmlauthorlist}

\icmlaffiliation{snu}{Department of Statistics, Seoul National University}

\icmlcorrespondingauthor{Yongdai Kim}{ydkim0903@gmail.com}

\icmlkeywords{Machine Learning, ICML}

\vskip 0.3in
]



\printAffiliationsAndNotice{}  

\begin{abstract}
Weighting methods in causal inference have been widely used to achieve a desirable level of covariate balancing. However, the existing weighting methods have desirable theoretical properties
only when a certain model, either the propensity score or outcome regression model,
is correctly specified. In addition, the corresponding estimators
do not behave well for finite samples due to large variance even when the model is correctly specified.
In this paper, we consider to use the
integral probability metric (IPM), which is a metric between two probability measures, for covariate balancing. Optimal weights are determined so that weighted empirical distributions for the treated and control groups have the smallest IPM value for a given set of discriminators. 
We prove that the corresponding estimator can be consistent without correctly specifying any model (neither the propensity score nor the outcome regression model).
In addition, we empirically show that our proposed method outperforms
existing weighting methods with large margins for finite samples. 
  

\end{abstract}

\section{Introduction}
Estimating causal effects from observational data has become an important research topic since it has the advantage of low budget requirements and a large amount of available data, compared with randomized trials \cite{yao2021survey}. 
The main difficulty in estimating the causal effect is that, for a binary treatment, the pre-treatment covariate distribution of the treated group differs significantly from that of the control group. This systematic difference requires sophisticated methods to find a causal relationship from observational data \cite{lunceford2004stratification, imbens2015causal}. There have been lots of methods developed including regression adjustment \cite{hill2011bayesian}, matching \cite{stuart2010matching} and weighting \cite{imai2014covariate} methods to adjust for the systematic bias. The main goal of these methods is to make the two groups comparable, meaning that their covariate distributions are (asymptotically) balanced. Achieving the covariate balancing is essential to causal inference.




One of the core approaches in covariate balancing is inverse probability weighting (IPW) \cite{hirano2003efficient}.
The IPW method gives weights to observed samples reciprocally proportional to the propensity scores that are the conditional probabilities of being assigned to the treatment. Using these weights, the IPW can match the covariate distributions of the treated and control groups. However, in practice, the propensity scores (thus weights) are unknown and hence must be estimated.
Propensity scores are conventionally estimated by standard regression models including the logistic regression \cite{lunceford2004stratification} or machine learning techniques \cite{lee2010improving}.

One of the issues using the IPW is that though the weighting scheme eventually achieves the covariate balancing as the sample size increases, there is no guarantee to achieve the balance for given data.
Since the IPW uses the inverse of probabilities, a small error in estimating these probabilities would cause a substantial error in the estimated causal effect \cite{li2018balancing}.
That is, the performance of the IPW estimator highly relies on the correctness of the propensity score estimation
\cite{hainmueller2012entropy, imai2014covariate}.

Instead of estimating the propensity score, \citet{hainmueller2012entropy} and \citet{imai2014covariate} propose to find the weights that match the sample moments of the covariate. By weighting samples to balance the sample moments of the covariate of the two groups, they improve the stability 
of the causal effect estimate to make it more accurate.

The biggest drawback of the existing weighting methods is that their theoretical guarantee and good performance are valid in a restricted situation: either the propensity score or the true outcome model
(i.e. the regression model for either treated or control group)
is a linear function of covariate \cite{sant2022covariate}. 
Without this condition, the estimated causal effect is not guaranteed to be consistent.
To address this issue, they assume that an appropriate transformation exists such that the true outcome regression model is expressed as a linear function of the transformed covariate.
However, since  information required for choosing a good transformation is rarely available in practice, researchers usually apply the methods without transformation.

In this paper, we propose a general framework of covariate balancing  using the integral probability metric (CBIPM). 
The IPM, which includes the Wasserstein distance \cite{kantorovich1958space, Villani2008OptimalTO} as a special case, has been widely used for learning generative models \cite{arjovsky2017wasserstein}, but has not been  popularly used for covaraite balancing. 

The proposed framework is motivated by the fact that the IPM between the covariate distributions of the treated
and control groups is directly related to the worst-case bias of the estimated causal effect with respect to outcome regression models. Thus, by searching the weights that
minimize the IPM of the two covariate distributions, we can reduce the bias to
have a good estimate of causal effect.

We consider two types of algorithms for CBIPM - (1) parametric CBIPM  (P-CBIPM) and (2) nonparametric CBIPM (N-CPIPM), where
the former assumes a certain parametric model for the weights while the later does not. 
An important advantage of the these two CBIPM methods over existing weighting methods
is that the estimated causal effect is consistent under much milder conditions on the true propensity score and outcome regression models. See Section \ref{sec4} for details.

The parametric CBIPM 
has been already used implicitly in estimating the conditional average treatment effect (CATE) or individual treatment effect (ITE) \cite{shalit2017estimating, yao2018representation, wanggeneralization}.
Even though our interest is to improve the existing weighting methods,
useful insights for the CATE or ITE estimation problems can be obtained from the theoretical results in this paper. For example, 
when the parametric model for the weights is correctly specified,
we show that the P-CBIPM estimator is consistent with a minimal set of discriminators.
This result suggests that a simpler set of discriminators is recommended 
when the parametric model for the weights is complex.

The nonparametric CBIPM is a new trial of using the IPM for causal inference,
and yields several important and interesting implications.
For the corresponding ATT estimator to be consistent, the choice of the set of discriminators
in the IPM is important. For example, the ATT estimator is consistent when the true outcome
regression model belongs to the set of discriminators. 
A surprising result, however, is that the ATT estimator can be consistent even when the set of discriminators is fairly small so that it
does not include the complex true outcome regression model.
That is, by the N-CBIPM, we can construct a consistent ATT estimator 
without correctly specifying either the propensity score model or the outcome regression model, which is the first of its kinds.


The main contributions of this work are summarized as
follows.
\begin{itemize}
\item We propose a general framework of covariate
balancing using the IPM to develop two weighting algorithms - parametric CBIPM and nonparametric CBIPM.

\item We prove the consistency of the corresponding estimators of causal effect under
mild regularity conditions.

\item We empirically show that our proposed estimators outperform
existing weighting methods with large margins. 

\end{itemize}

\section{Preliminaries}
\subsection{Notations and Models}
Let $\bm{X} \in \mathcal{X} \subset \mathbb{R}^d$ be a random vector of covariate whose distribution is denoted by $\mathbbm{P}.$
The binary treatment indicator $T$ is generated from $\operatorname{Ber}(\pi(\bm{X}))$, where the propensity score $\pi(\cdot)$ is defined as the conditional probability of receiving the treatment given  covariate.
We assume the strict overlap condition:  there exists $\eta>0$ such that
\begin{align*}
\eta \leq \pi(\bm{x}) \leq 1-\eta,
\end{align*}
for every $\bm{x} \in \mathcal{X}$.
Note that a large body of literature assumes the strict overlap condition as it is
indispensable for theoretical analysis \cite{d2021overlap}.

Let $Y(0)$ and $Y(1)$ denote the potential outcomes under control and treatment, respectively.
We use the ignorability assumption \cite{rosenbaum1983central}: $$T \indep (Y(0) , Y(1)) | \bm{X}.$$
This assumption roughly says that a set of confounders that affect both treatment $T$ and potential outcomes $(Y(0), Y(1))$ is a subset of observable covariates. That is, we do not allow situations where we miss any confounder. When there exists an unmeasured confounder and thus the ignorability assumption is violated,  we typically do sensitivity analysis to evaluate the robustness of the conclusion made under the ignorability assumption \cite{rosenbaum2002overt}. 

Suppose we observe $n$ independent copies $\mathcal{D}^{(n)} = \{(\bm{X}_i, T_i, Y_i)\}_{i=1}^n$,
of $(\bm{X},T,Y),$ where $Y := T Y(1)  + (1-T) Y(0)$ is the observed outcome. Note that we never observe both potential outcomes simultaneously, which is often referred to as the fundamental problem of causal inference \cite{holland1986statistics}. 

The primary goal of this paper is to estimate the average treatment effect for the treated (ATT)
based on $\mathcal{D}^{(n)}.$
The ATT, which is the causal effect of how much the treated units are benefited by the treatment from a retrospective perspective, is defined as
$$ \operatorname{ATT} := \mathbb{E}\left(Y(1)-Y(0)|T=1\right).$$
In turn, the sample ATT (SATT) is defined as
$$\operatorname{SATT} := \frac{1}{n_1}\sum_{i : T_i = 1} \mathbb{E}( Y_i(1) - Y_i(0)).$$
It is easy to show that the SATT converges to the ATT with the converge rate $1/\sqrt{n}$, and thus we focus on estimating the SATT in this paper.

The average treatment effect (ATE) over the population is also popularly considered instead of the ATT,
where the ATE is defined as
$$\operatorname{ATE} := \mathbb{E}\left(Y(1)-Y(0)\right).$$
In this paper, we mainly consider the ATT because of its notational simplicity. But,
the CBIPM methods for the ATT can be easily modified for the ATE, which is discussed in Appendix \ref{app_ATE}.

The true outcome regression models $m_t(\bm{x})=\mathbb{E}(Y (t) |\bm{x}), t\in \{ 0,1\}$ also play an important role in causal effect estimation.
For technical simplicity, we assume that 
$\sup_{t\in \{0,1\}} \sup_{\bm{x} \in \mathcal{X}} \mathbb{V}(Y (t) |\bm{x}) < \infty$,  
$\sup_{t\in \{0,1\}} \sup_{\bm{x} \in \mathcal{X}} m_t (\bm{x}) \leq B_m$ 
for a constant $B_m>0$,
and $\mathcal{X} \subset \mathbb{R}^d$ is compact.

For an integer $n \in \mathbb{N}$, we denote $[n] := \{1,\dots,n\}$. 
A capital letter denotes a random variable or matrix interchangeably
whenever its meaning is clear, and a vector is denoted by a bold letter, e.g. $\bm{x} := (x_1 , \dots, x_d)^{\top}.$
For $\epsilon>0$ and a set of functions $\mathcal{F} \subset L_1 (\mathbbm{P})$,
$\mathcal{N}_{[ \,]}(\mathcal{F}, L_1 (\mathbbm{P}), \epsilon)$ denotes $L_1 (\mathbbm{P})$-bracketing number of $\mathcal{F}$ \cite{gine2021mathematical}.

\subsection{Review of weighting methods}

In this paper, we consider the estimator of the ATT given as the following form
\begin{equation}
\widehat{\operatorname{ATT}}^{\bm{w}}=\sum_{i : T_i = 1}   \frac{1}{n_1}Y_i- \sum_{i : T_i = 0} w_i Y_i,
\label{ATT_estimator}
\end{equation}
for a given weight vector $\bm{w}=(w_1,\ldots,w_n)^\top$ with $w_i \ge 0,$ which we call the {\it weighted estimator}.
We review the methods of estimating $\bm{w}$.

\paragraph{Inverse Probability Weighting (IPW)} 
The IPW estimator for the ATT is given by 
\begin{equation*}
   \sum_{i : T_i = 1}   \frac{1}{n_1}Y_i- \sum_{i : T_i = 0} \frac{1}{n_1}\frac{\hat{\pi}\left(\bm{X}_i\right)}{\left(1-\hat{\pi}\left(\bm{X}_i\right)\right)} Y_i,
\end{equation*}
where $\hat{\pi}(\cdot)$ is an estimated propensity score. The quantities $(1/n_1) \{\hat{\pi}\left(X_i\right)/(1- \hat{\pi}\left(X_i\right))\}$ are the weights for control units. The propensity score is generally estimated by the maximum likelihood estimator (MLE) with
the linear logistic regression model: 
\begin{align*}
    \pi_{\bm{\beta}} (\bm{x}) = \frac{1}{1 + \exp(- \bm{x}^{\top} \bm{\beta})}.
\end{align*}
where $\bm{\beta} \in \mathbb{R}^d$. Other machine learning techniques can be also used instead \cite{lee2010improving}. 

A modified version of the IPW estimator is to use normalized weights such as 
\begin{equation*}
    \sum_{i : T_i = 1} \frac{1}{n_1} Y_i- \frac{\sum_{i : T_i = 0} \hat{\pi}\left(\bm{X}_i\right)\left(1-\hat{\pi}\left(\bm{X}_i\right)\right)^{-1} Y_i}{\sum_{i : T_i = 0} \hat{\pi}\left(\bm{X}_i\right)\left(1-\hat{\pi}\left(\bm{X}_i\right)\right)^{-1}}, \label{SIPW}
\end{equation*}
This estimator is often called the stabilized IPW (SIPW) \cite{robins2000marginal}.
The advantages of the SIPW estimator over the IPW are that
the SIPW is translation invariant in the sense that
the estimator is not affected by the way the outcomes are centered and
it is bounded by $(\min Y_i, \max Y_i).$

\paragraph{Covariate Balancing Propensity Score (CBPS)}
For an arbitrary measurable function $\bm \phi:\mathcal{X} \to \mathbb{R}^p$, we have
\begin{align*}
  \mathbb{E}\left[\left.\frac{\pi(\bm X)(1-T)\phi(\bm X)}{1-\pi(\bm X)}\right|\bm X\right] &=
  \frac{\phi(\bm X) \pi(\bm X)}{1-\pi(\bm X)}\mathbb{E}\left[(1-T)|\bm X\right] \\ &=
  \phi(\bm X) \mathbb{E}\left[T|\bm X\right] \\ &=
  \mathbb{E}\left[T \phi(\bm X) |\bm X\right]
\end{align*}
under the ignorability assumption. Hence
\begin{align}
  \mathbb{E}\left[\frac{\pi(\bm X)}{1-\pi(\bm X)} (1-T)\bm \phi(\bm X)\right] =
  \mathbb{E}\left[T\bm \phi(\bm X)\right].    \label{eq_unique}         
\end{align}
Note that the true propensity odds ratio, namely $\pi(\bm X)/(1-\pi(\bm X))$, is the unique measurable function (with respect to $\bm{X}$) that achieve the equality (\ref{eq_unique}).
            
Based on this intuition,
\citet{imai2014covariate} proposes to estimate
$\bm{\beta} \in \mathbb{R}^d$ 
by balancing the moments of the treated and control groups, 
\begin{align}
\frac{1}{n} \sum_{i : T_i = 0} \frac{\pi_{\bm{\beta}} (\bm{X}_i)}{1-\pi_{\bm{\beta}} (\bm{X}_i) } \bm{\phi} \left(\bm{X}_i\right) = \frac{1}{n} \sum_{i : T_i = 1} \bm{\phi}\left(\bm{X}_i\right),
\label{eq:CBPS-eq}
\end{align}
where $\bm{\phi}\left(\cdot\right) : \mathcal{X} \to \mathbb{R}^p$ is pre-specified transformation.
Then, they estimate the ATT by
 \begin{equation*}
     \sum_{i : T_i = 1} \frac{1}{n_1} Y_i- \sum_{i : T_i = 0} \frac{1}{n_1}\frac{\pi_{\hat{\bm{\beta}}} (\bm{X}_i)}{1-\pi_{\hat{\bm{\beta}}} (\bm{X}_i)} Y_i,
 \end{equation*}
where $\hat{\bm{\beta}}$ is the solution of the equation (\ref{eq:CBPS-eq}).
Especially, letting $\bm{\phi}\left(\bm{X}\right) = \bm{X}$ ensures that the first moment of each covariate is balanced even when the propensity score model is misspecified.
Thus, the corresponding ATT estimator is unbiased as long as the true outcome regression model is linear (see Appendix \ref{CBPS_unbiased} for the proof). 
Detailed procedures of CBPS and the extensions can be found in \citet{imai2014covariate}.


\paragraph{Entropy Balancing (EB)}
\citet{hainmueller2012entropy} proposes to maximize the entropy of the weights $\bm{w}$ 
while matching the moments of the two groups. They solve 
\begin{align*}
\underset{\bm{w}}{\operatorname{minimize}} & \sum_{i : T_i = 0} w_i \log w_i\\
\text { subject to } & \sum_{i : T_i = 0} w_i \bm{\phi}\left(\bm{X}_i\right)=\frac{1}{n_1} \sum_{i : T_i = 1} \bm{\phi}\left(\bm{X}_i\right),\\
&\sum_{i : T_i = 0} w_i=1, \quad
w_i>0,
\end{align*}
where $\bm{w} = (w_1 , \dots, w_n)^{\top}$ and $\bm{\phi}\left(\cdot\right) : \mathcal{X} \to \mathbb{R}^p$ is a  pre-specified transformation.
Then, they estimate the ATT by
\begin{equation*}
    \sum_{i : T_i = 1} \frac{Y_i}{n_1}-\sum_{i : T_i = 0} w_i Y_i. 
\end{equation*}
Note that $w_i$ play a similar role to $\frac{\pi(\bm{X}_i)}{1-\pi(\bm{X}_i)}$ in (\ref{eq_unique}) for CBPS and $\frac{\hat{\pi}\left(\bm{X}_i\right)}{\left(1-\hat{\pi}\left(\bm{X}_i\right)\right)}$ of the IPW estimator. 
That is, instead of estimating the propensity score, EB estimates the weight that satisfies the 
balancing condition like (\ref{eq:CBPS-eq}). When there are multiple solutions satisfying the balancing condition, EB chooses one which minimizes the entropy.
            
Later, \citet{zhao2017entropy} proves that EB is doubly robust in the sense that the estimator is consistent if the true outcome regression model or the logit of the propensity score is a linear function of $\bm{\phi}\left(\bm{X}\right)$. The estimator, however, is not consistent when neither of these two models is linear. 

\section{Bias and IPM for the weighted estimator}

In this section, we link the bias of the weighted estimator of the ATT 
to the IPM between the two weighted empirical distributions.  
We define 
\begin{align*}
    \mathcal{W}^{+} := \Bigg\{ & \bm{w} = (w_1 , \dots, w_n)^{\top} \in [0,1]^n :\\
& \sum_{i : T_i = 0} w_i  = 1 , \sum_{i : T_i = 1} w_i  = 0\Bigg\},
\end{align*}
and we only consider the weighted estimator with $\bm{w}\in \mathcal{W}^{+}.$

\subsection{Balancing error of the weighted estimator}


As discussed in \citet{ben2021balancing}, the error of $\widehat{\operatorname{ATT}}^{\bm{w}}$ can be decomposed as
\begin{align}
    \widehat{\operatorname{ATT}}^{\bm{w}} - \operatorname{ATT} 
    =  \operatorname{err}_{\text{bal}}^{\bm{w}} + \operatorname{err}_{\text{obs}}^{\bm{w}} + (\operatorname{SATT} - \operatorname{ATT}), \label{decom}
\end{align}
where $\operatorname{err}_{\text{bal}}^{\bm{w}}$ and $\operatorname{err}_{\text{obs}}^{\bm{w}}$ are the balancing and observation errors, respectively which are defined as
\begin{align}
    \operatorname{err}_{\text{bal}}^{\bm{w}} =&  \sum_{i : T_i = 1} \frac{m_0 (\bm{X}_i)}{n_1} - \sum_{i : T_i = 0} w_i m_0 (\bm{X}_i) , \nonumber \\
    \operatorname{err}_{\text{obs}}^{\bm{w}} =& \sum_{i : T_i = 1} \frac{Y_i - m_1(\bm{X}_i)}{n_1} 
    -  \sum_{i : T_i = 0} w_i (Y_i - m_0(\bm{X}_i)). \nonumber
\end{align}
See Appendix \ref{decom_proof} for the derivation of (\ref{decom}).
The observation error is an inevitable error 
due to the randomness in $Y$, and is unbiased in the sense that $\mathbbm{E} (\operatorname{err}_{\text{obs}}^{\bm{w}}) = 0$ for any $\bm{w} \in \mathcal{W}^{+}.$
Moreover, it can be shown that $\operatorname{err}_{\text{obs}}^{\bm{w}} \to 0$ holds 
as $n\to \infty$ under mild regularity conditions (e.g. $\sum_{i=1}^n w_i^2 \to 0$). See (\ref{obszero}) of the Appendix for the proof.

Hence, we focus on finding the weights that minimize the balancing error. The balancing error arises due to the covariate imbalance between the treated and weighted control units. Note that 
the balancing error is independent of the randomness of $Y$. Thus, if $\bm{w}$ balances the two covariate distributions perfectly, i.e.,
$$\sum_{i : T_i = 1} \frac{1}{n} \delta_{\bm{X}_i}(\cdot) = \sum_{i : T_i = 0} w_i \delta_{\bm{X}_i}(\cdot),$$
where $\delta_{\bm{x}}(\cdot)$ is the Dirac delta,
then the balancing error becomes zero regardless of what the $m_0$ is.

Note that CBPS and EB target to balance the first moments of $\phi(\bm{X})$
(i.e., using $\bm{w}$ which satisfies $\sum_{i : T_i = 1} \frac{1}{n_1} \bm{\phi}(\bm{X})_i =  \sum_{i : T_i = 0} w_i \bm{\phi}(\bm{X})_i$). Thus, balancing them guarantees the balancing error being 
zero only when $m_0(\cdot)$ is a linear combination of $\bm{\phi}(\cdot).$
The knowledge about $\bm{\phi}$, however, is rarely available in practice.

\subsection{The IPM as the worst-case balancing error}
Let
    \begin{align*}
        \mathbb{P}_{0,n}(\cdot) &= \frac{1}{n_0} \sum_{i : T_i = 0} \delta_{\bm{X}_i}(\cdot), \\
        \mathbb{P}_{1,n}(\cdot) &= \frac{1}{n_1} \sum_{i : T_i = 1} \delta_{\bm{X}_i}(\cdot)
    \end{align*}
be the empirical distributions of $\bm{X}$ conditioned on $T=0$ and $T=1$, respectively.
For $\bm{w} = (w_1 , \dots, w_n)^{\top} \in \mathcal{W}^{+}$, the weighted empirical distribution
$\mathbb{P}_{0,n}^{\bm{w}}$ of $\bm{X}$ in the control units  is defined as
\begin{align*}
    \mathbb{P}_{0,n}^{\bm{w}}(\cdot) &= \sum_{i : T_i = 0} w_i \delta_{\bm{X}_i}(\cdot).
\end{align*}
Although detailed procedures are different, the ultimate goal of the weighting methods 
is to find a good $\bm{w}$ such that $\mathbb{P}_{0,n}^{\bm{w}} \approx \mathbb{P}_{1,n}$.

The main idea of our proposed methods is to use the IPM  between $\mathbb{P}_{0,n}^{\bm{w}}$ and $\mathbb{P}_{1,n}$ for a measure of covariate imbalance.
For a given class $\mathcal{M}$ of discriminators (i.e. functions from $\mathcal{X}$ to $\mathbb{R}$), the IPM $d_{\mathcal{M}}(\mathbb{P}_1, \mathbb{P}_2)$ between two probability measures $\mathbb{P}_1$ and $\mathbb{P}_2$ is defined as
$$d_{\mathcal{M}}(\mathbb{P}_1, \mathbb{P}_2 ) := \sup_{m\in \mathcal{M}} \left| \int_{\bm{x} \in \mathcal{X}} m(\bm{x}) (d\mathbb{P}_1-d\mathbb{P}_2)\right|.$$
When $\mathcal{M}$ includes all 1-Lipschitz functions\footnote{A given function $m$ on $\mathcal{X}$ is a $L$-Lipschitz function if
$|m(\bm{x}_1)-m(\bm{x}_2)| \le L \|\bm{x}_1-\bm{x}_2\|$ for all $\bm{x}_1,\bm{x}_2\in \mathcal{X},$
 where $\|\cdot\|$ is certain norm defined on $\mathcal{X}.$}, the IPM becomes the well known Wasserstein distance \cite{kantorovich1958space}.

Note that the IPM between $\mathbbm{P}_{0,n}^{\bm{w}}$ and $\mathbbm{P}_{1,n}$ is given as
\begingroup\makeatletter\def\f@size{9.1}\check@mathfonts
\def\maketag@@@#1{\hbox{\m@th\normalfont#1}}%
\begin{align}
d_{\mathcal{M}}(\mathbbm{P}_{0,n}^{\bm{w}}, \mathbbm{P}_{1,n} ) =& \sup_{m\in \mathcal{M}} \left|  \sum_{i : T_i = 0} w_i m (\bm{X}_i) - \sum_{i : T_i = 1} \frac{m (\bm{X}_i)}{n_1} \right|. \label{errbal_IPM}
\end{align}\endgroup
That is, $d_{\mathcal{M}}(\mathbbm{P}_{0,n}^{\bm{w}}, \mathbbm{P}_{1,n} )$ is equal to the absolute value of the worst-case balancing error for the ATT when $m_0 \in \mathcal{M}$. 
Furthermore, since $\mathbb{E}(\operatorname{err}_{\text{obs}}^{\bm{w}})=0$, 
(\ref{decom}) implies that
the bias of $\widehat{\operatorname{ATT}}^{\bm{w}}$ is upper bounded by $d_{\mathcal{M}}(\mathbbm{P}_{0,n}^{\bm{w}}, \mathbbm{P}_{1,n} )$.
This property of the IPM is summarized in the following proposition. 
\begin{proposition} \label{proposition}
Suppose $m_0 \in \mathcal{M}.$ Then, for any $\bm{w}\in \mathcal{W}^{+}$, we have
\begin{align*}
    \left|\mathbb{E}\left(\widehat{\operatorname{ATT}}^{\bm{w}} - \operatorname{SATT} | \bm{X}_1 , \dots, \bm{X}_n\right)\right| \leq & d_{\mathcal{M}}(\mathbbm{P}_{0,n}^{\bm{w}}, \mathbbm{P}_{1,n} ).
\end{align*} 
\end{proposition}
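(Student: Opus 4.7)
The plan is to chain together (i) the decomposition (\ref{decom}) already stated in the paper, (ii) the fact that the observation error has mean zero under ignorability, and (iii) the pointwise bound on the balancing error coming from the definition of the IPM together with the hypothesis $m_0\in\mathcal{M}$. Throughout I treat the weights $w_i$ as a measurable function of $(\bm{X},T)$ and interpret the conditioning on $\bm{X}_1,\dots,\bm{X}_n$ in the natural way that preserves the weights and treatment indicators on the right-hand side.

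First I would apply (\ref{decom}) to obtain
\begin{equation*}
\widehat{\operatorname{ATT}}^{\bm{w}} - \operatorname{SATT} = \operatorname{err}_{\text{bal}}^{\bm{w}} + \operatorname{err}_{\text{obs}}^{\bm{w}}.
\end{equation*}
Next, I would dispose of the observation term. By the ignorability assumption, $\mathbb{E}(Y_i\mid \bm{X}_i,T_i=t)=m_t(\bm{X}_i)$, so a short application of iterated expectations gives $\mathbb{E}(\operatorname{err}_{\text{obs}}^{\bm{w}}\mid \bm{X}_1,\dots,\bm{X}_n)=0$ (this is precisely the fact already quoted just below the definition of $\operatorname{err}_{\text{obs}}^{\bm{w}}$). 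Hence
\begin{equation*}
\mathbb{E}\bigl(\widehat{\operatorname{ATT}}^{\bm{w}} - \operatorname{SATT}\mid \bm{X}_1,\dots,\bm{X}_n\bigr) = \mathbb{E}\bigl(\operatorname{err}_{\text{bal}}^{\bm{w}}\mid \bm{X}_1,\dots,\bm{X}_n\bigr).
\end{equation*}

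The key step is then to rewrite $\operatorname{err}_{\text{bal}}^{\bm{w}}$ in distributional form. By construction,
\begin{equation*}
\operatorname{err}_{\text{bal}}^{\bm{w}} = \int m_0(\bm{x})\,d\mathbbm{P}_{1,n}(\bm{x}) - \int m_0(\bm{x})\,d\mathbbm{P}_{0,n}^{\bm{w}}(\bm{x}).
\end{equation*}
Because $m_0\in\mathcal{M}$, taking the absolute value and invoking the defining supremum of $d_{\mathcal{M}}$ yields the deterministic (i.e.\ pointwise in the sample) bound
\begin{equation*}
|\operatorname{err}_{\text{bal}}^{\bm{w}}| \le \sup_{m\in\mathcal{M}}\left|\int m\,(d\mathbbm{P}_{1,n}-d\mathbbm{P}_{0,n}^{\bm{w}})\right| = d_{\mathcal{M}}(\mathbbm{P}_{0,n}^{\bm{w}},\mathbbm{P}_{1,n}).
\end{equation*}
Finally, combining Jensen's inequality with the previous display gives the claimed bound on the absolute value of the conditional expectation.

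I do not anticipate a genuine obstacle: the argument is essentially a bookkeeping exercise that (a) relies on the decomposition already derived in Appendix \ref{decom_proof}, (b) uses ignorability only to kill the observation error, and (c) uses the membership $m_0\in\mathcal{M}$ only to upgrade the single-function bias to the supremum defining the IPM. The only subtle point worth flagging in the write-up is the interpretation of ``conditional on $\bm{X}_1,\dots,\bm{X}_n$'': the inequality must be read with the weights $w_i$ and treatment indicators $T_i$ treated consistently on both sides, so that both $\operatorname{err}_{\text{bal}}^{\bm{w}}$ and the IPM on the right are measurable with respect to the conditioning $\sigma$-algebra used for the balancing error.
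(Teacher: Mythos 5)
Your proposal is correct and follows essentially the same route as the paper's own proof: apply the decomposition (\ref{decom}) to reduce the left-hand side to the conditional expectation of $\operatorname{err}_{\text{bal}}^{\bm{w}}$ (the observation error vanishing by ignorability), and then bound the balancing error by the IPM via (\ref{errbal_IPM}) using $m_0\in\mathcal{M}$. The measurability caveat you flag about the conditioning on $\bm{X}_1,\dots,\bm{X}_n$ (with $T_i$ and $w_i$ held fixed) is the same implicit convention the paper uses, so nothing further is needed.
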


\section{Covariate balancing using the IPM} \label{sec4}
In this section, we propose two methods for covariate balancing using the IPM (CBIPM).
The basic idea of the CBIPM is to estimate $\bm{w}$ by
\begin{align}
    \widehat{\bm{w}} = \underset{\bm{w} \in \mathcal{W}}{\operatorname{argmin}} \  d_{\mathcal{M}}(\mathbbm{P}_{0,n}^{\bm{w}}, \mathbbm{P}_{1,n} ), \label{solve_w}
\end{align}
where $\mathcal{W} \subseteq \mathcal{W}^{+}$ is the pre-specified set of weight vectors
and $\mathcal{M}$ is the set of discriminators.
We consider the two CBIPM - parametric CBIPM and nonparametric CBIPM which differ in
the choice of $\mathcal{W}$ and $\mathcal{M}.$

\subsection{Parametric CBIPM} \label{sec4_1}

If we have information that the propensity score belongs to some specified parametric family, we can use parametric space for $\mathcal{W}$.
Assume 
$$\operatorname{logit}(\pi(\cdot)) = c_0 + f( \ \cdot \ ; \bm{\theta}_0)$$
holds for unknown $c_0 \in \mathbb{R}$ and $\bm{\theta}_0 \in \Theta,$ where
$\Theta$ is a compact set of  $\mathbb{R}^k$ for  $k \in \mathbb{N}$ and
$f(\ \cdot \ ; \bm{\theta})$ is a function parameterized by $\bm{\theta}  \in \Theta.$ 
For the identifiability of the parameters, we assume
$f(\bm{0} ; \bm{\theta})=0$ for every $\bm{\theta} \in \Theta$.  
In this case, we consider
\begin{align*}
     \mathcal{W}^{P}(f) := \Big\{\bm{w}_{f}(\bm{\theta} \ ; \mathcal{D}^{(n)}): \bm{\theta} \in \Theta \Big\},
\end{align*}
where $\bm{w}_{f}(\ \cdot \ ; \mathcal{D}^{(n)}) : \Theta \to \mathcal{W}^{+}$ is an n-dimensional vector function defined as
\begin{align*}
     \bm{w}_{f}(\bm{\theta} ; \mathcal{D}^{(n)})_{i} := \frac{\mathbb{I}(T_i = 0) \exp(f(\bm{X}_i ; \bm{\theta}))}{ \sum_{i : T_i = 0} \exp({f(\bm{X}_i ; \bm{\theta}))}}, && i \in [n].
\end{align*}
Note that $\bm{w}_f(\bm{\theta}_0 ; \mathcal{D}^{(n)})$ is equivalent to the SIPW weights with the true propensity score. In other words, $\mathcal{W}(f)$ includes the ideal weight.
Finally, the parametric CBIPM method (P-CBIPM) solves
\begin{align}
    \widehat{\bm{w}} = \underset{\bm{w} \in \mathcal{W}^{P}(f)}{\operatorname{argmin}} \  d_{\mathcal{M}}(\mathbbm{P}_{0,n}^{\bm{w}}, \mathbbm{P}_{1,n} ), \label{solve_P_CBIPM}
\end{align}
and estimates the ATT using (\ref{ATT_estimator}).

The consistency of the ATT estimator of the parametric CBIPM is proved 
in Theorem \ref{theorem1}
under the following
very mild regularity conditions when the parametric model is correctly specified.

\begin{assumpV} \label{assum_fconti}
For every $\bm{x} \in \mathcal{X}$, $\bm{\theta} \mapsto f(\bm{x} ; \bm{\theta})$ is continuous. 
\end{assumpV}
 \begin{assumpV} 
 \label{assum_fiden}
 $f( \ \cdot \ ;\bm{\theta}_1) \equiv f(\ \cdot \ ; \bm{\theta}_2)$ if and only if
 $\bm{\theta}_1 = \bm{\theta}_2$
\end{assumpV}

\begin{assumpM} \label{assum_mfinite}
For any $\epsilon > 0$, $\mathcal{N}_{[ \,]}(\mathcal{M}, || \cdot ||_{1,\mathbb{P}}, \epsilon) < \infty$.
\end{assumpM}
\begin{assumpM} \label{assum_mzero}
$d_{\mathcal{M}}(\mathbb{P}_1, \mathbb{P}_2 ) = 0$ if and only if $\mathbb{P}_1 \equiv \mathbb{P}_2$.
\end{assumpM}

Assumptions \ref{assum_fconti} and \ref{assum_fiden} are 
about $f( \ \cdot \ ;\bm{\theta}),$ while  
Assumptions \ref{assum_mfinite} and \ref{assum_mzero} are about $\mathcal{M}.$
 Assumption \ref{assum_fconti} is made for technical purposes and
Assumption \ref{assum_fiden} is needed for the identifiability, both of which
are minimal.
While Assumption \ref{assum_mfinite} is a standard assumption for consistency,
Assumption \ref{assum_mzero} implies that the set of discriminators can be quite small.
For example, $\mathcal{M}=\{\exp(\bm{\theta}^\top \bm{x}): \bm{\theta} \in [-\tau,\tau]^{d}\}$
for some $\tau>0$ satisfies \ref{assum_mzero} because of the uniqueness of the moment generating function. 

\begin{theorem} \label{theorem1}
Assume there exist unknown $c_0 \in \mathbb{R}$ and $\bm{\theta}_0 \in \Theta$ such that
$$ \operatorname{logit}(\pi(\cdot)) = c_0 + f_{\bm{\theta}_0} (\cdot).$$
If Assumptions \ref{assum_fconti}, \ref{assum_fiden}, \ref{assum_mfinite} and \ref{assum_mzero} hold, then for $\widehat{\bm{w}}$ defined by (\ref{solve_P_CBIPM}),
\begin{align*}
    \widehat{\operatorname{ATT}}^{\widehat{\bm{w}}} \to \operatorname{ATT}.
\end{align*}
in probability.
\end{theorem}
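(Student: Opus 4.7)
The plan is to use the decomposition (\ref{decom}) so that $\widehat{\operatorname{ATT}}^{\widehat{\bm{w}}}-\operatorname{ATT}=\operatorname{err}_{\text{bal}}^{\widehat{\bm{w}}}+\operatorname{err}_{\text{obs}}^{\widehat{\bm{w}}}+(\operatorname{SATT}-\operatorname{ATT})$, and to dispatch the last two terms first. The gap $\operatorname{SATT}-\operatorname{ATT}$ is $O_p(n^{-1/2})$ by the central limit theorem. The observation error $\operatorname{err}_{\text{obs}}^{\widehat{\bm{w}}}\to 0$ in probability since strict overlap together with continuity of $f$ on the compact $\mathcal{X}\times\Theta$ forces $\exp(f(\bm{X}_i;\bm{\theta}))$ to be uniformly bounded above and below, so $\widehat{w}_i=O(1/n_0)$ uniformly and $\sum_i \widehat{w}_i^{\,2}\to 0$. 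The real work is therefore the balancing error, and my strategy is to prove the stronger statement $\widehat{\bm{\theta}}_n\xrightarrow{p}\bm{\theta}_0$ where $\widehat{\bm{w}}=\bm{w}_f(\widehat{\bm{\theta}}_n;\mathcal{D}^{(n)})$, and then deduce convergence of the weighted average of $m_0$.

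For the M-estimator step, introduce the population criterion $Q(\bm{\theta}):=d_{\mathcal{M}}(\mathbbm{P}_0^{\bm{\theta}},\mathbbm{P}_1)$, where $\mathbbm{P}_0^{\bm{\theta}}$ has density $\exp(f(\bm{x};\bm{\theta}))/\mathbb{E}[\exp(f(\bm{X};\bm{\theta}))\mid T=0]$ with respect to the conditional distribution $\mathbbm{P}_0$ of $\bm{X}\mid T=0$, and $\mathbbm{P}_1$ is the conditional distribution of $\bm{X}\mid T=1$. Under the logit representation of $\pi$, $p_1/p_0\propto\pi/(1-\pi)=\exp(c_0+f(\cdot;\bm{\theta}_0))$, so $\mathbbm{P}_0^{\bm{\theta}_0}=\mathbbm{P}_1$ and $Q(\bm{\theta}_0)=0$. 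Conversely, $Q(\bm{\theta})=0$ forces $\mathbbm{P}_0^{\bm{\theta}}=\mathbbm{P}_1=\mathbbm{P}_0^{\bm{\theta}_0}$ by Assumption \ref{assum_mzero}, hence $f(\cdot;\bm{\theta})-f(\cdot;\bm{\theta}_0)$ is $\mathbbm{P}_0$-a.s. constant; the normalization $f(\bm{0};\cdot)\equiv 0$ kills the constant and Assumption \ref{assum_fiden} gives $\bm{\theta}=\bm{\theta}_0$. For the sample criterion $Q_n(\bm{\theta}):=d_{\mathcal{M}}(\mathbbm{P}_{0,n}^{\bm{w}_f(\bm{\theta};\mathcal{D}^{(n)})},\mathbbm{P}_{1,n})$, I would prove $\sup_{\bm{\theta}\in\Theta}|Q_n(\bm{\theta})-Q(\bm{\theta})|\xrightarrow{p}0$ by combining the Glivenko--Cantelli property of $\mathcal{M}$ (Assumption \ref{assum_mfinite}) with an equicontinuity argument on the compact $\Theta$ built from Assumption \ref{assum_fconti}. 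Standard argmin consistency then yields $\widehat{\bm{\theta}}_n\xrightarrow{p}\bm{\theta}_0$.

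Given $\widehat{\bm{\theta}}_n\xrightarrow{p}\bm{\theta}_0$, a uniform LLN over $\bm{\theta}\in\Theta$ applied to the bounded integrand $m_0(\cdot)\exp(f(\cdot;\bm{\theta}))$ yields
\begin{align*}
\sum_{i:T_i=0}\widehat{w}_i\, m_0(\bm{X}_i)\xrightarrow{p}\frac{\mathbb{E}[\exp(f(\bm{X};\bm{\theta}_0))m_0(\bm{X})\mid T=0]}{\mathbb{E}[\exp(f(\bm{X};\bm{\theta}_0))\mid T=0]}=\mathbb{E}[m_0(\bm{X})\mid T=1],
\end{align*}
where the last equality is the identity (\ref{eq_unique}) specialized under the logit assumption. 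Combined with $(1/n_1)\sum_{i:T_i=1}m_0(\bm{X}_i)\xrightarrow{p}\mathbb{E}[m_0(\bm{X})\mid T=1]$ by the LLN, we conclude $\operatorname{err}_{\text{bal}}^{\widehat{\bm{w}}}\xrightarrow{p}0$. I expect the main obstacle to be the uniform convergence $\sup_{\bm{\theta}}|Q_n-Q|\xrightarrow{p}0$, since the supremum over $\mathcal{M}$ couples with the supremum over $\Theta$ through the normalizing denominator; the natural fix is to establish a joint bracketing bound on the product class $\{\exp(f(\cdot;\bm{\theta}))m(\cdot):\bm{\theta}\in\Theta,\,m\in\mathcal{M}\}$ using the bounded product structure together with compactness of $\Theta$.
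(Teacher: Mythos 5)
Your proposal follows essentially the same route as the paper: identify the population criterion $Q(\bm{\theta})=d_{\mathcal{M}}(\mathbbm{P}_0^{\bm{\theta}},\mathbbm{P}_1)$, show $Q(\bm{\theta})=0$ iff $\bm{\theta}=\bm{\theta}_0$ using the logit representation, the normalization $f(\bm{0};\cdot)=0$, and Assumptions A.2 and B.2, establish uniform convergence of the empirical IPM criterion via bracketing (Lemma 3.1 of van der Vaart) together with compactness of $\Theta$, and conclude $\widehat{\bm{\theta}}_n\xrightarrow{p}\bm{\theta}_0$ by argmin consistency — which is exactly the paper's Lemma A.1 plus the displayed chain of probability bounds. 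Your treatment of the final step (passing through the error decomposition and handling the balancing, observation, and SATT terms separately) is if anything more explicit than the paper's terse concluding display, and is correct.
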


\paragraph{Comparison with CBPS}
Note that the weight parameterizations of the P-CBIPM and CBPS are identical since $f(\ \cdot \ ; \bm{\theta})$ in the P-CBIPM  plays exactly the same role as $\operatorname{logit}(\pi_{\hat{\bm{\theta}}}(\cdot))$ of CBPS.
However, CBPS focuses matching the first moments, while the quantities to be balanced by the P-CBIPM
depend on $\mathcal{M}$. 
If we choose $\mathcal{M}$ as the set of linear functions, then the P-CBIPM is identical to CBPS. 
That is, CBPS can be considered as a special case of the P-CBIPM.
More details about equivalence are provided in Appendix \ref{CBPS_equ}.

\subsection{Nonparametric CBIPM} \label{sec4_2}
For the nonparametric CBIPM (N-CBIPM), we consider
\begin{align*}
     \mathcal{W}^N (B) := \Big\{\bm{w} \in \mathcal{W}^{+} : \max_{i \in [n]} w_i \leq \frac{B}{n_0} \Big\},
\end{align*}
as $\mathcal{W},$
where  $B > 0$ is a sufficiently large number such that $1/\eta^2 \leq B.$
Then, we solve
\begin{align}
    \widehat{\bm{w}} = \underset{\bm{w} \in \mathcal{W}^N (B)}{\operatorname{argmin}} \  d_{\mathcal{M}}(\mathbbm{P}_{0,n}^{\bm{w}}, \mathbbm{P}_{1,n} ), \label{solve_N_CBIPM}
\end{align}
and estimate the ATT by (\ref{ATT_estimator}).

As expected, we need a stronger assumption about $\mathcal{M}$ than Assumption \ref{assum_mzero} for the consistency of the ATT estimator, which is stated as follows.
\begin{definition}
For two classes $\mathcal{M}_1$ and $\mathcal{M}_2$ of discriminators, $d_{\mathcal{M}_1}(\cdot, \cdot) \lesssim d_{\mathcal{M}_2}(\cdot, \cdot)$ if and only if there exists an increasing function $\xi : [0,\infty) \to [0,\infty)$ such that $\lim_{r \downarrow 0} \xi(r) = 0$ and $d_{\mathcal{M}_1}(\mathbbm{P}_1, \mathbbm{P}_2) \leq \xi(d_{\mathcal{M}_2}(\mathbbm{P}_1, \mathbbm{P}_2))$ for any two probability measures $\mathbbm{P}_1$ and $\mathbbm{P}_2$.
\end{definition}

\begin{assumpM} ($\mathcal{M}$ dominates $\mathcal{M}_0$) \label{assum_msub}
    There exists a class $\mathcal{M}_0$ of outcome regression models including 
    $m_0(\cdot)$ such that
    $d_{\mathcal{M}_0}(\cdot, \cdot) \lesssim d_{\mathcal{M}}(\cdot, \cdot)$.
\end{assumpM}

Assumption \ref{assum_msub} provides us a guide for choosing 
$\mathcal{M}.$ Simply, we can choose $\mathcal{M} \supset \mathcal{M}_0$
because $d_{\mathcal{M}_0}(\cdot, \cdot) \lesssim d_{\mathcal{M}}(\cdot, \cdot)$
always holds. There exists, however, an interesting example of $\mathcal{M}$
such that $\mathcal{M}$ is fairly small (e.g. $\mathcal{M} \subset \mathcal{M}_0$)
but dominates a fairly large $\mathcal{M}_0$.
The sigmoid IPM \cite{pmlr-v162-kim22b} is such an example. See Section \ref{M_choice} for details.

\begin{theorem} \label{theorem2}
Consider $\mathcal{M}$ which satisfies Assumptions \ref{assum_mfinite} and \ref{assum_msub}.
Then for $\widehat{\bm{w}}$ defined by (\ref{solve_N_CBIPM}),
\begin{align*}
    \widehat{\operatorname{ATT}}^{\widehat{\bm{w}}} \to \operatorname{ATT}.
\end{align*}
in probability.
\end{theorem}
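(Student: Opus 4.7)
The plan is to combine the error decomposition (\ref{decom}) with an oracle weight argument, using the dominance condition of Assumption \ref{assum_msub} to translate control of $d_{\mathcal{M}}$ into control of the balancing error. The three terms on the right of (\ref{decom}) will be handled separately: $\operatorname{SATT}-\operatorname{ATT}=O_p(n^{-1/2})$ follows from the paper's earlier remark, and $\operatorname{err}_{\text{obs}}^{\widehat{\bm{w}}}\to 0$ in $L_2$ because for $\widehat{\bm{w}}\in\mathcal{W}^N(B)$ one has $\sum_i\widehat{w}_i^2\le(B/n_0)\sum_i\widehat{w}_i=B/n_0\to0$, which together with the bounded conditional variance of $Y(t)$ assumed in the preliminaries kills the observation error.

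The heart of the proof is the balancing error. Since $m_0\in\mathcal{M}_0$, the definition of $d_{\mathcal{M}_0}$ gives
\begin{align*}
|\operatorname{err}_{\text{bal}}^{\widehat{\bm{w}}}|
\le d_{\mathcal{M}_0}(\mathbbm{P}_{0,n}^{\widehat{\bm{w}}},\mathbbm{P}_{1,n})
\le \xi\bigl(d_{\mathcal{M}}(\mathbbm{P}_{0,n}^{\widehat{\bm{w}}},\mathbbm{P}_{1,n})\bigr),
\end{align*}
where the second inequality uses Assumption \ref{assum_msub}. So it suffices to show $d_{\mathcal{M}}(\mathbbm{P}_{0,n}^{\widehat{\bm{w}}},\mathbbm{P}_{1,n})\to 0$ in probability. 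Because $\widehat{\bm{w}}$ is the minimizer over $\mathcal{W}^N(B)$, it is enough to exhibit a single feasible oracle $\bm{w}^\ast\in\mathcal{W}^N(B)$ with $d_{\mathcal{M}}(\mathbbm{P}_{0,n}^{\bm{w}^\ast},\mathbbm{P}_{1,n})\to 0$.

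I take the normalized true-propensity weights
\begin{align*}
w_i^\ast=\frac{\mathbb{I}(T_i=0)\,\pi(\bm{X}_i)/(1-\pi(\bm{X}_i))}{\sum_{j:T_j=0}\pi(\bm{X}_j)/(1-\pi(\bm{X}_j))}.
\end{align*}
The strict overlap $\eta\le\pi\le1-\eta$ yields $w_i^\ast\le (1-\eta)^2/(\eta^2 n_0)\le B/n_0$, so $\bm{w}^\ast\in\mathcal{W}^N(B)$. For any $m\in\mathcal{M}$,
\begin{align*}
\sum_{i:T_i=0}w_i^\ast m(\bm{X}_i)=\frac{\frac{1}{n}\sum_{i:T_i=0}\tfrac{\pi(\bm{X}_i)}{1-\pi(\bm{X}_i)}m(\bm{X}_i)}{\frac{1}{n}\sum_{i:T_i=0}\tfrac{\pi(\bm{X}_i)}{1-\pi(\bm{X}_i)}},
\end{align*}
and under the ignorability assumption the numerator and denominator have pointwise limits $\mathbb{E}[\pi(\bm{X})m(\bm{X})]$ and $\mathbb{E}[\pi(\bm{X})]$, so the ratio converges to $\mathbb{E}[m(\bm{X})\mid T=1]$, which is also the limit of $n_1^{-1}\sum_{i:T_i=1}m(\bm{X}_i)$.

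The main technical obstacle is upgrading this pointwise convergence to uniform convergence over $m\in\mathcal{M}$. This is where Assumption \ref{assum_mfinite} enters: finite $L_1(\mathbbm{P})$-bracketing numbers make $\mathcal{M}$ a Glivenko--Cantelli class, and since the classes $\{(1-t)\tfrac{\pi(\bm{x})}{1-\pi(\bm{x})}m(\bm{x}):m\in\mathcal{M}\}$ and $\{t\,m(\bm{x}):m\in\mathcal{M}\}$ inherit finite bracketing numbers (the multipliers are bounded under strict overlap), a standard uniform law of large numbers gives
\begin{align*}
\sup_{m\in\mathcal{M}}\Bigl|\sum_{i:T_i=0}w_i^\ast m(\bm{X}_i)-\tfrac{1}{n_1}\sum_{i:T_i=1}m(\bm{X}_i)\Bigr|\xrightarrow{p}0,
\end{align*}
i.e.\ $d_{\mathcal{M}}(\mathbbm{P}_{0,n}^{\bm{w}^\ast},\mathbbm{P}_{1,n})\xrightarrow{p}0$. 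Chaining this through the oracle inequality and the continuity of $\xi$ at $0$ gives $|\operatorname{err}_{\text{bal}}^{\widehat{\bm{w}}}|\xrightarrow{p}0$, and combining with the three-term decomposition proves the theorem. The hard part is really the uniform-convergence step; everything else is bookkeeping around the decomposition.
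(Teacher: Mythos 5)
Your proposal is correct and follows essentially the same route as the paper's proof: the same oracle weights built from the true propensity odds $\pi(\bm{X}_i)/(1-\pi(\bm{X}_i))$ (shown feasible in $\mathcal{W}^N(B)$ via strict overlap), the same bracketing-number/Glivenko--Cantelli argument to get uniform convergence of the weighted and treated empirical means to $\mathbb{E}(m(\bm{X})\mid T=1)$, the same chaining of the balancing error through $d_{\mathcal{M}_0}\le\xi(d_{\mathcal{M}})$, and the same variance bound $\sum_i\widehat{w}_i^2\le B/n_0$ for the observation error. No gaps to report.
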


\subsection{Choice of the set of discriminators in the N-CBIPM} \label{M_choice}
\paragraph{Wassesterin distance}
Let $\mathcal{M}_{L_1}$ be the set of all bounded $1$-Lipschitz functions. i.e.,
$$\mathcal{M}_{L_1} := \{m(\cdot) : \mathcal{X} \to \mathbb{R}, ||m||_L \leq 1, ||m||_\infty \leq B_m \}.$$
Note that $d_{\mathcal{M}_{L_1}}(\cdot, \cdot)$ is the Wassesterin distance, which is widely used in machine learning society.
In practice, we 
approximate $1$-Lipschitz functions by deep neural networks (DNN) as it is done by \citet{arjovsky2017wasserstein} and \citet{gulrajani2017improved}. 
Details of the N-CBIPM method with the Wassesterin distance are given in Appendix \ref{app_C} for
reader's sake.


\paragraph{Maximum Mean Discrepancy}
Let $k_{\gamma}: \mathbb{R}^d \times \mathbb{R}^d \to \mathbb{R}$ be radial basis function (RBF) kernel with the width $\gamma$
and $(\mathcal{H}_{\gamma}(\mathcal{X}), ||\cdot||_{\mathcal{H}_{\gamma}(\mathcal{X})})$ be RKHS corresponding to $k_{\gamma}.$
For 
$$\mathcal{M}_{k_\gamma,B}=\{f\in \mathcal{H}_{\gamma}(\mathcal{X}): ||f||_{\mathcal{H}_{\gamma}(\mathcal{X})} \le B\},$$ 
Maximum Mean Discrepancy (MMD) is defined by $d_{\mathcal{M}_{k_\gamma,B}}(\cdot, \cdot)$  \cite{gretton2012kernel}.
The advantage of MMD is that we can calculate $d_{\mathcal{M}_{k_\gamma,B}}(\mathbbm{P}_{0,n}^{\bm{w}}, \mathbbm{P}_{1,n} )$ as a closed form and thus the corresponding optimization algorithm becomes 
lighter and simpler.

\paragraph{Sigmoid IPM} 

\citet{pmlr-v162-kim22b} studies the set of parametric discriminators, so-called
the sigmoid-IPM (SIPM), which is defined as
\begin{align*}
    \mathcal{M}_{sig} := \{ \sigma(\bm{\rho}^\top \bm{x}+\mu): \bm{\rho}\in \mathbb{R}^d,
\mu\in \mathbb{R}\},
\end{align*}
where $\sigma(\cdot)=(1+\exp(\cdot))^{-1}$ is the sigmoid function.
An interesting property of $\mathcal{M}_{sig}$ is that it dominates fairly large classes of discriminators even it is parametric. For example, $\mathcal{M}_{sig}$ dominates $\mathcal{M}_{\mathcal{C}^{\infty},K}$
with $\xi(r)= r$ and $\mathcal{F}_{a,C}$ with $\xi(r) = r^{1/3},$
where
\begin{align*}
	    \mathcal{M}_{\mathcal{C}^{\infty},K} 
	    := \big\{&f : \mathcal{X} \to \mathbb{R} \ : \ \forall \mathbf{r} \in \mathbb{N}_0^d,\\
	    & ||D^{\mathbf{r}} f||_{\infty} \leq \sqrt{{\mathbf{r}}!} K^{|\bm{r}|_1}
	    \big\},
\end{align*}
for positive constant $K$ and
$$\mathcal{F}_{a,C}=\left\{f: 
\int |\tilde{f}(\bm{z})|d\bm{z}\le a,
\int \|\bm{z}\|_1 |\tilde{f}(\bm{z})|d\bm{z}\le C\right\},$$
for positive constants $a$ and $C$, where $\tilde{f}(\bm{z})=\int e^{-i \bm{z}^\top \bm{x}} f(\bm{x}) d\bm{x}.$
Note that the function class $\mathcal{M}_{\mathcal{C}^{\infty},B}$ is large enough to include certain function spaces popularly used in modern machine learning algorithms, such as RKHS with RBF kernel. Thus, we expect that MMD behaves similarly to the SIPM.
On the other hand, $\mathcal{F}_{a,C}$ is the set of functions approximated by
single-layer neural networks \cite{yukich1995sup}, which is quite large.
For examples of function classes beloning to $\mathcal{F}_{a,C}$, see \citet{barron1993universal}.

\paragraph{Remark} 
These three $\mathcal{M}_{L_1}$, $\mathcal{M}_{k_\gamma,B}$ and $\mathcal{M}_{sig}$ satisfy Assumption \ref{assum_mfinite}.
See \citet{gottlieb2016adaptive}, \citet{sancetta2020estimation} and \citet{gao2007entropy} for the proofs.


\subsection{Computation algorithm}
We solve the min-max optimization problem (\ref{solve_w}) via the adversarial training algorithm.
Suppose that $\mathcal{W}$ and $\mathcal{M}$ are parameterized by $\bm{\theta}\in \Theta$
and $\bm{\psi}\in \Psi,$ respectively. 
For example, we can let 
$\bm{\theta}=(\theta_1,\ldots,\theta_n)^\top$ with  $w(\bm{\theta})_i \propto \exp(\theta_i) \mathbb{I}(T_i=0)$ for the N-CBIPM.
Then, we update the discriminator $m(\cdot ; \bm{\psi})$ (used in the IPM) 
and $\bm{w}(\bm{\theta})$ iteratively using the gradient ascent and descent algorithms. 
A pseudo-code for the CBIPM methods is described in Algorithm \ref{algATT}.
\begin{algorithm} 
	\caption{Proposed algorithm for the ATT} \label{algATT}
	\begin{algorithmic}[1]
        \FOR{$t = 1, \cdots, T$}
        \FOR{$t^{\prime} = 1, \cdots, T_{\text{adv}}$}
        \STATE Calculate $\mathcal{L}_{\textup{adv}}(\bm{\theta}, \bm{\psi})$ \label{alg_L1}
		\STATE $\bm{\psi} \gets \bm{\psi} + \textup{lr}_{\textup{adv}} \cdot\nabla_{\bm{\psi}} \mathcal{L}_{\textup{adv}}(\bm{\theta}, \bm{\psi})$ 
        \ENDFOR
        \STATE Calculate $\mathcal{L}(\bm{\theta}, \bm{\psi})$
		\STATE $\bm{\theta} \gets \bm{\theta} - \textup{lr} \cdot\nabla_{\bm{\theta}} \mathcal{L}(\bm{\theta}, \bm{\psi})$ \label{alg_L2}
        \ENDFOR
        \STATE \textbf{Return} $\sum_{i : T_i = 1} Y_i/n_1 - \sum_{i : T_i = 0} w(\bm{\theta})_{i} Y_i$
	\end{algorithmic}
\end{algorithm}

\begin{table*}[t!] 
\renewcommand{\arraystretch}{1.3}
\caption{\textbf{Kang-Schafer example with the linear/nonlinear propensity scores.} 
We generate 1000 simulations and report the bias and RMSE as the performance measures.
For each pair of dataset and performance measure (i.e. for each row), the two best results are marked by bold letters.} \label{table1}
\centering
\scalebox{1.0}{\small
\begin{tabular}{|c|c|c|cccc|ccc|ccc|}
\hline
\multirow{2}{*}{$\operatorname{logit}(\pi(\cdot))$} & \multirow{2}{*}{Measure} &  \multirow{2}{*}{n} & \multicolumn{4}{c|}{Existing methods} & \multicolumn{3}{c|}{P-CBIPM} & \multicolumn{3}{c|}{N-CBIPM} \\
   \cline{4-13}
& &   & GLM & Boost & CBPS & \multicolumn{1}{c|}{EB} & Wass & MMD & \multicolumn{1}{c|}{SIPM} & Wass & MMD & SIPM \\
    \hline \hline 
\multirow{4}{*}{Linear} & \multirow{2}{*}{Bias}  & 200 & 0.617 & 1.448 & -0.843 & -0.877 & -0.856 & -0.607 & \textbf{-0.423} & -1.040 & \textbf{0.346} & -0.576 \\
\cline{3-13}
& & 1000 & \textbf{-0.059} & 0.932 & -0.333 & -0.318 & -0.319 & -0.191 & -0.169 & -1.059  & 0.415 & \textbf{0.042}\\
\cline{2-13}
& \multirow{2}{*}{RMSE} & 200 & 4.026 & 3.935 & 3.018 & 3.020 & 3.029 & 2.932 & 2.842 & 2.876 & \textbf{2.615} & \textbf{2.715} \\
\cline{3-13}
& & 1000 & 2.838 & 1.586 & 1.433 & 1.433 & 1.457 & 1.395 & 1.299 & 1.563 & \textbf{1.097} & \textbf{1.147} \\
\hline \hline
\multirow{4}{*}{Nonlinear} & \multirow{2}{*}{Bias}  & 200 & -7.233  & -8.375 & -4.745 & -4.806 & -5.015 & -4.949 & -5.086 & -3.945 & \textbf{-3.162} & \textbf{-3.569} \\
\cline{3-13}
& & 1000 & -7.295 & -6.489 & -4.496 & -4.494  & -4.698 & -4.548 & -4.690 & -3.608  & \textbf{-2.844} & \textbf{-2.733}\\
\cline{2-13}
& \multirow{2}{*}{RMSE} & 200 & 8.275 & 9.204 & 5.354 & 5.395 & 5.681 & 5.562 & 5.697 & 4.632 & \textbf{4.356} & \textbf{4.491} \\
\cline{3-13}
& & 1000 & 7.514 & 6.665 & 4.620 & 4.618 & 4.887 & 4.700 & 4.826 & 3.761 & \textbf{3.020} & \textbf{2.973}\\
\hline
\end{tabular}
}
\end{table*}

The two loss functions $\mathcal{L}_{\textup{adv}}$ and $\mathcal{L}$ are modifications of
\begin{align*}
\ell_n(\bm{\theta}, \bm{\psi}) := & \left| \sum_{i : T_i = 1} \frac{m (\bm{X}_i ; \bm{\psi})}{n_1} - \sum_{i : T_i = 0} w(\bm{\theta})_i \, m (\bm{X}_i ; \bm{\psi}) \right|^2 ,
\end{align*}
where the modifications depend on the choice $\mathcal{M}.$
For example, there is a closed-form solution for updating $\bm{\psi}$ in MMD and thus only $\bm{\theta}$
is updated. For the SIPM, we propose to use an ensemble technique to avoid the phenomenon of mode collapse \cite{salimans2016improved, che2019mode}. The detailed explanations of the algorithms
for each IPM are given in Appendix \ref{app_C}.
We use the square of the IPM to make the loss function smooth.



\section{Experiments}

We illustrate the superiority of the CBIPM to existing baselines by analyzing simulated and real datasets. For the baselines, we consider the SIPW with the linear logistic regression (GLM), the SIPW with the boosting (Boost) used by \citet{lee2010improving},
the SIPW with the CBPS with the linear logistic regression  
\cite{imai2014covariate} and the  EB of \citet{hainmueller2012entropy}.
For CBPS and EB, we match the first moments of $\bm{X}$ (i.e. $\bm{\phi}(\bm{x})=\bm{x})$.
For the CBIPM, we use the three discriminators considered in Section \ref{M_choice}, and
the linear logistic regression is used for the P-CBIPM.

In Section \ref{sec5.1} and \ref{sec5.2}, we present the experimental results using simulation and real datasets respectively. Experimental results using other simulation settings are shown in Appendix \ref{App_another}. See Appendix \ref{App_semisyn} for the experiment results using the semi-synthetic datasets.
The code is available at \href{https://github.com/ggong369/CBIPM}{https://github.com/ggong369/CBIPM}.

\subsection{Simulation} \label{sec5.1}

We generate simulated datasets using the Kang-Schafer example \cite{kang2007demystifying}.
For each unit $i=1,\dots,n$, latent variables $\bm{Z}_i = (Z_{i1}, Z_{i2},Z_{i3},Z_{i4})^{\top}$ are independently generated from $N(0,I_4)$. 
Instead of $\bm{Z}_i$, only its nonlinear transformations $\bm{X}_i = (X_{i1}, X_{i2},X_{i3},X_{i4})^{\top}$ are observed, which are given as
\begin{align*}
\begin{split}
X_{i1} &= \exp(Z_{i1}/2), \\
X_{i2} &= Z_{i1} / (1+\exp(Z_{i1})) + 10, \\
X_{i3} &= (Z_{i1}Z_{i3}/25 + 0.6)^3, \\
X_{i4} &= (Z_{i2}+Z_{i4}+20)^2.
\end{split}
\end{align*}

Outcomes are generated from
\begin{align*}
Y_i = 210 + 27.4 Z_{i1} + 13.7 Z_{i2} +13.7 Z_{i3} + 13.7 Z_{i4} + \epsilon_i,
\end{align*}
where $\epsilon_i \sim N(0,1)$.
Note that the outcome regression models are nonlinear in $\bm{X}$
and that $\operatorname{ATT}=0$ since $m_0(\cdot) = m_1(\cdot).$  

For the true propensity score, we consider the linear and the nonlinear functions of $\bm{X}$.
Specifically, for the linear propensity score model, we generate the binary treatment indicators $T_i \in \{ 0,1 \}$ from
\begin{align*}
\mathbbm{P}(T=1|\bm{X}_i) = \sigma(X_{i1} - 0.5 X_{i2} -2 X_{i3} -0.01 X_{i4}).
\end{align*}
Also, for the nonlinear propensity score model, we use the model in the original Kang-Schafer example. That is, we generate the binary treatment indicators $T_i \in \{ 0,1 \}$ from 
\begin{align*}
\mathbbm{P}(T=1|\bm{Z}_i) = \sigma(-Z_{i1} + 0.5 Z_{i2} -0.25 Z_{i3} -0.1 Z_{i4}).
\end{align*}

The results based on 1000 simulated datasets are presented in
Table \ref{table1}. 
When $\operatorname{logit}(\pi(\cdot))$ is linear, GLM tends to have small biases, but it is unstable to have
large RMSEs.
In contrast, the parametric weighting methods such as
CBPS, EB, and the P-CBIPM have large biases but much smaller RMSEs compared to those of GLM. 

The results of the N-CBIPMs with the SIPM and MMD are much better than those of the parametric weighting methods,
which is surprising results since the parametric model is well specified. That is, 
the variance dominates the bias in the estimation of the inverse of the propensity score.
The superiority of the N-CBIPM is more prominent when $\operatorname{logit}(\pi(\cdot))$ is nonlinear.
For every $n$, the N-CBIPMs with the SIPM and MMD outperform the other methods with large margins in terms of
both the bias and RMSE.

\subsection{Real dataset} \label{sec5.2}
The Tennessee Student/Teacher Achievement Ratio experiment (STAR) is a 4-year longitudinal class-size study conducted by the State Department of Education to measure the influence of class size on student achievement tests and non-achievement measures \cite{STAR2008}. 
The school-level STAR dataset\footnote{https://doi.org/10.7910/DVN/SIWH9F} was collected in 1998 including the contents such as school demographic variables, school graduation rate, credits required for graduation, advanced course offerings, and so on. 
We analyze the STAR dataset to figure out 
how the covariance balancing is achieved by the weighting methods.


\begin{figure}[t]
\centering
\subfigure{\includegraphics[width=0.95\linewidth]{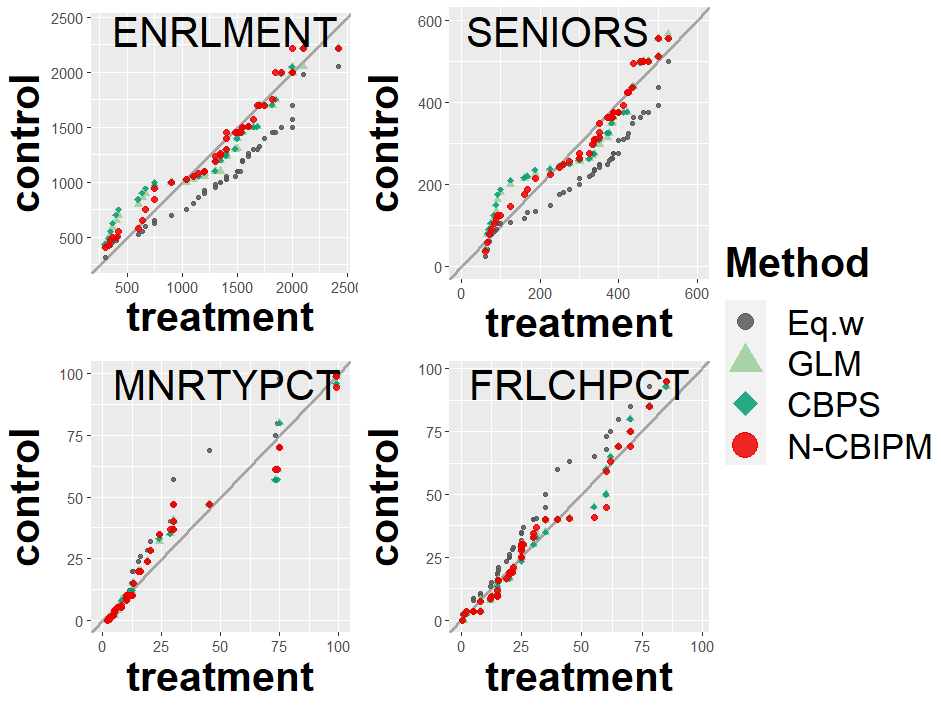}}
\caption{\textbf{Comparison of the QQ plots:} For the four input variables,
we draw the QQ plots of the four weighting methods between the weighted empirical  
distributions of the treated and the control groups.
} 
\vskip -0.1in \label{real_data}
\end{figure}

We use the six variables for $\bm{X}:$ school urbanicity (\textit{SCHLURBN}), student enrollment (\textit{ENRLMENT}), the estimated number of students in senior year (\textit{SENIORS}), percent of students minority (\textit{MNRTYPCT}), percent of students receiving free/reduced lunch (\textit{FRLCHPCT}), and percent of 9th-grade students in 94-95 who did not graduate (\textit{NOGRDPCT}),
and use whether the Math IV course is offered (\textit{MATH4}) or not for the treatment indicator.
To see how well the estimated weights achieve covariate balancing, 
we investigate the QQ plot between the marginal empirical distribution of the treated group
and the marginal weighted empirical distribution of the control group for the four continuous input variables
 (\textit{ENRLMENT, SENIORS, MNRTYPCT, FRLCHPCT}).
Figure \ref{real_data} compares the QQ plots for the four weighting methods including the equal weighting (Eq.w), GLM, CBPS, and the N-CBIPM with MMD (N-CBIPM). 
Note that the points of the QQ plot lines on the 45$^{\circ}$ straight line if and only if 
the two distributions are exactly the same \cite{marden2004positions}.
The four weighting methods improve covariate balancing much
compared to equal weighting, and the N-CBIPM performs better in particular for the \textit{SENIORS}.
See Appendix \ref{App_realdata} for the results of the hypothesis testing for the equality of the two distributions.

\section{Estimation of the ATE}
The CBIPM can be modified easily for estimating the ATE.
Let  $$\mathbb{P}_n(\cdot)=\frac{1}{n}\sum_{i=1}^n \delta_{\bm{X}_i}(\cdot)$$
be the empirical distribution of $\bm{X}.$
Then, we search for the weights of the control group as well as the weights for the treated group 
such that both of the two weighted empirical distributions are similar to  $\mathbb{P}_n.$
See Appendix \ref{app_ATE} for details.

\section{Discussions}

We have seen that the N-CBIPM dominates the other competitors. An important advantage
of the N-CBIPM is that the ATT (and the ATE) estimator is consistent even without
correctly specifying either the propensity score model or the outcome
regression model. Moreover, surprisingly,
the N-CBIPM outperforms the parametric weighting methods even when
the parametric model is correctly specified. For the ATE, it is well known that
the IPW estimator with an estimated propensity score is better than that with
the true estimated propensity score \cite{lunceford2004stratification}. A similar argument could be applied to explain
the superiority of the N-CBIPM.

The estimated weights by the N-CBIPM can be used for estimating the CATE or ITE. 
For example, the weighted empirical risk can be minimized to estimate
the outcome regression models in the T-learner \cite{kunzel2019metalearners}.
We believe that the weighted T-learner has several advantages over the T-learner, which
we leave as a future research topic. Similar modifications would be also possible for the S-, X- and R- learners \cite{kunzel2019metalearners, nie2021quasi}.

The convergence rate of the N-CBIPM estimator is also interesting. 
The choice of $\mathcal{M}$ and corresponding $\xi(\cdot)$
would affect the convergence rate. More studies on this problem are worth pursuing.

\section*{Acknowledgements}
This work was supported by National Research Foundation of Korea(NRF) grant funded by the Korea government (MSIT) (No. 2020R1A2C3A0100355014), 
and Convergence Research Center (CRC) grant funded by the Korea government (MSIT) (No. 2022R1A5A708390811).

\nocite{langley00}

\bibliography{references}
\bibliographystyle{icml2023}

\newpage
\appendix
\onecolumn

\numberwithin{equation}{section}
\section{Proofs for theoretical results}
\renewcommand{\theequation}{A.\arabic{equation}}

\paragraph{Proof for Proposition \ref{proposition}}
\begin{proof}
Since (\ref{errbal_IPM}),
\begin{align*}
    \left|\mathbb{E}\left(\widehat{\operatorname{ATT}}^{\bm{w}} - \operatorname{SATT} \Big| \bm{X}_1 , \dots, \bm{X}_n\right)\right|
    =& \left|\mathbb{E}( \operatorname{err}_{\text{bal}}^{\bm{w}} | \bm{X}_1 , \dots, \bm{X}_n)\right| \\
    \leq& d_{\mathcal{M}}(\mathbbm{P}_{0,n}^{\bm{w}}, \mathbbm{P}_{1,n} )
\end{align*} 
by (\ref{decom}).
\end{proof}

For simplicity, we denote $f_{\bm{\theta}} (\cdot) := f(\cdot ; \bm{\theta})$ for the proof.
\begin{lemma} \label{lemma1}
Let $\mathbbm{P}_0$ and $\mathbbm{P}_1$ be the probability measures of $\bm{X}$ conditioned on $T=0$ and $T=1$, respectively.
For given $\bm{\theta} \in \Theta$, denote
$$d\mathbbm{P}^{\bm{\theta}}_0 (\bm{x}) := 
\frac{\exp(f_{\bm{\theta}}(\bm{x})) d\mathbbm{P}_{0}(\bm{x})}
{\int_{\bm{x} \in \mathcal{X}} \exp(f_{\bm{\theta}}(\bm{x})) d\mathbbm{P}_{0}(\bm{x})}$$ 
as the weighted probability measure for the control group. 
If $\Theta \subset \mathbb{R}^k$ is compact and Assumption \ref{assum_fconti}, \ref{assum_fiden} and \ref{assum_mzero} hold, 
then for any $\epsilon>0$,
\begin{align}
    \underset{|\bm{\theta} - \bm{\theta}_0 | \geq \epsilon}{\inf} d_{\mathcal{M}} (\mathbbm{P}_{0}^{\bm{\theta}}, \mathbbm{P}_{1}) > 0. \label{lemma1eq}
\end{align}    
\end{lemma}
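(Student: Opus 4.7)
\textbf{Proof proposal for Lemma \ref{lemma1}.}

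The plan is to reduce the statement to two ingredients: (i) continuity of the map $\bm{\theta}\mapsto d_{\mathcal{M}}(\mathbbm{P}_{0}^{\bm{\theta}},\mathbbm{P}_{1})$ on $\Theta$, and (ii) the fact that this map vanishes if and only if $\bm{\theta}=\bm{\theta}_{0}$. Once these are in hand, the set $K_{\epsilon}=\{\bm{\theta}\in\Theta:|\bm{\theta}-\bm{\theta}_{0}|\ge\epsilon\}$ is compact (closed subset of compact $\Theta$), so a continuous nonnegative function achieves its infimum on $K_{\epsilon}$ at some $\bm{\theta}^{\star}\ne\bm{\theta}_{0}$, and that infimum is therefore strictly positive by (ii).

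First I would pin down the zero of the map. Applying Bayes' rule to $\mathbbm{P}_{0}$ and $\mathbbm{P}_{1}$ and plugging in $\pi(\bm{x})/(1-\pi(\bm{x}))=\exp(c_{0}+f_{\bm{\theta}_{0}}(\bm{x}))$, a one-line calculation gives $d\mathbbm{P}_{0}^{\bm{\theta}_{0}}\propto \exp(f_{\bm{\theta}_{0}})(1-\pi)\,d\mathbbm{P}\propto \pi\,d\mathbbm{P}\propto d\mathbbm{P}_{1}$, so $\mathbbm{P}_{0}^{\bm{\theta}_{0}}=\mathbbm{P}_{1}$ and hence $d_{\mathcal{M}}(\mathbbm{P}_{0}^{\bm{\theta}_{0}},\mathbbm{P}_{1})=0$. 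Next, for continuity, note that Assumption \ref{assum_fconti} gives pointwise convergence $f_{\bm{\theta}_{n}}(\bm{x})\to f_{\bm{\theta}}(\bm{x})$ whenever $\bm{\theta}_{n}\to\bm{\theta}$; together with the compactness of $\mathcal{X}$ and $\Theta$ this yields a uniform envelope for $\exp(f_{\bm{\theta}}(\bm{x}))$, so the dominated convergence theorem produces convergence in total variation of $\mathbbm{P}_{0}^{\bm{\theta}_{n}}$ to $\mathbbm{P}_{0}^{\bm{\theta}}$. Since every $m\in\mathcal{M}$ is bounded (from Assumption \ref{assum_mfinite}), this translates to uniform convergence of the integrals defining the IPM and hence continuity of $d_{\mathcal{M}}(\mathbbm{P}_{0}^{\cdot},\mathbbm{P}_{1})$.

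The uniqueness step (ii) is where I expect the main obstacle. Suppose $d_{\mathcal{M}}(\mathbbm{P}_{0}^{\bm{\theta}},\mathbbm{P}_{1})=0$. By Assumption \ref{assum_mzero} we conclude $\mathbbm{P}_{0}^{\bm{\theta}}=\mathbbm{P}_{1}=\mathbbm{P}_{0}^{\bm{\theta}_{0}}$; comparing the corresponding Radon--Nikodym derivatives with respect to $\mathbbm{P}_{0}$ yields
\begin{equation*}
\exp\bigl(f_{\bm{\theta}}(\bm{x})-f_{\bm{\theta}_{0}}(\bm{x})\bigr)=\text{const}\qquad \mathbbm{P}_{0}\text{-a.s.,}
\end{equation*}
and the strict overlap condition upgrades this to $\mathbbm{P}$-a.s. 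Consequently $f_{\bm{\theta}}-f_{\bm{\theta}_{0}}$ is $\mathbbm{P}$-a.s.\ equal to some constant $c$; the normalization $f(\bm{0};\bm{\theta})=f(\bm{0};\bm{\theta}_{0})=0$ (together with continuity of $f$ in $\bm{x}$, which is implicit in the setting and can be used to evaluate at $\bm{0}$) forces $c=0$, and then Assumption \ref{assum_fiden} yields $\bm{\theta}=\bm{\theta}_{0}$. The delicate point here is passing from $\mathbbm{P}$-a.s.\ equality of functions to pointwise identity of $f_{\bm{\theta}}$ and $f_{\bm{\theta}_{0}}$, which is the step that genuinely uses the full structure of the parameterization.

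Combining the three steps: the map $\bm{\theta}\mapsto d_{\mathcal{M}}(\mathbbm{P}_{0}^{\bm{\theta}},\mathbbm{P}_{1})$ is continuous on the compact set $K_{\epsilon}$ and strictly positive there, so its infimum, attained at some $\bm{\theta}^{\star}\in K_{\epsilon}$, must satisfy $d_{\mathcal{M}}(\mathbbm{P}_{0}^{\bm{\theta}^{\star}},\mathbbm{P}_{1})>0$, which is \eqref{lemma1eq}.
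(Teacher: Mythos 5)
Your proposal is correct and follows essentially the same route as the paper's proof: establish $\mathbbm{P}_{0}^{\bm{\theta}_{0}}=\mathbbm{P}_{1}$ via the propensity-odds identity, use Assumption \ref{assum_mzero} plus the normalization $f(\bm{0};\bm{\theta})=0$ and Assumption \ref{assum_fiden} to show $\bm{\theta}_{0}$ is the unique zero of $\bm{\theta}\mapsto d_{\mathcal{M}}(\mathbbm{P}_{0}^{\bm{\theta}},\mathbbm{P}_{1})$, and conclude by continuity (Assumption \ref{assum_fconti}) and compactness of $\{\bm{\theta}\in\Theta:|\bm{\theta}-\bm{\theta}_{0}|\ge\epsilon\}$. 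If anything, you are more explicit than the paper about the continuity argument and the a.s.-versus-pointwise subtlety in the identifiability step, which the paper passes over silently.
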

\begin{proof}
We denote $\mathbb{E}_0(\cdot)$ as the expectation operator with respect to $\mathbbm{P}_0$.
For any $\bm{x} \in \mathcal{X}$, since
$\exp(c_0 + f_{\bm{\theta}_0}(\bm{x})) = \pi(\bm{x})(1-\pi(\bm{x}))^{-1}$, we obtain
\begin{align}
    d\mathbbm{P}^{\bm{\theta}_0}_0 (\bm{x}) = & 
\frac{\exp(f_{\bm{\theta}_0}(\bm{x})) d\mathbbm{P}_{0}(\bm{x})}
{\int_{\bm{x} \in \mathcal{X}} \exp(f_{\bm{\theta}_0}(\bm{x})) d\mathbbm{P}_{0}(\bm{x})} \nonumber \\
=& \frac{\exp(c_0 + f_{\bm{\theta}_0}(\bm{x})) d\mathbbm{P}_{0}(\bm{x})}
{\int_{\bm{x} \in \mathcal{X}} \exp(c_0 + f_{\bm{\theta}_0}(\bm{x})) d\mathbbm{P}_{0}(\bm{x})} \nonumber \\
=& \frac{\pi(\bm{x})(1-\pi(\bm{x}))^{-1} d\mathbbm{P}_{0}(\bm{x})}
{\int_{\bm{x} \in \mathcal{X}} \pi(\bm{x})(1-\pi(\bm{x}))^{-1} d\mathbbm{P}_{0}(\bm{x})} \nonumber \\
=& \frac{d\mathbbm{P}_{1}(\bm{x}) \mathbbm{P}(T=1)/\mathbbm{P}(T=0)}
{(\int_{\bm{x} \in \mathcal{X}} d\mathbbm{P}_{1}(\bm{x})) \mathbbm{P}(T=1)/\mathbbm{P}(T=0)} \nonumber \\
=& d\mathbbm{P}_{1}(\bm{x}). \label{eqmeasure}
\end{align}

If $\mathbbm{P}_{0}^{\bm{\theta}} \equiv \mathbbm{P}_{1}$ holds for some $\bm{\theta} \in \Theta$, then
$$\frac{\exp(f_{\bm{\theta}}(\cdot)) }
{\mathbb{E}_0 (\exp(f_{\bm{\theta}}(\bm{X})))} \equiv \frac{\exp(f_{\bm{\theta}_0}(\cdot)) }
{\mathbb{E}_0 (\exp(f_{\bm{\theta}_0}(\bm{X})))},$$ 
and thus we get $f_{\bm{\theta}} \equiv f_{\bm{\theta}_0}$ since $f_{\bm{\theta}}(\bm{0}) = f_{\bm{\theta_0}}(\bm{0})=0$. 
Hence, by Assumption \ref{assum_fiden}, $\bm{\theta} = \bm{\theta}_0$.

To sum up, for any $\bm{\theta} \neq \bm{\theta}_0$, $d_{\mathcal{M}} (\mathbbm{P}_{0}^{\bm{\theta}}, \mathbbm{P}_{1}) > 0$ by Assumption \ref{assum_mzero}.
Since $\{\bm{\theta} \in \Theta : |\bm{\theta} - \bm{\theta}_0 | \geq \epsilon \}$ is compact and $ \bm{\theta} \mapsto d_{\mathcal{M}} (\mathbbm{P}_{0}^{\bm{\theta}}, \mathbbm{P}_{1})$ is continuous by Assumption \ref{assum_fconti}, we obtain (\ref{lemma1eq}).
\end{proof}

\begin{lemma}[Lemma 3.1 of \citet{van2000empirical}] \label{lemma2}
Let $\mathcal{F}$ be a set of functions from $\mathcal{X}$ to $\mathbb{R}$, and $\mathbb{P}_{n} = \frac{1}{n} \sum_{i = 1}^n \delta_{\bm{X}_i}$.
If for any $\epsilon > 0$, $\mathcal{N}_{[ \,]}(\mathcal{F}, L_1 (\mathbbm{P}), \epsilon) < \infty$, then
\begin{align*}
    \sup_{f \in \mathcal{F}} \left| \int_{\bm{x}} f(\bm{x}) (d\mathbbm{P}_n(\bm{x}) -  d\mathbbm{P}(\bm{x})) \right| \to 0.
\end{align*}    
\end{lemma}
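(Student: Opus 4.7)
The plan is to deduce the uniform law of large numbers from an ordinary (pointwise) weak law of large numbers by using the finite bracketing number hypothesis to reduce control over the entire class $\mathcal{F}$ to control over a finite collection of functions. Fix an arbitrary $\epsilon > 0$. By the hypothesis $\mathcal{N}_{[\,]}(\mathcal{F}, L_1(\mathbb{P}), \epsilon) < \infty$, I obtain a finite family of brackets $\{[l_j, u_j] : j = 1, \dots, N_\epsilon\}$ such that every $f \in \mathcal{F}$ satisfies $l_j \leq f \leq u_j$ pointwise for some index $j = j(f)$, and $\int (u_j - l_j)\, d\mathbb{P} \leq \epsilon$ for each $j$. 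By definition of bracketing numbers, the endpoint functions $l_j, u_j$ themselves lie in $L_1(\mathbb{P})$.

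Next, I would carry out the standard sandwich argument. For any $f \in \mathcal{F}$ with bracketing index $j = j(f)$, write
\begin{align*}
\int f\, d(\mathbb{P}_n - \mathbb{P}) &\leq \int u_j\, d\mathbb{P}_n - \int f\, d\mathbb{P} \\
&= \int u_j\, d(\mathbb{P}_n - \mathbb{P}) + \int (u_j - f)\, d\mathbb{P} \\
&\leq \int u_j\, d(\mathbb{P}_n - \mathbb{P}) + \epsilon,
\end{align*}
and symmetrically $\int f\, d(\mathbb{P}_n - \mathbb{P}) \geq \int l_j\, d(\mathbb{P}_n - \mathbb{P}) - \epsilon$. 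Taking the supremum over $f \in \mathcal{F}$ therefore yields
\begin{align*}
\sup_{f \in \mathcal{F}} \left| \int f\, d(\mathbb{P}_n - \mathbb{P}) \right| \leq \max_{1 \leq j \leq N_\epsilon} \left| \int u_j\, d(\mathbb{P}_n - \mathbb{P}) \right| + \max_{1 \leq j \leq N_\epsilon} \left| \int l_j\, d(\mathbb{P}_n - \mathbb{P}) \right| + \epsilon.
\end{align*}

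Since $N_\epsilon$ is finite and each $u_j, l_j$ is an integrable function of the i.i.d.\ sample $\bm{X}_1, \dots, \bm{X}_n$, the weak (or strong) law of large numbers applies to each of the $2N_\epsilon$ empirical averages individually, so each tends to its expectation in probability. The maximum of finitely many such quantities also tends to zero in probability. Consequently,
\begin{align*}
\limsup_{n \to \infty} \sup_{f \in \mathcal{F}} \left| \int f\, d(\mathbb{P}_n - \mathbb{P}) \right| \leq \epsilon
\end{align*}
in probability, and since $\epsilon > 0$ was arbitrary, the conclusion follows by letting $\epsilon \downarrow 0$.

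The only non-routine step is arranging the sandwich inequality correctly so that the integrand $f$ (which is uncontrolled apart from the bracket) is replaced by one of the bracket endpoints on both the $\mathbb{P}_n$-side and the $\mathbb{P}$-side; the residual term is then bounded uniformly by $\int (u_j - l_j)\, d\mathbb{P} \leq \epsilon$. This is the conceptual heart of the argument — everything else is a finite-cardinality law of large numbers plus taking $\epsilon \downarrow 0$.
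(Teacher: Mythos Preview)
Your argument is correct and is the standard bracketing proof of the Glivenko--Cantelli theorem. The paper does not supply its own proof of this lemma; it simply cites it as Lemma~3.1 of \citet{van2000empirical}, and the argument you have written is essentially the one given there.
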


\paragraph{Proof for Theorem \ref{theorem1}}
\begin{proof}
For given $\bm{\theta} \in \Theta$, denote
$$d\mathbbm{P}^{\bm{\theta}}_0 (\bm{x}) := 
\frac{\exp(f_{\bm{\theta}}(\bm{x})) d\mathbbm{P}_{0}(\bm{x})}
{\int_{\bm{x}} \exp(f_{\bm{\theta}}(\bm{x})) d\mathbbm{P}_{0}(\bm{x})}$$ 
as the weighted probability measure for the control group. 
Also, we deonote $\mathbbm{P}_{0,n}^{\bm{\theta}} := \mathbbm{P}_{0,n}^{\bm{w}_{f}(\bm{\theta} \ ; \mathcal{D}^{(n)})}$ and $\bm{w}^{\bm{\theta}} = (w^{\bm{\theta}}_1 , \dots, w^{\bm{\theta}}_n)^{\top} := \bm{w}_{f}(\bm{\theta} \ ; \mathcal{D}^{(n)})$.
For any given $\epsilon>0$, there exists $\delta>0$ by Lemma \ref{lemma1} such that 
\begin{align*}
    \underset{|\bm{\theta} - \bm{\theta}_0 | \geq \epsilon}{\inf} d_{\mathcal{M}} (\mathbbm{P}_{0}^{\bm{\theta}}, \mathbbm{P}_{1}) = \delta > 0.
\end{align*}
Note that solving (\ref{solve_P_CBIPM}) is identical to solving
$$
\widehat{\bm{\theta}}_n :=
\underset{\bm{\theta} \in \Theta}{\operatorname{argmin}} \
d_{\mathcal{M}}(\mathbbm{P}_{0,n}^{\bm{w}_f(\bm{\theta} \ ; \mathcal{D}^{(n)})}, \mathbbm{P}_{1,n} ).
$$
We will show that $\widehat{\bm{\theta}}_n$ converges to $\bm{\theta}_0$.
Since $d_{\mathcal{M}} (\mathbbm{P}_{0,n}^{\widehat{\bm{\theta}}_n}, \mathbbm{P}_{1,n}) \leq d_{\mathcal{M}} (\mathbbm{P}_{0,n}^{\bm{\theta}_0}, \mathbbm{P}_{1,n})$ by the definition of $\widehat{\bm{\theta}}_n$, we get
\begin{align}
    \mathbbm{P}_{\bm{\theta}_0}\left( |\widehat{\bm{\theta}}_n - \bm{\theta}_0 | \geq \epsilon \right)
    \leq& \mathbbm{P}_{\bm{\theta}_0}\left( \inf_{|\bm{\theta} - \bm{\theta}_0| \geq \epsilon} \left( d_{\mathcal{M}} (\mathbbm{P}_{0,n}^{\bm{\theta}}, \mathbbm{P}_{1,n}) - d_{\mathcal{M}} (\mathbbm{P}_{0,n}^{\bm{\theta}_0}, \mathbbm{P}_{1,n})\right)  \leq 0 \right) \nonumber \\
    \leq&  \mathbbm{P}_{\bm{\theta}_0}\left( \inf_{|\bm{\theta} - \bm{\theta}_0| \geq \epsilon} \left( d_{\mathcal{M}} (\mathbbm{P}_{0,n}^{\bm{\theta}}, \mathbbm{P}_{1,n}) - d_{\mathcal{M}} (\mathbbm{P}_{0,n}^{\bm{\theta}_0}, \mathbbm{P}_{1,n})\right)
    - \underset{|\bm{\theta} - \bm{\theta}_0 | \geq \epsilon}{\inf} d_{\mathcal{M}} (\mathbbm{P}_{0}^{\bm{\theta}}, \mathbbm{P}_{1}) < -\frac{\delta}{2} \right) \nonumber \\
    \leq & \mathbbm{P}_{\bm{\theta}_0}\left( \inf_{|\bm{\theta} - \bm{\theta}_0| \geq \epsilon} \left( d_{\mathcal{M}} (\mathbbm{P}_{0,n}^{\bm{\theta}}, \mathbbm{P}_{1,n}) - d_{\mathcal{M}} (\mathbbm{P}_{0,n}^{\bm{\theta}_0}, \mathbbm{P}_{1,n})
    - d_{\mathcal{M}} (\mathbbm{P}_{0}^{\bm{\theta}}, \mathbbm{P}_{1}) \right) < -\frac{\delta}{2} \right) \nonumber \\
    \leq&  \mathbbm{P}_{\bm{\theta}_0}\left( \sup_{\bm{\theta} \in \Theta} \left| d_{\mathcal{M}} (\mathbbm{P}_{0,n}^{\bm{\theta}}, \mathbbm{P}_{1,n}) - d_{\mathcal{M}} (\mathbbm{P}_{0}^{\bm{\theta}}, \mathbbm{P}_{1}) \right|  > \frac{\delta}{4} \right), \label{inprob}
\end{align}
where the last inequality holds because $d_{\mathcal{M}} (\mathbbm{P}_{0}^{\bm{\theta}_0}, \mathbbm{P}_{1}) = 0$ by (\ref{eqmeasure}).
Since
\begin{align*}
    & \left| d_{\mathcal{M}} (\mathbbm{P}_{0,n}^{\bm{\theta}}, \mathbbm{P}_{1,n}) - d_{\mathcal{M}} (\mathbbm{P}_{0}^{\bm{\theta}}, \mathbbm{P}_{1}) \right| \\
    =& \Bigg|\underset{m \in \mathcal{M}}{\sup} \Big| \sum_{i : T_i = 0} w_{i}^{\bm{\theta}} m(\bm{X}_i)  - \frac{1}{n_1} \sum_{i : T_i = 1} m(\bm{X}_i)  \Big|
    - \underset{m \in \mathcal{M}}{\sup} \Big| \int_{\bm{x} }  m(\bm{x}) d \mathbbm{P}_0^{\bm{\theta}}(\bm{x}) - \int_{\bm{x} }  m(\bm{x}) d\mathbbm{P}_1 (\bm{x}) \Big| \Bigg| \\
    \leq&  \underset{m \in \mathcal{M}}{\sup} \Big| \sum_{i : T_i = 0} w_{i}^{\bm{\theta}} m(\bm{X}_i) - \int_{\bm{x} }  m(\bm{x}) d\mathbbm{P}_0^{\bm{\theta}}(\bm{x}) \Big| 
    + \underset{m \in \mathcal{M}}{\sup} \Big| \frac{1}{n_1} \sum_{i : T_i = 1} m(\bm{X}_i)  - \int_{\bm{x} }  m(\bm{x}) d\mathbbm{P}_1 (\bm{x}) \Big|,
\end{align*}
we obtain
\begin{align}
\mathbbm{P}_{\bm{\theta}_0}\left( \sup_{\bm{\theta} \in \Theta} \left| d_{\mathcal{M}} (\mathbbm{P}_{0,n}^{\bm{\theta}}, \mathbbm{P}_{1,n}) - d_{\mathcal{M}} (\mathbbm{P}_{0}^{\bm{\theta}}, \mathbbm{P}_{1}) \right|  > \frac{\delta}{4} \right)
\leq & \mathbbm{P}_{\bm{\theta}_0}\left( \sup_{\bm{\theta} \in \Theta}  \underset{m \in \mathcal{M}}{\sup} \Big| \sum_{i : T_i = 0} w_{i}^{\bm{\theta}} m(\bm{X}_i) - \int_{\bm{x} }  m(\bm{x})d \mathbbm{P}_0^{\bm{\theta}}(\bm{x}) \Big| > \frac{\delta}{8}  \right) \label{ulln_1}\\
&+  \mathbbm{P}_{\bm{\theta}_0}\left( \underset{m \in \mathcal{M}}{\sup} \Big| \frac{1}{n_1} \sum_{i : T_i = 1} m(\bm{X}_i)  - \int_{\bm{x} }  m(\bm{x}) d\mathbbm{P}_1 (\bm{x}) \Big|  > \frac{\delta}{8}  \right) \label{ulln_2}.
\end{align}

Since 
$$\underset{\bm{\theta} \in \Theta, m \in \mathcal{M}}{\sup} \Bigg| \frac{1}{n_0} \sum_{i : T_i = 0} f_{\bm{\theta}}(\bm{X}_i) m(\bm{X}_i) - \int_{\bm{x} } f_{\bm{\theta}}(\bm{x}) m(\bm{x}) d \mathbbm{P}_0 (\bm{x}) \Bigg| \to 0, $$
and
$$\underset{\bm{\theta} \in \Theta}{\sup} \Bigg| \frac{1}{n_0} \sum_{i : T_i = 0} f_{\bm{\theta}}(\bm{X}_i)  - \int_{\bm{x} } f_{\bm{\theta}}(\bm{x}) d \mathbbm{P}_0 (\bm{x}) \Bigg| \to 0$$
hold by Assumption \ref{assum_mfinite} and Lemma \ref{lemma2}, we get
\begin{align*}
    \underset{\bm{\theta} \in \Theta, m \in \mathcal{M}}{\sup} \Bigg| \sum_{i : T_i = 0} w_{i}^{\bm{\theta}} m(\bm{X}_i) - \int_{\bm{x} }  m(\bm{x})dF_0^{\bm{\theta}}(\bm{x}) \Bigg|
    = & \underset{\bm{\theta} \in \Theta, m \in \mathcal{M}}{\sup} \Bigg| \frac{\sum_{i : T_i = 0} f_{\bm{\theta}}(\bm{X}_i) m(\bm{X}_i)}{ \sum_{i : T_i = 0} f_{\bm{\theta}}(\bm{X}_i)}
     - \frac{\int_{\bm{x} } f_{\bm{\theta}}(\bm{x}) m(\bm{x}) d\mathbbm{P}_0 (\bm{x})}{\int_{\bm{x} } f_{\bm{\theta}}(\bm{x}) d\mathbbm{P}_0 (\bm{x}) } \Bigg| \\
     \leq &  \underset{\bm{\theta} \in \Theta, m \in \mathcal{M}}{\sup}
     \Bigg| \frac{\frac{1}{n_0}\sum_{i : T_i = 0} f_{\bm{\theta}}(\bm{X}_i) m(\bm{X}_i)}{ \frac{1}{n_0} \sum_{i : T_i = 0} f_{\bm{\theta}}(\bm{X}_i)}
     - \frac{\int_{\bm{x} } f_{\bm{\theta}}(\bm{x}) m(\bm{x}) 
     d\mathbbm{P}_0 (\bm{x})}{\frac{1}{n_0} \sum_{i : T_i = 0} f_{\bm{\theta}}(\bm{X}_i)} \Bigg| \\
     & + \underset{\bm{\theta} \in \Theta, m \in \mathcal{M}}{\sup} \Bigg| \frac{\int_{\bm{x} } f_{\bm{\theta}}(\bm{x}) m(\bm{x}) d \mathbbm{P}_0 (\bm{x})}{ \frac{1}{n_0} \sum_{i : T_i = 0} f_{\bm{\theta}}(\bm{X}_i)}
     - \frac{\int_{\bm{x} } f_{\bm{\theta}}(\bm{x}) m(\bm{x}) d \mathbbm{P}_0 (\bm{x})}{\int_{\bm{x} } f_{\bm{\theta}}(\bm{x}) d \mathbbm{P}_0 (\bm{x}) } \Bigg| \\
     \to & 0,
\end{align*}
and hence (\ref{ulln_1}) converges to $0$.
Also, (\ref{ulln_2}) converges to $0$ by Lemma \ref{lemma2}.
To sum up, (\ref{inprob}) converges to $0$, which implies that $\widehat{\bm{\theta}}_n$ converges to $\bm{\theta}_0$ in probability.
Hence, by Assumption \ref{assum_fconti}, we have 
\begin{align*}
\widehat{\operatorname{ATT}}^{\widehat{\bm{w}}}
=& \sum_{i : T_i = 1}   \frac{Y_i}{n_1}- \frac{\sum_{i : T_i = 0} \exp(f(\bm{X}_i ; \widehat{\bm{\theta}}_n)) Y_i}{ \sum_{i : T_i = 0} \exp(f(\bm{X}_i ; \widehat{\bm{\theta}}_n))}\\
\to& \mathbb{E}\left(Y(1)|T=1\right) - \mathbb{E}\left(Y(0)|T=1\right) \\
=& \operatorname{ATT}.
\end{align*}
\end{proof}

\paragraph{Proof for Theorem \ref{theorem2}}
\begin{proof}
Let $u(\cdot) := \frac{\pi(\cdot)}{1-\pi(\cdot)}$. 
For given $\mathcal{D}_n$, we define
$\bm{w}^{0} = (w_{1}^{0}, \dots, w_{n}^{0})^{\top}$ as 
\begin{align*}
    w_{i}^{0} = \frac{\mathbb{I}(T_i = 0) u(\bm{X}_i)}{ \sum_{i : T_i = 0} {u(\bm{X}_i)}}, && i \in [n].
\end{align*}
Since
$$ \eta < \frac{\eta}{1-\eta} \leq u(\cdot) \leq \frac{1-\eta}{\eta}< \frac{1}{\eta},$$
$\bm{w}^{0} \in \mathcal{W}^N(B)$ holds. By definition of $\widehat{\bm{w}}$,
\begin{align}
    d_{\mathcal{M}}(\mathbbm{P}_{0,n}^{\widehat{\bm{w}}}, \mathbbm{P}_{1,n}) \leq  d_{\mathcal{M}}(\mathbbm{P}_{0,n}^{\bm{w}^{0}}, \mathbbm{P}_{1,n}). \label{smallIPM}
\end{align}

For any $m(\cdot)\in \mathcal{M}$, we have
\begin{align*}
\sum_{i : T_i = 1} \left( \frac{m(\bm{X}_i)}{n_1} \right) 
=& \frac{\sum_{i=1}^n \mathbb{I}(T=1) m(\bm{X}_i)}{\sum_{i=1}^n \mathbb{I}(T=1)} \\
= & \frac{\frac{1}{n}\sum_{i=1}^n \mathbb{I}(T=1) m(\bm{X}_i)}{\frac{1}{n}\sum_{i=1}^n \mathbb{I}(T=1)} \\
\to & \frac{\mathbb{E} (\mathbb{I}(T = 1) m(\bm{X}) ) }{\mathbb{E} (\mathbb{I}(T = 1)) } \\
= & \mathbb{E}(m(\bm{X})|T=1),
\end{align*}
and hence 
\begin{align}
\sup_{m\in \mathcal{M}} 
\left| \sum_{i : T_i = 1}\frac{m(\bm{X}_i)}{n_1} - 
\mathbb{E}(m(\bm{X})|T=1) \right| \to 0 \label{asconv1}
\end{align}
holds by Lemma \ref{lemma2}.

Also, since
\begin{align*}
\mathbb{E} (\mathbb{I}(T = 0) u (\bm{X}) m(\bm{X}) ) 
= &\mathbb{E} \left(\mathbb{E} \left(\mathbb{I}(T = 0) \frac{\pi(\bm{X})}{1-\pi(\bm{X})} m(\bm{X}) \Bigg| \bm{X}\right)\right) \\
= &\mathbb{E} \left( \frac{\pi(\bm{X})}{1-\pi(\bm{X})} m(\bm{X})  \mathbb{E} \left(\mathbb{I}(T = 0) | \bm{X}\right)\right) \\
= & \mathbb{E} (\mathbb{I}(T = 1) m(\bm{X})),
\end{align*}
and
\begin{align*}
\mathbb{E} (\mathbb{I}(T = 0) u (\bm{X})) 
= &\mathbb{E} \left(\mathbb{E} \left(\mathbb{I}(T = 0) \frac{\pi(\bm{X})}{1-\pi(\bm{X})} \Bigg| \bm{X}\right)\right) \\
= &\mathbb{E} \left( \frac{\pi(\bm{X})}{1-\pi(\bm{X})}  \mathbb{E} \left(\mathbb{I}(T = 0) | \bm{X}\right)\right) \\
= & \mathbb{E} (\mathbb{I}(T = 1)),
\end{align*}

we obtain
\begin{align*}
    \sum_{i : T_i = 0} w_{i}^{0} m(\bm{X}_i)
=& \frac{\sum_{i : T_i = 0} u (\bm{X}_i) m(\bm{X}_i)}{\sum_{i : T_i = 0} u(\bm{X}_i)}\\
=& \frac{\sum_{i=1}^n \mathbb{I}(T=0) u(\bm{X}_i) m(\bm{X}_i)}{\sum_{i=1}^n \mathbb{I}(T=0) u(\bm{X}_i)}\\
= & \frac{\frac{1}{n}\sum_{i=1}^n \mathbb{I}(T=0) u(\bm{X}_i) m(\bm{X}_i)}{\frac{1}{n}\sum_{i=1}^n \mathbb{I}(T=0) u(\bm{X}_i)} \\
\to & \frac{\mathbb{E} (\mathbb{I}(T = 0) u (\bm{X}) m(\bm{X}) ) }{\mathbb{E} (\mathbb{I}(T = 0) u (\bm{X})) } \\
= & \frac{\mathbb{E} (\mathbb{I}(T = 1) m(\bm{X}) ) }{\mathbb{E} (\mathbb{I}(T = 1)) } \\
= & \mathbb{E}(m(\bm{X})|T=1). 
\end{align*}
Hence, 
\begin{align}
    \sup_{m\in \mathcal{M}} 
\left| \sum_{i : T_i = 0} w_{i}^{0} m(\bm{X}_i) - 
\mathbb{E}(m(\bm{X})|T=1) \right| \to 0 \label{asconv2}
\end{align}
holds by Lemma \ref{lemma2}.

To sum up, 
\begin{align*}
    d_{\mathcal{M}}(\mathbbm{P}_{0,n}^{\widehat{\bm{w}}}, \mathbbm{P}_{1,n}) \leq & d_{\mathcal{M}}(\mathbbm{P}_{0,n}^{\bm{w}^{0}}, \mathbbm{P}_{1,n}) \\
    = & \sup_{m\in \mathcal{M}} 
 \left| \sum_{i : T_i = 0} w_{i}^{0} m(\bm{X}_i) - \sum_{i : T_i = 1}\frac{m(\bm{X}_i)}{n_1} \right| \\
\to &  0 
\end{align*}
holds by (\ref{smallIPM}), (\ref{asconv1}) and (\ref{asconv2}).
Since $d_{\mathcal{M}_0}(\cdot, \cdot) \lesssim d_{\mathcal{M}}(\cdot, \cdot)$ by Assumption \ref{assum_msub}, we obtain
\begin{align}
\operatorname{err}_{\text{bal}}^{\widehat{\bm{w}}} 
= & \sum_{i : T_i = 1} \frac{m_0 (\bm{X}_i)}{n_1} - \sum_{i : T_i = 0} \widehat{w}_{i} m_0 (\bm{X}_i) \nonumber\\
\leq & d_{\mathcal{M}_0}(\mathbbm{P}_{0,n}^{\widehat{\bm{w}}}, \mathbbm{P}_{1,n}) \nonumber \\
\leq & \xi(d_{\mathcal{M}}(\mathbbm{P}_{0,n}^{\widehat{\bm{w}}}, \mathbbm{P}_{1,n})) \nonumber \\
\to & 0. \label{balzero}
\end{align}

Let $\mathcal{C}^{(n)} = \{(\bm{X}_i, T_i)\}_{i=1}^n$.
Since $\widehat{w}_{i}$ is a measurable function of $\mathcal{C}^{(n)}$ for every $i \in [n]$, we get
$$\mathbb{E} \left(\sum_{i : T_i = 1} \frac{Y_i - m_1(\bm{X}_i)}{n_1} 
    -  \sum_{i : T_i = 0} \widehat{w}_{i} (Y_i - m_0(\bm{X}_i)) \Bigg| \mathcal{C}^{(n)}  \right)=0.$$
Also, since
\begin{align*}
\max_{i\in[n]} \widehat{w}_{i}
<  \frac{1}{n_0 \eta^2},
\end{align*}
we obtain
\begin{align*}
    \operatorname{Var} \left(\sum_{i : T_i = 1} \frac{Y_i - m_1(\bm{X}_i)}{n_1} 
    -  \sum_{i : T_i = 0} \widehat{w}_{n,i} (Y_i - m_0(\bm{X}_i)) \Bigg| \mathcal{C}^{(n)}  \right)
    = & \sum_{i : T_i = 1} \operatorname{Var} \left( \frac{Y_i(1)}{n_1} \Big| \bm{X}_i \right)
      + \sum_{i : T_i = 0} \operatorname{Var} \left( \widehat{w}_{i} Y_i(0) \big| \bm{X}_i \right)\\
    = & \sum_{i : T_i = 1} \frac{\operatorname{Var} \left( Y_i(1) | \bm{X}_i \right)}{n_1^2} 
      + \sum_{i : T_i = 0} \widehat{w}_{i}^2 \operatorname{Var} \left( Y_i(0) \big| \bm{X}_i \right)\\
    \leq & \frac{C}{min(n_0 , n_1) }.
\end{align*}
for some constant $C>0$.
In addition, we have
\begin{align}
\mathbb{E} \left(\sum_{i : T_i = 1} \frac{Y_i - m_1(\bm{X}_i)}{n_1} 
    -  \sum_{i : T_i = 0} \widehat{w}_{i}^2 (Y_i - m_0(\bm{X}_i)) \Big| \mathcal{C}^{(n)}  \right) 
    =0, \label{condiex_zero}
\end{align}
and so
\begin{align*}
    \operatorname{Var}(\operatorname{err}_{\text{obs}}^{\widehat{\bm{w}}}) 
    =& \operatorname{Var} \left(\sum_{i : T_i = 1} \frac{Y_i - m_1(\bm{X}_i)}{n_1} 
    -  \sum_{i : T_i = 0} \widehat{w}_{i} (Y_i - m_0(\bm{X}_i))\right)\\
    =& \mathbb{E} \left( \operatorname{Var} \left(\sum_{i : T_i = 1} \frac{Y_i - m_1(\bm{X}_i)}{n_1} 
    -  \sum_{i : T_i = 0} \widehat{w}_{i} (Y_i - m_0(\bm{X}_i)) \Big| \mathcal{C}^{(n)}  \right) \right)\\
    +&  \operatorname{Var} \left( \mathbb{E} \left(\sum_{i : T_i = 1} \frac{Y_i - m_1(\bm{X}_i)}{n_1} 
    -  \sum_{i : T_i = 0} \widehat{w}_{i} (Y_i - m_0(\bm{X}_i)) \Big| \mathcal{C}^{(n)} \right) \right)\\
    \to & 0,
\end{align*}
which implies 
\begin{align}     \operatorname{err}_{\text{obs}}^{\widehat{\bm{w}}} \to 0 \label{obszero}
\end{align}
in probability because $\mathbb{E} \left(\operatorname{err}_{\text{obs}}^{\widehat{\bm{w}}} \right) = 0$ by (\ref{condiex_zero}).
Now, by (\ref{balzero}), (\ref{obszero}) and the fact $\operatorname{SATT} \to \operatorname{ATT}$ in probability, we conclude $\widehat{\operatorname{ATT}}^{\widehat{\bm{w}}} \to \operatorname{ATT}$ in probability.
\end{proof}

\newpage
\section{The CBIPM for the ATE} \label{app_ATE}
\renewcommand{\theequation}{B.\arabic{equation}}
In this paper, we mainly focus on the ATT to write more concisely.
However, all discussions about the ATT can be extended to the ATE.
In this section, we briefly analyze the ATE using the CBIPM.

\subsection{Bias and the IPM for the ATE}
In this section, we link the bias of the weighted estimator
of the ATE to the IPM.

For $t \in \{ 0,1 \}$, we define 
\begin{align*}
    \mathcal{W}^{+}_t := \Bigg\{ & \bm{w} = (w_1 , \dots, w_n)^{\top} \in [0,1]^n :
 \sum_{i : T_i = t} w_i  = 1 , \sum_{i : T_i \neq t} w_i  = 0\Bigg\}
\end{align*}
as the set of all possible $n$-dimensional weight vectors for the units $\{i : T_i = t \}$.
The weighted estimator for the ATE using $\bm{w}_0 = (w_{0,1} , \dots, w_{0,n})^{\top} \in \mathcal{W}_0^{+}$ and $\bm{w}_1 = (w_{1,1} , \dots, w_{1,n})^{\top} \in \mathcal{W}_1^{+}$ can be expressed as
\begin{equation}
    \widehat{\operatorname{ATE}}^{\bm{w}_0, \bm{w}_1} = \sum_{i : T_i = 1} w_{1,i} Y_i -\sum_{i : T_i = 0} w_{0,i} Y_i. \label{ATE_estimator}
\end{equation}
The error of $\widehat{\operatorname{ATE}}^{\bm{w}_0, \bm{w}_1}$ can be decomposed as
\begin{align}
    \widehat{\operatorname{ATE}}^{\bm{w}_0, \bm{w}_1} - \operatorname{ATE} 
    =  \operatorname{Er}_{\text{bal}}^{\bm{w}_0, \bm{w}_1} + \operatorname{Er}_{\text{obs}}^{\bm{w}_0, \bm{w}_1} + (\operatorname{SATE} - \operatorname{ATE}), \label{decom_ATE}
\end{align}
where  
\begin{align*}
    \operatorname{Er}_{\text{bal}}^{\bm{w}_0, \bm{w}_1} =&  \left( \sum_{i= 1}^n \frac{m_0 (\bm{X}_i)}{n} - \sum_{i : T_i = 0} w_{0,i} m_0 (\bm{X}_i) \right) 
     - \left(\sum_{i= 1}^n \frac{m_1 (\bm{X}_i)}{n} - \sum_{i : T_i = 1} w_{1,i} m_1 (\bm{X}_i) \right),
\end{align*}
and
\begin{align*}
    \operatorname{Er}_{\text{obs}}^{\bm{w}_0, \bm{w}_1} =& 
    -  \sum_{i : T_i = 0} w_{0,i} (Y_i - m_0(\bm{X}_i)) + \sum_{i : T_i = 1} w_{1,i} (Y_i - m_1(\bm{X}_i)).
\end{align*}
Here, $\operatorname{Er}_{\text{bal}}^{\bm{w}_0, \bm{w}_1}$ and $\operatorname{Er}_{\text{obs}}^{\bm{w}_0, \bm{w}_1}$ are the balancing error observation errors of the ATE, which have similar properties with those of the ATT.
That is, $\operatorname{Er}_{\text{bal}}^{\bm{w}_0, \bm{w}_1}$ arises due to covariate
imbalance and $\operatorname{Er}_{\text{obs}}^{\bm{w}_0, \bm{w}_1}$ is an inevitable error due to the randomness in $Y$.

For $\bm{w}_t = (w_{t,1} , \dots, w_{t,n})^{\top} \in \mathcal{W}_t^{+}$ for $t \in \{ 0,1 \}$, we denote 
\begin{align*}
    \mathbb{P}_{t,n}^{\bm{w}_t} &= \sum_{i : T_i = t} w_{t,i} \delta_{\bm{X}_i},
\end{align*}
as the weighted empirical distribution of $\bm{X}$ for control (or treated) units. 
Since 
\begin{align}
d_{\mathcal{M}}(\mathbbm{P}_{t,n}^{\bm{w}_t}, \mathbbm{P}_{n}) =& \sup_{m(\cdot)\in \mathcal{M}} \left|  \sum_{i = 1}^n \frac{m (\bm{X}_i)}{n} - \sum_{i : T_i = t} w_{t,i} m (\bm{X}_i) \right|, \label{errbal_IPM_ATE}
\end{align}
$d_{\mathcal{M}}(\mathbbm{P}_{0,n}^{\bm{w}_0}, \mathbbm{P}_{n}) + d_{\mathcal{M}}(\mathbbm{P}_{1,n}^{\bm{w}_1}, \mathbbm{P}_{n})$ 
is an upper bound of the worst-case balancing error for the ATE when $m_0, m_1 \in \mathcal{M}$. 

Furthermore, since $\mathbb{E}(\operatorname{Er}_{\text{obs}}^{\bm{w}_0, \bm{w}_1})=0$, 
(\ref{decom_ATE}) implies that
the bias of $\widehat{\operatorname{ATE}}^{\bm{w}_0, \bm{w}_1}$ is upper bounded by $d_{\mathcal{M}}(\mathbbm{P}_{0,n}^{\bm{w}_0}, \mathbbm{P}_{n} ) + d_{\mathcal{M}}(\mathbbm{P}_{1,n}^{\bm{w}_1}, \mathbbm{P}_{n} )$.
\begin{proposition} \label{proposition_ATE}
If $m_0, m_1 \in \mathcal{M}$, then
\begin{align*}
    \left| \mathbb{E}(\widehat{\operatorname{ATE}}^{\bm{w}_0, \bm{w}_1} - \operatorname{SATE} | \bm{X}_1 , \dots, \bm{X}_n) \right| \leq d_{\mathcal{M}}(\mathbbm{P}_{0,n}^{\bm{w}_0}, \mathbbm{P}_{n} )
    + d_{\mathcal{M}}(\mathbbm{P}_{1,n}^{\bm{w}_1}, \mathbbm{P}_{n} )
\end{align*} 
holds.
\end{proposition}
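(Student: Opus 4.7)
The plan is to mimic the proof of Proposition \ref{proposition} given earlier in the excerpt, since the ATE bound is the natural two-group analog of the ATT bound. The only differences are that we now have two weight vectors instead of one, and $\operatorname{SATT}$ is replaced by $\operatorname{SATE}$, which requires us to upper-bound two IPM terms rather than one via a triangle inequality.

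First I would invoke the decomposition (\ref{decom_ATE}) and rearrange it to obtain
$$\widehat{\operatorname{ATE}}^{\bm{w}_0, \bm{w}_1} - \operatorname{SATE} = \operatorname{Er}_{\text{bal}}^{\bm{w}_0, \bm{w}_1} + \operatorname{Er}_{\text{obs}}^{\bm{w}_0, \bm{w}_1}.$$
The key observation is that $\operatorname{Er}_{\text{bal}}^{\bm{w}_0, \bm{w}_1}$ is a function of $\bm{X}_1,\dots,\bm{X}_n$ only (through $m_0, m_1$ and the weights, which are assumed to depend only on the covariates and treatment assignments), while $\operatorname{Er}_{\text{obs}}^{\bm{w}_0, \bm{w}_1}$ is a weighted sum of terms of the form $Y_i - m_{T_i}(\bm{X}_i)$, each of which has conditional mean zero given $\bm{X}_1, \dots, \bm{X}_n$ by the ignorability assumption and the definition of $m_t(\cdot)$. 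Taking conditional expectation on both sides therefore yields
$$\mathbb{E}\bigl(\widehat{\operatorname{ATE}}^{\bm{w}_0, \bm{w}_1} - \operatorname{SATE} \,\big|\, \bm{X}_1,\dots,\bm{X}_n\bigr) = \operatorname{Er}_{\text{bal}}^{\bm{w}_0, \bm{w}_1}.$$

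Next I would apply the triangle inequality to the explicit formula for the balancing error to split it into its two natural halves, one for the control group and one for the treated group:
$$\bigl|\operatorname{Er}_{\text{bal}}^{\bm{w}_0, \bm{w}_1}\bigr| \leq \left|\sum_{i=1}^n \frac{m_0(\bm{X}_i)}{n} - \sum_{i:T_i=0} w_{0,i}\, m_0(\bm{X}_i)\right| + \left|\sum_{i=1}^n \frac{m_1(\bm{X}_i)}{n} - \sum_{i:T_i=1} w_{1,i}\, m_1(\bm{X}_i)\right|.$$
Since $m_0, m_1 \in \mathcal{M}$ by hypothesis, each of the two terms on the right is a specific choice of $m \in \mathcal{M}$ inside the supremum (\ref{errbal_IPM_ATE}), so the first is bounded by $d_{\mathcal{M}}(\mathbbm{P}_{0,n}^{\bm{w}_0}, \mathbbm{P}_{n})$ and the second by $d_{\mathcal{M}}(\mathbbm{P}_{1,n}^{\bm{w}_1}, \mathbbm{P}_{n})$. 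Combining yields the stated bound.

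There is really no hard step here; the proof is entirely mechanical given the decomposition and the defining identity (\ref{errbal_IPM_ATE}). The only mild subtlety worth double-checking is that the observation error is truly mean-zero given $\bm{X}_1,\dots,\bm{X}_n$, which relies on the ignorability assumption $T \indep (Y(0),Y(1)) \mid \bm{X}$ so that $\mathbb{E}(Y_i \mid \bm{X}_i, T_i = t) = m_t(\bm{X}_i)$, and on the weights $w_{0,i}, w_{1,i}$ being functions only of $\{\bm{X}_i, T_i\}_{i=1}^n$. Once these measurability and mean-zero facts are noted, the argument closes immediately.
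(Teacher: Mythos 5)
Your proof is correct and follows essentially the same route as the paper: apply the decomposition (\ref{decom_ATE}), observe that the observation error has zero conditional mean so only the balancing error survives, and then bound it by the two IPM terms via the triangle inequality and the hypothesis $m_0, m_1 \in \mathcal{M}$ together with (\ref{errbal_IPM_ATE}). The paper's own proof is just a terser version of exactly this argument.
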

\begin{proof}
Since (\ref{errbal_IPM_ATE}),
\begin{align*}
    \left|\mathbb{E}\left(\widehat{\operatorname{ATE}}^{\bm{w}_0, \bm{w}_1} - \operatorname{SATE} \Big| \bm{X}_1 , \dots, \bm{X}_n\right) \right|
    =& \left|\mathbb{E}( \operatorname{Er}_{\text{bal}}^{\bm{w}_0, \bm{w}_1} | \bm{X}_1 , \dots, \bm{X}_n)\right| \\
    \leq& d_{\mathcal{M}}(\mathbbm{P}_{0,n}^{\bm{w}_0}, \mathbbm{P}_{n} )
    + d_{\mathcal{M}}(\mathbbm{P}_{1,n}^{\bm{w}_1}, \mathbbm{P}_{n} )
\end{align*} 
by (\ref{decom_ATE}).
\end{proof}

\subsection{The CBIPM for the ATE}
The basic idea of the CBIPM for the ATE is to estimate $\bm{w}_0$ and $\bm{w}_1$ by
\begin{align*}
\begin{split}
    \widehat{\bm{w}}_{0} =& \ \underset{\bm{w}_0 \in \mathcal{W}_0}{\operatorname{argmin}} \  d_{\mathcal{M}}(\mathbbm{P}_{0,n}^{\bm{w}_0}, \mathbbm{P}_{n} ), \\
    \widehat{\bm{w}}_{1} =& \ \underset{\bm{w}_1 \in \mathcal{W}_1}{\operatorname{argmin}} \  d_{\mathcal{M}}(\mathbbm{P}_{1,n}^{\bm{w}_1}, \mathbbm{P}_{n} ),
\end{split}
\end{align*}
where $\mathcal{W}_0 \subseteq \mathcal{W}_0^{+}$ and $\mathcal{W}_1 \subseteq \mathcal{W}_1^{+}$ are the pre-specified set of weight vectors and $\mathcal{M}$ is the set of discriminators.

\paragraph{Parametric CBIPM for ATE}
Assume 
$$\operatorname{logit}(\pi(\cdot)) = c_0 + f( \ \cdot \ ; \bm{\theta}_0)$$
holds for unknown $c_0 \in \mathbb{R}$ and $\bm{\theta}_0 \in \Theta,$ where
$\Theta$ is a compact set of  $\mathbb{R}^k$ for  $k \in \mathbb{N}$ and
$f(\ \cdot \ ; \bm{\theta})$ is a function parameterized by $\bm{\theta}  \in \Theta.$ 
For the identifiability of the parameters, we assume
$f(\bm{0} ; \bm{\theta})=0$ for every $\bm{\theta} \in \Theta$.  
For $t \in \{ 0,1 \}$,  we consider
\begin{align*}
     \mathcal{W}_t^{P}(f) :=& \Big\{\bm{w}_{t,f}(\bm{\theta} \ ; \mathcal{D}^{(n)}): \bm{\theta} \in \Theta \Big\} ,   
\end{align*}
where $\bm{w}_{0,f}(\ \cdot \ ; \mathcal{D}^{(n)}) : \Theta \to \mathcal{W}_0^{+}$ and $\bm{w}_{1,f}(\ \cdot \ ; \mathcal{D}^{(n)}) : \Theta \to \mathcal{W}_1^{+}$ are n-dimensional vector functions defined as
\begin{align*}
     \bm{w}_{0,f}(\bm{\theta} ; \mathcal{D}^{(n)})_{(i)} :=& \frac{1}{n} + \frac{n_1}{n}\frac{\exp(f(\bm{X}_i ; \bm{\theta}))}{ \sum_{i : T_i = 0} \exp({f(\bm{X}_i ; \bm{\theta}))}} , && i : T_i = 0, \\
     \bm{w}_{1,f}(\bm{\theta} ; \mathcal{D}^{(n)})_{(i)} :=& \frac{1}{n} + \frac{n_0}{n}\frac{\exp(-f(\bm{X}_i ; \bm{\theta}))}{ \sum_{i : T_i = 1} \exp(-f(\bm{X}_i ; \bm{\theta}))} , && i : T_i = 1.
\end{align*}     

Finally, the parametric CBIPM method (P-CBIPM) for the ATE solves
\begin{align}
    \widehat{\bm{w}}_{t} = \underset{\bm{w}_t \in \mathcal{W}_t^{P}(f)}{\operatorname{argmin}} \  d_{\mathcal{M}}(\mathbbm{P}_{t,n}^{\bm{w}_t}, \mathbbm{P}_{n} ), \label{solve_P_CBIPM_ATE}
\end{align}
for $t \in \{ 0,1 \}$ and estimates the ATE using (\ref{ATE_estimator}).

For simplicity, we denote $f_{\bm{\theta}} (\cdot) := f(\cdot ; \bm{\theta})$ for the proof .
\begin{lemma} \label{lemma1_ATE}
For given $\bm{\theta} \in \Theta$, denote
\begin{align*}
d \mathbbm{P}^{\bm{\theta}}_0 (\bm{x}) :=& 
\left(\mathbbm{P}(T=0) + \mathbbm{P}(T=1)
\frac{\exp(f_{\bm{\theta}}(\bm{x})) }
{\int_{\bm{x} \in \mathcal{X}} \exp(f_{\bm{\theta}}(\bm{x})) d\mathbbm{P}_{0}(\bm{x})}\right)  d\mathbbm{P}_{0}(\bm{x}), \\
d\mathbbm{P}^{\bm{\theta}}_1 (\bm{x}) :=& 
\left(\mathbbm{P}(T=1) + \mathbbm{P}(T=0)
\frac{\exp(-f_{\bm{\theta}}(\bm{x})) }
{\int_{\bm{x} \in \mathcal{X}} \exp(-f_{\bm{\theta}}(\bm{x})) d\mathbbm{P}_{1}(\bm{x})}\right)  d\mathbbm{P}_{1}(\bm{x})
\end{align*}
as the weighted probability measure for the control and treated group. 
If $\Theta \subset \mathbb{R}^k$ is compact and Assumption \ref{assum_fconti}, \ref{assum_fiden} and \ref{assum_mzero} hold, 
then for $t \in \{0,1 \}$ and any $\epsilon>0$,
\begin{align}
\begin{split}
    \underset{|\bm{\theta} - \bm{\theta}_0 | \geq \epsilon}{\inf} d_{\mathcal{M}} (\mathbbm{P}_{t}^{\bm{\theta}}, \mathbbm{P}) > 0. \label{lemma1eq_ATE}
\end{split}
\end{align}    
\end{lemma}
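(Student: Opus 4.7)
The plan is to mirror the proof of Lemma \ref{lemma1} almost verbatim, with the single substantive difference being the identity that replaces $d\mathbbm{P}^{\bm{\theta}_0}_0 \equiv d\mathbbm{P}_1$. Specifically, I will show that at $\bm{\theta} = \bm{\theta}_0$ the weighted control measure $\mathbbm{P}^{\bm{\theta}_0}_0$ coincides with the marginal $\mathbbm{P}$ (and symmetrically $\mathbbm{P}^{\bm{\theta}_0}_1 \equiv \mathbbm{P}$). Then identifiability of $\bm{\theta}_0$, together with Assumptions \ref{assum_fconti}, \ref{assum_fiden}, \ref{assum_mzero}, and compactness of $\Theta$, gives (\ref{lemma1eq_ATE}).

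For the key identity, I would use $\exp(c_0 + f_{\bm{\theta}_0}(\bm{x})) = \pi(\bm{x})/(1-\pi(\bm{x}))$ together with the Bayes rule $d\mathbbm{P}_0(\bm{x}) = (1-\pi(\bm{x}))/\mathbbm{P}(T=0)\, d\mathbbm{P}(\bm{x})$. A direct computation first gives
\[
\int_{\mathcal{X}}\exp(f_{\bm{\theta}_0}(\bm{x}))\,d\mathbbm{P}_0(\bm{x}) = \frac{\mathbbm{P}(T=1)}{\exp(c_0)\,\mathbbm{P}(T=0)},
\]
from which
\[
\mathbbm{P}(T=0) + \mathbbm{P}(T=1)\frac{\exp(f_{\bm{\theta}_0}(\bm{x}))}{\int\exp(f_{\bm{\theta}_0})\,d\mathbbm{P}_0} = \frac{\mathbbm{P}(T=0)}{1-\pi(\bm{x})},
\]
and multiplying by $d\mathbbm{P}_0(\bm{x})$ yields $d\mathbbm{P}^{\bm{\theta}_0}_0(\bm{x}) = d\mathbbm{P}(\bm{x})$. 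The computation for $\mathbbm{P}^{\bm{\theta}_0}_1 \equiv \mathbbm{P}$ is symmetric.

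For identifiability, suppose $\mathbbm{P}^{\bm{\theta}}_t \equiv \mathbbm{P} \equiv \mathbbm{P}^{\bm{\theta}_0}_t$ for $t=0$. Since $d\mathbbm{P}_0$ is absolutely continuous on $\mathcal{X}$ and the map is continuous in $\bm{x}$, the equality of densities (up to an additive $\mathbbm{P}(T=0)$) forces
\[
\frac{\exp(f_{\bm{\theta}}(\bm{x}))}{\int \exp(f_{\bm{\theta}})\,d\mathbbm{P}_0} = \frac{\exp(f_{\bm{\theta}_0}(\bm{x}))}{\int \exp(f_{\bm{\theta}_0})\,d\mathbbm{P}_0}
\]
for all $\bm{x}\in \mathcal{X}$. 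Evaluating at $\bm{x} = \bm{0}$ and using $f_{\bm{\theta}}(\bm{0}) = f_{\bm{\theta}_0}(\bm{0}) = 0$ forces the two normalizing constants to coincide, hence $f_{\bm{\theta}} \equiv f_{\bm{\theta}_0}$; by Assumption \ref{assum_fiden} this gives $\bm{\theta} = \bm{\theta}_0$. Thus for any $\bm{\theta} \neq \bm{\theta}_0$, $\mathbbm{P}^{\bm{\theta}}_0 \not\equiv \mathbbm{P}$ and then Assumption \ref{assum_mzero} gives $d_{\mathcal{M}}(\mathbbm{P}^{\bm{\theta}}_0, \mathbbm{P}) > 0$.

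To finish, the set $\{\bm{\theta} \in \Theta : |\bm{\theta} - \bm{\theta}_0| \geq \epsilon\}$ is compact, and the map $\bm{\theta} \mapsto d_{\mathcal{M}}(\mathbbm{P}^{\bm{\theta}}_0, \mathbbm{P})$ is continuous by Assumption \ref{assum_fconti} and dominated convergence (the ratio $\exp(f_{\bm{\theta}}(\bm{x}))/\int\exp(f_{\bm{\theta}})\,d\mathbbm{P}_0$ depends continuously on $\bm{\theta}$ thanks to the strict overlap condition, which bounds the weights). A continuous strictly positive function on a compact set attains a positive minimum, giving (\ref{lemma1eq_ATE}) for $t=0$; the $t=1$ case is identical. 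I do not expect any genuine obstacle here — the only thing to be careful about is making the continuity argument for $d_{\mathcal{M}}(\mathbbm{P}^{\bm{\theta}}_0, \mathbbm{P})$ in $\bm{\theta}$ rigorous, which is standard given Assumption \ref{assum_mfinite}-style uniform control on $\mathcal{M}$ and the bounds on $f$ over the compact $\Theta$ and $\mathcal{X}$.
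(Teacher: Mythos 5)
Your proposal is correct and follows essentially the same route as the paper's proof: establish $\mathbbm{P}^{\bm{\theta}_0}_t \equiv \mathbbm{P}$, deduce from the density identity and $f_{\bm{\theta}}(\bm{0})=0$ plus Assumption \ref{assum_fiden} that $\bm{\theta}_0$ is the unique parameter achieving this, invoke Assumption \ref{assum_mzero} for strict positivity off $\bm{\theta}_0$, and conclude by compactness and continuity. The only cosmetic difference is that the paper derives the key identity by reusing $\exp(f_{\bm{\theta}_0}(\bm{x}))\,d\mathbbm{P}_0(\bm{x})/\int\exp(f_{\bm{\theta}_0})\,d\mathbbm{P}_0 = d\mathbbm{P}_1(\bm{x})$ from Lemma \ref{lemma1} and then $\mathbbm{P}(T=0)\,d\mathbbm{P}_0+\mathbbm{P}(T=1)\,d\mathbbm{P}_1=d\mathbbm{P}$, whereas you recompute it directly via Bayes' rule; the two computations are equivalent.
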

\begin{proof}
For any $\bm{x} \in \mathcal{X}$, we obtain
\begin{align}
    d\mathbbm{P}^{\bm{\theta}_0}_0 (\bm{x}) = & 
    \mathbbm{P}(T=0) d\mathbbm{P}_{0}(\bm{x}) + \mathbbm{P}(T=1)
\frac{\exp(f_{\bm{\theta}_0}(\bm{x})) d\mathbbm{P}_{0}(\bm{x})}
{\int_{\bm{x} \in \mathcal{X}} \exp(f_{\bm{\theta}_0}(\bm{x})) d\mathbbm{P}_{0}(\bm{x})} \nonumber \\
=&  \mathbbm{P}(T=0) d\mathbbm{P}_{0}(\bm{x}) + \mathbbm{P}(T=1) d\mathbbm{P}_{1}(\bm{x}) \nonumber\\
=& d\mathbbm{P}(\bm{x}) \label{eqmeasure_ATE1},
\end{align}
and
\begin{align*}
d\mathbbm{P}^{\bm{\theta}_0}_1 (\bm{x}) = d\mathbbm{P}(\bm{x}).
\end{align*}

If $\mathbbm{P}_{0}^{\bm{\theta}} \equiv \mathbbm{P}$ holds for some $\bm{\theta} \in \Theta$, then
$$\frac{\exp(f_{\bm{\theta}}(\cdot)) }
{\mathbb{E}_0 (\exp(f_{\bm{\theta}}(\bm{X})))} \equiv \frac{\exp(f_{\bm{\theta}_0}(\cdot)) }
{\mathbb{E}_0 (\exp(f_{\bm{\theta}_0}(\bm{X})))},$$ 
and thus we get $f_{\bm{\theta}} \equiv f_{\bm{\theta}_0}$ since $f_{\bm{\theta}}(\bm{0}) = f_{\bm{\theta_0}}(\bm{0})=0$. 
Hence, by Assumption \ref{assum_fiden}, $\bm{\theta} = \bm{\theta}_0$.
Similarly, $\mathbbm{P}_{1}^{\bm{\theta}} \equiv \mathbbm{P}$ if and only if $\bm{\theta} = \bm{\theta}_0$.

To sum up, for any $\bm{\theta} \neq \bm{\theta}_0$, $d_{\mathcal{M}} (\mathbbm{P}_{t}^{\bm{\theta}}, \mathbbm{P}) > 0$ by Assumption \ref{assum_mzero}.
Since $\{\bm{\theta} \in \Theta : |\bm{\theta} - \bm{\theta}_0 | \geq \epsilon \}$ is compact and $ \bm{\theta} \mapsto d_{\mathcal{M}} (\mathbbm{P}_{t}^{\bm{\theta}}, \mathbbm{P})$ is continuous by Assumption \ref{assum_fconti}, we obtain (\ref{lemma1eq_ATE}).

\end{proof}

\begin{theorem} \label{theorem1_ATE}
Assume there exist unknown $c_0 \in \mathbb{R}$ and $\bm{\theta}_0 \in \Theta$ such that
$$ \operatorname{logit}(\pi(\cdot)) = c_0 + f_{\bm{\theta}_0} (\cdot).$$
If Assumptions \ref{assum_fconti}, \ref{assum_fiden}, \ref{assum_mfinite} and \ref{assum_mzero} hold, then for $\widehat{\bm{w}}_{0}$ and $\widehat{\bm{w}}_{1}$ defined by (\ref{solve_P_CBIPM_ATE}),
\begin{align*}
    \widehat{\operatorname{ATE}}^{\widehat{\bm{w}}_{0}, \widehat{\bm{w}}_{1}} \to \operatorname{ATE}.
\end{align*}
in probability.
\end{theorem}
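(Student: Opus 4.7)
The plan is to mimic the proof of Theorem \ref{theorem1} almost verbatim, but apply the argument twice, once to the control weights $\widehat{\bm{w}}_0$ and once to the treated weights $\widehat{\bm{w}}_1$, and then combine the two consistency statements via the decomposition (\ref{decom_ATE}). Note that Lemma \ref{lemma1_ATE} already plays the role that Lemma \ref{lemma1} played before: it gives, for each $t\in\{0,1\}$ and each $\epsilon>0$, some $\delta_t>0$ such that $\inf_{|\bm{\theta}-\bm{\theta}_0|\ge \epsilon} d_{\mathcal{M}}(\mathbb{P}_{t}^{\bm{\theta}},\mathbb{P})\ge \delta_t$, while $d_{\mathcal{M}}(\mathbb{P}_t^{\bm{\theta}_0},\mathbb{P})=0$ by (\ref{eqmeasure_ATE1}) and its treated analogue.

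Fix $t\in\{0,1\}$ and let $\widehat{\bm{\theta}}_{t,n}:=\arg\min_{\bm{\theta}\in\Theta} d_{\mathcal{M}}(\mathbb{P}_{t,n}^{\bm{w}_{t,f}(\bm{\theta};\mathcal{D}^{(n)})},\mathbb{P}_n)$; solving (\ref{solve_P_CBIPM_ATE}) is equivalent to computing these $\widehat{\bm{\theta}}_{t,n}$. Exactly as in the chain of inequalities leading to (\ref{inprob}) in the proof of Theorem \ref{theorem1}, the definition of $\widehat{\bm{\theta}}_{t,n}$ together with $d_{\mathcal{M}}(\mathbb{P}_t^{\bm{\theta}_0},\mathbb{P})=0$ reduces the claim $\widehat{\bm{\theta}}_{t,n}\to\bm{\theta}_0$ in probability to uniform convergence
\begin{align*}
\sup_{\bm{\theta}\in\Theta}\Bigl| d_{\mathcal{M}}(\mathbb{P}_{t,n}^{\bm{w}_{t,f}(\bm{\theta};\mathcal{D}^{(n)})},\mathbb{P}_n)-d_{\mathcal{M}}(\mathbb{P}_t^{\bm{\theta}},\mathbb{P})\Bigr|\overset{p}{\to}0.
\end{align*}
Inserting and subtracting an $m\in\mathcal{M}$, this is bounded by a sum of two sup-type terms:
\begin{align*}
\sup_{\bm{\theta},m}\Bigl|\sum_{i:T_i=t} w_{t,f}(\bm{\theta})_i m(\bm{X}_i)-\int m\, d\mathbb{P}_t^{\bm{\theta}}\Bigr|+\sup_{m\in\mathcal{M}}\Bigl|\tfrac{1}{n}\sum_{i=1}^n m(\bm{X}_i)-\int m\, d\mathbb{P}\Bigr|.
\end{align*}
The second summand vanishes in probability by Assumption \ref{assum_mfinite} and Lemma \ref{lemma2}. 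The first is handled as in Theorem \ref{theorem1}: write $w_{t,f}(\bm{\theta})_i$ explicitly (it is a convex combination of $1/n$ and the self-normalised weight $\propto\exp(\pm f_{\bm{\theta}}(\bm{X}_i))$), use the fact that on the compact $\Theta$ the functions $f_{\bm{\theta}}$ are bounded, and apply Lemma \ref{lemma2} jointly to the classes $\{f_{\bm{\theta}}m:\bm{\theta}\in\Theta,m\in\mathcal{M}\}$ and $\{f_{\bm{\theta}}:\bm{\theta}\in\Theta\}$, plus the strict overlap condition to keep denominators bounded away from zero.

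Having shown $\widehat{\bm{\theta}}_{t,n}\overset{p}{\to}\bm{\theta}_0$ for both $t=0$ and $t=1$, continuity of $f$ in $\bm{\theta}$ (Assumption \ref{assum_fconti}), the continuous mapping theorem, and the law of large numbers give
\begin{align*}
\sum_{i:T_i=0}\widehat{w}_{0,i}Y_i\overset{p}{\to}\mathbb{E}(Y(0)),\qquad \sum_{i:T_i=1}\widehat{w}_{1,i}Y_i\overset{p}{\to}\mathbb{E}(Y(1)),
\end{align*}
which is exactly $\widehat{\operatorname{ATE}}^{\widehat{\bm{w}}_0,\widehat{\bm{w}}_1}\to\operatorname{ATE}$ in probability. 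The main obstacle, as in Theorem \ref{theorem1}, is establishing the uniform-in-$\bm{\theta}$ convergence for the self-normalised weighted sums: one must pass the bracketing-number assumption on $\mathcal{M}$ through the ratio form of $w_{t,f}(\bm{\theta})_i$. This is routine once one exploits compactness of $\Theta$, continuity and boundedness of $f_{\bm{\theta}}$, and the strict overlap that prevents degeneracy of the normalising denominator; no substantively new ideas beyond Theorem \ref{theorem1} are required.
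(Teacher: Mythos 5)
Your proposal is correct and follows essentially the same route as the paper's own proof: reduce (\ref{solve_P_CBIPM_ATE}) to estimating $\widehat{\bm{\theta}}_{t,n}$ for each $t\in\{0,1\}$, invoke Lemma \ref{lemma1_ATE} in place of Lemma \ref{lemma1}, establish the uniform-in-$\bm{\theta}$ convergence of the empirical IPM to its population counterpart by splitting into the two sup-terms and applying Assumption \ref{assum_mfinite} with Lemma \ref{lemma2}, and conclude via continuity of $f$ in $\bm{\theta}$ that the two self-normalised weighted sums converge to $\mathbb{E}(Y(0))$ and $\mathbb{E}(Y(1))$. The paper likewise treats the passage of the bracketing condition through the ratio form of the weights as routine, so there is no substantive difference between your argument and theirs.
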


\begin{proof}
For given $\bm{\theta} \in \Theta$, denote
\begin{align*}
d\mathbbm{P}^{\bm{\theta}}_0 (\bm{x}) :=& 
\left(\mathbbm{P}(T=0) + \mathbbm{P}(T=1)
\frac{\exp(f_{\bm{\theta}}(\bm{x})) }
{\int_{\bm{x} \in \mathcal{X}} \exp(f_{\bm{\theta}}(\bm{x})) d\mathbbm{P}_{0}(\bm{x})}\right)  d\mathbbm{P}_{0}(\bm{x}), \\
d\mathbbm{P}^{\bm{\theta}}_1 (\bm{x}) :=& 
\left(\mathbbm{P}(T=1) + \mathbbm{P}(T=0)
\frac{\exp(-f_{\bm{\theta}}(\bm{x})) }
{\int_{\bm{x} \in \mathcal{X}} \exp(-f_{\bm{\theta}}(\bm{x})) d\mathbbm{P}_{1}(\bm{x})}\right)  d\mathbbm{P}_{1}(\bm{x})
\end{align*}
as the weighted probability measure for the control and treated group. 
Also, we denote $\mathbbm{P}_{t,n}^{\bm{\theta}} := \mathbbm{P}_{t,n}^{\bm{w}_{t,f}(\bm{\theta} \ ; \mathcal{D}^{(n)})}$ and
 $\bm{w}_t^{\bm{\theta}} = (w^{\bm{\theta}}_{t,1} , \dots, w^{\bm{\theta}}_{t,n})^{\top} := \bm{w}_{t,f}(\bm{\theta} \ ; \mathcal{D}^{(n)})$ for $t \in \{ 0,1 \}$.
For any given $\epsilon>0$, there exist $\delta>0$ by Lemma \ref{lemma1_ATE} such that 
\begin{align*}
    \underset{|\bm{\theta} - \bm{\theta}_0 | \geq \epsilon}{\inf} d_{\mathcal{M}} (\mathbbm{P}_{0}^{\bm{\theta}}, \mathbbm{P}) = \delta > 0.
\end{align*}

Note that for $t \in \{ 0,1 \}$, solving (\ref{solve_P_CBIPM_ATE}) is identical to solving
$$
\widehat{\bm{\theta}}_{t,n} :=
\underset{\bm{\theta} \in \Theta}{\operatorname{argmin}} \
d_{\mathcal{M}}(\mathbbm{P}_{t,n}^{\bm{w}_{t,f}(\bm{\theta} \ ; \mathcal{D}^{(n)})}, \mathbbm{P}_{n} ).
$$
We will show that $\widehat{\bm{\theta}}_{t,n}$ converges to $\bm{\theta}_0$.
Since $d_{\mathcal{M}} (\mathbbm{P}_{t,n}^{\widehat{\bm{\theta}}_{t,n}}, \mathbbm{P}_{n}) \leq d_{\mathcal{M}} (\mathbbm{P}_{t,n}^{\bm{\theta}_0}, \mathbbm{P}_{n})$ by the definition of $\widehat{\bm{\theta}}_{t,n}$, we get
\begin{align}
    \mathbbm{P}_{\bm{\theta}_0}\left( |\widehat{\bm{\theta}}_{t,n} - \bm{\theta}_0 | \geq \epsilon \right)
    \leq& \mathbbm{P}_{\bm{\theta}_0}\left( \inf_{|\bm{\theta} - \bm{\theta}_0| \geq \epsilon} \left( d_{\mathcal{M}} (\mathbbm{P}_{t,n}^{\bm{\theta}}, \mathbbm{P}_{n}) - d_{\mathcal{M}} (\mathbbm{P}_{t,n}^{\bm{\theta}_0}, \mathbbm{P}_{n})\right)  \leq 0 \right) \nonumber \\
    \leq&  \mathbbm{P}_{\bm{\theta}_0}\left( \inf_{|\bm{\theta} - \bm{\theta}_0| \geq \epsilon} \left( d_{\mathcal{M}} (\mathbbm{P}_{t,n}^{\bm{\theta}}, \mathbbm{P}_{n}) - d_{\mathcal{M}} (\mathbbm{P}_{t,n}^{\bm{\theta}_0}, \mathbbm{P}_{n})\right)
    - \underset{|\bm{\theta} - \bm{\theta}_0 | \geq \epsilon}{\inf} d_{\mathcal{M}} (\mathbbm{P}_{t}^{\bm{\theta}}, \mathbbm{P}) < -\frac{\delta}{2} \right) \nonumber \\
    \leq & \mathbbm{P}_{\bm{\theta}_0}\left( \inf_{|\bm{\theta} - \bm{\theta}_0| \geq \epsilon} \left( d_{\mathcal{M}} (\mathbbm{P}_{t,n}^{\bm{\theta}}, \mathbbm{P}_{n}) - d_{\mathcal{M}} (\mathbbm{P}_{t,n}^{\bm{\theta}_0}, \mathbbm{P}_{n})
    - d_{\mathcal{M}} (\mathbbm{P}_{t}^{\bm{\theta}}, \mathbbm{P}) \right) < -\frac{\delta}{2} \right) \nonumber \\
    \leq&  \mathbbm{P}_{\bm{\theta}_0}\left( \sup_{\bm{\theta} \in \Theta} \left| d_{\mathcal{M}} (\mathbbm{P}_{t,n}^{\bm{\theta}}, \mathbbm{P}_{n}) - d_{\mathcal{M}} (\mathbbm{P}_{t}^{\bm{\theta}}, \mathbbm{P}) \right|  > \frac{\delta}{4} \right), \label{inprob_ATE1}
\end{align}
where the last inequality holds because $d_{\mathcal{M}} (\mathbbm{P}_{t}^{\bm{\theta}_0}, \mathbbm{P}_{1}) = 0$ by (\ref{eqmeasure_ATE1}).
Since
\begin{align*}
    &\left| d_{\mathcal{M}} (\mathbbm{P}_{t,n}^{\bm{\theta}}, \mathbbm{P}_{n}) - d_{\mathcal{M}} (\mathbbm{P}_{t}^{\bm{\theta}}, \mathbbm{P}) \right| \\
    =& \underset{m \in \mathcal{M}}{\sup} \Big| \sum_{i : T_i = 0} w_{t,i}^{\bm{\theta}} m(\bm{X}_i)  - \frac{1}{n} \sum_{i = 1}^n m(\bm{X}_i)  \Big|
     \quad - \underset{m \in \mathcal{M}}{\sup} \Big| \int_{\bm{x} }  m(\bm{x}) d \mathbbm{P}_t^{\bm{\theta}}(\bm{x}) - \int_{\bm{x} }  m(\bm{x}) d\mathbbm{P} (\bm{x}) \Big| \Bigg| \\
    \leq&  \underset{m \in \mathcal{M}}{\sup} \Big| \sum_{i : T_i = 0} w_{t,i}^{\bm{\theta}} m(\bm{X}_i) - \int_{\bm{x} }  m(\bm{x}) d\mathbbm{P}_t^{\bm{\theta}}(\bm{x}) \Big| 
     \quad + \underset{m \in \mathcal{M}}{\sup} \Big| \frac{1}{n} \sum_{i = 1}^n m(\bm{X}_i)  - \int_{\bm{x} }  m(\bm{x}) d\mathbbm{P} (\bm{x}) \Big|,
\end{align*}
we obtain
\begin{align*}
\mathbbm{P}_{\bm{\theta}_0}\left( \sup_{\bm{\theta} \in \Theta} \left| d_{\mathcal{M}} (\mathbbm{P}_{t,n}^{\bm{\theta}}, \mathbbm{P}_{n}) - d_{\mathcal{M}} (\mathbbm{P}_{t}^{\bm{\theta}}, \mathbbm{P}) \right|  > \frac{\delta}{4} \right)
\leq & \mathbbm{P}_{\bm{\theta}_0}\left( \sup_{\bm{\theta} \in \Theta}  \underset{m \in \mathcal{M}}{\sup} \Big| \sum_{i : T_i = 0} w_{t,i}^{\bm{\theta}} m(\bm{X}_i) - \int_{\bm{x} }  m(\bm{x})d \mathbbm{P}_t^{\bm{\theta}}(\bm{x}) \Big| > \frac{\delta}{8}  \right) \\
&+  \mathbbm{P}_{\bm{\theta}_0}\left( \underset{m \in \mathcal{M}}{\sup} \Big| \frac{1}{n} \sum_{i = 1}^n m(\bm{X}_i)  - \int_{\bm{x} }  m(\bm{x}) d\mathbbm{P} (\bm{x}) \Big|  > \frac{\delta}{8}  \right) \\
\to & 0
\end{align*}
by Assumption \ref{assum_mfinite} and Lemma \ref{lemma2}.

To sum up, (\ref{inprob_ATE1}) converges to $0$, which implies that $\widehat{\bm{\theta}}_{t,n}$ converges to $\bm{\theta}_0$ in probability.
Hence, by Assumption \ref{assum_fconti}, we obtain 
\begin{align*}
\widehat{\operatorname{ATE}}^{\widehat{\bm{w}}_{0}, \widehat{\bm{w}}_{1}}
=& \sum_{i : T_i = 1}   \left(\frac{1}{n} + \frac{n_0}{n}\frac{\exp(-f(\bm{X}_i ; \widehat{\bm{\theta}}_{1,n}))}{ \sum_{i : T_i = 1} \exp(-f(\bm{X}_i ; \widehat{\bm{\theta}}_{1,n}))}\right) Y_i- 
\sum_{i : T_i = 0}
\left(\frac{1}{n} + \frac{n_1}{n}\frac{\exp(f(\bm{X}_i ; \widehat{\bm{\theta}}_{0,n}))}{ \sum_{i : T_i = 0} \exp({f(\bm{X}_i ; \widehat{\bm{\theta}}_{0,n}))}}\right) Y_i \\
\to& \mathbb{E}\left(Y(1)\right) - \mathbb{E}\left(Y(0)\right) \\
=& \operatorname{ATE}.
\end{align*}
\end{proof}

\paragraph{Nonparametric CBIPM for ATE}
We consider
\begin{align*}
     \mathcal{W}_0^N (B) :=& \Big\{\bm{w}_0 \in \mathcal{W}_0^{+} : \max_{i \in [n]} w_i \leq \frac{B}{n_0} \Big\}, \\
     \mathcal{W}_1^N (B) :=& \Big\{\bm{w}_1 \in \mathcal{W}_1^{+} : \max_{i \in [n]} w_i \leq \frac{B}{n_1} \Big\},     
\end{align*}
where  $B > 0$ is a sufficiently large number such that $1/\eta \leq B.$ 
Then, we solve
\begin{align}
\begin{split}
    \widehat{\bm{w}}_{0} = \underset{\bm{w}_0 \in \mathcal{W}_0^N (B)}{\operatorname{argmin}} \  d_{\mathcal{M}}(\mathbbm{P}_{0,n}^{\bm{w}_0}, \mathbbm{P}_{n} ), \\
    \widehat{\bm{w}}_{1} = \underset{\bm{w}_1 \in \mathcal{W}_1^N (B)}{\operatorname{argmin}} \  d_{\mathcal{M}}(\mathbbm{P}_{1,n}^{\bm{w}_1}, \mathbbm{P}_{n} ), \label{solve_N_CBIPM_ATE}
\end{split}
\end{align}
and estimate the ATE by (\ref{ATE_estimator}).

\begin{assumpM} ($\mathcal{M}$ dominates $\mathcal{M}_0$) \label{assum_msub_ATE}
    There exists a class $\mathcal{M}_0$ of outcome regression models including 
    $m_0(\cdot)$ and $m_1(\cdot)$ such that
    $d_{\mathcal{M}_0}(\cdot, \cdot) \lesssim d_{\mathcal{M}}(\cdot, \cdot)$.
\end{assumpM}

\begin{theorem} \label{theorem2_ATE}
Consider $\mathcal{M}$ which satisfies Assumptions \ref{assum_mfinite} and \ref{assum_msub_ATE}.
Then for $\widehat{\bm{w}}_{0}$ and $\widehat{\bm{w}}_{1}$ defined by (\ref{solve_N_CBIPM_ATE}),
\begin{align*}
    \widehat{\operatorname{ATE}}^{\widehat{\bm{w}}_{0}, \widehat{\bm{w}}_{1}} \to \operatorname{ATE}.
\end{align*}
in probability.
\end{theorem}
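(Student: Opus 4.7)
The plan is to follow the structure of the ATT proof (Theorem \ref{theorem2}) and apply it twice, once for each weight vector, since the ATE optimization decouples into two independent problems. First, I would construct oracle reference weights inspired by the true inverse propensity weights. Define $u_0(\bm{x}) := 1/(1-\pi(\bm{x}))$ and $u_1(\bm{x}) := 1/\pi(\bm{x})$, and set
\begin{align*}
w_{0,i}^{0} &:= \frac{\mathbb{I}(T_i=0)\, u_0(\bm{X}_i)}{\sum_{j:T_j=0} u_0(\bm{X}_j)}, \qquad w_{1,i}^{0} := \frac{\mathbb{I}(T_i=1)\, u_1(\bm{X}_i)}{\sum_{j:T_j=1} u_1(\bm{X}_j)}.
\end{align*}
Under strict overlap $u_0,u_1\in[1,1/\eta]$, and since the denominators are bounded below by $n_0$ and $n_1$ respectively, one checks $n_t w_{t,i}^0\le 1/\eta\le B$, so $\bm{w}_t^0\in\mathcal{W}_t^N(B)$ for $t\in\{0,1\}$.

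Next, I would show that $d_{\mathcal{M}}(\mathbb{P}_{t,n}^{\bm{w}_t^0},\mathbb{P}_n)\to 0$ in probability for each $t$. Under ignorability,
\begin{align*}
\mathbb{E}[\mathbb{I}(T=0)\,u_0(\bm{X})\,m(\bm{X})] = \mathbb{E}\!\left[(1-\pi(\bm{X}))\cdot\tfrac{m(\bm{X})}{1-\pi(\bm{X})}\right] = \mathbb{E}[m(\bm{X})],
\end{align*}
and $\mathbb{E}[\mathbb{I}(T=0)\,u_0(\bm{X})]=1$, with the analogous identities for $t=1$. Writing $\sum_{i:T_i=0}w_{0,i}^0 m(\bm{X}_i)$ as a ratio of $n^{-1}$-scaled sums and invoking Lemma \ref{lemma2} on the function class $\{\mathbb{I}(T=0)u_0(\bm{X})m(\bm{X}): m\in\mathcal{M}\}$ (which inherits finite bracketing numbers from Assumption \ref{assum_mfinite} because $u_0$ is uniformly bounded), I obtain uniform convergence of the ratio to $\mathbb{E}[m(\bm{X})]$. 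Combined with Lemma \ref{lemma2} applied directly to $\mathcal{M}$ for the unweighted average $n^{-1}\sum_i m(\bm{X}_i)$, this yields $d_\mathcal{M}(\mathbb{P}_{0,n}^{\bm{w}_0^0},\mathbb{P}_n)\to 0$ in probability, and analogously for $t=1$.

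Since $\widehat{\bm{w}}_t$ minimizes the IPM over $\mathcal{W}_t^N(B)$, we have $d_\mathcal{M}(\mathbb{P}_{t,n}^{\widehat{\bm{w}}_t},\mathbb{P}_n) \le d_\mathcal{M}(\mathbb{P}_{t,n}^{\bm{w}_t^0},\mathbb{P}_n)\to 0$. Assumption \ref{assum_msub_ATE} then gives $d_{\mathcal{M}_0}(\mathbb{P}_{t,n}^{\widehat{\bm{w}}_t},\mathbb{P}_n)\le \xi(d_\mathcal{M}(\mathbb{P}_{t,n}^{\widehat{\bm{w}}_t},\mathbb{P}_n))\to 0$. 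Because $m_0,m_1\in\mathcal{M}_0$, this controls the balancing error:
\begin{align*}
\bigl|\operatorname{Er}_{\text{bal}}^{\widehat{\bm{w}}_0,\widehat{\bm{w}}_1}\bigr|\le d_{\mathcal{M}_0}(\mathbb{P}_{0,n}^{\widehat{\bm{w}}_0},\mathbb{P}_n) + d_{\mathcal{M}_0}(\mathbb{P}_{1,n}^{\widehat{\bm{w}}_1},\mathbb{P}_n)\to 0.
\end{align*}
For the observation error, conditioning on $\mathcal{C}^{(n)}=\{(\bm{X}_i,T_i)\}_{i=1}^n$, ignorability gives $\mathbb{E}(\operatorname{Er}_{\text{obs}}^{\widehat{\bm{w}}_0,\widehat{\bm{w}}_1}\mid\mathcal{C}^{(n)})=0$, while the constraint $\widehat{w}_{t,i}\le B/n_t$ together with uniformly bounded conditional variances of $Y(t)$ gives $\operatorname{Var}(\operatorname{Er}_{\text{obs}}^{\widehat{\bm{w}}_0,\widehat{\bm{w}}_1}\mid\mathcal{C}^{(n)})=O(1/\min(n_0,n_1))$, hence $\operatorname{Er}_{\text{obs}}^{\widehat{\bm{w}}_0,\widehat{\bm{w}}_1}\to 0$ in probability. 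Together with $\operatorname{SATE}-\operatorname{ATE}\to 0$, the decomposition (\ref{decom_ATE}) delivers $\widehat{\operatorname{ATE}}^{\widehat{\bm{w}}_0,\widehat{\bm{w}}_1}\to\operatorname{ATE}$ in probability.

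The main obstacle I anticipate is the preservation of the bracketing condition under multiplication by the bounded weight factor $\mathbb{I}(T=0)u_0(\bm{X})$ (and its $t=1$ analogue), since Lemma \ref{lemma2} is stated only for a generic class and the original Assumption \ref{assum_mfinite} concerns $\mathcal{M}$ itself; this step requires verifying that an $\epsilon$-bracket $[m^L,m^U]$ for $\mathcal{M}$ produces an $(\epsilon/\eta)$-bracket $[\mathbb{I}(T=0)u_0 m^L, \mathbb{I}(T=0)u_0 m^U]$ for the enlarged class, which is routine but must be stated carefully. The remaining pieces--algebraic rearrangement of weighted sums and the variance bound for $\operatorname{Er}_{\text{obs}}$--are essentially transcriptions of the ATT argument and do not introduce new difficulties.
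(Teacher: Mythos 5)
Your proposal is correct and follows essentially the same route as the paper's proof: the same oracle reference weights $w_{t,i}^0\propto \mathbb{I}(T_i=t)u_t(\bm{X}_i)$, the same feasibility check via strict overlap, the same population identities $\mathbb{E}[\mathbb{I}(T=t)u_t(\bm{X})m(\bm{X})]=\mathbb{E}[m(\bm{X})]$ combined with Lemma \ref{lemma2} to drive $d_{\mathcal{M}}(\mathbb{P}_{t,n}^{\bm{w}_t^0},\mathbb{P}_n)\to 0$, the same use of Assumption \ref{assum_msub_ATE} on the balancing error, and the same conditional mean/variance argument for the observation error. The one point where you go beyond the paper is in explicitly flagging that the bracketing condition must be verified for the weighted class $\{\mathbb{I}(T=t)u_t m : m\in\mathcal{M}\}$, a detail the paper leaves implicit.
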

\begin{proof}

Let $u_0(\cdot) := \frac{1}{1-\pi(\cdot)}$ and $u_1(\cdot) := \frac{1}{\pi(\cdot)}$. 
For given $\mathcal{D}_n$ and $t \in \{ 0,1 \}$, we define
$\bm{w}_t^{0} = (w_{t,1}^{0}, \dots, w_{t,n}^{0})^{\top}$ as 
\begin{align*}
    w_{t,i}^{0} = \frac{\mathbb{I}(T_i = t) u_t(\bm{X}_i)}{ \sum_{i : T_i = t} {u_t(\bm{X}_i)}}, && i \in [n].
\end{align*}
Since
$$ 1 < \frac{1}{1-\eta} \leq u_t(\cdot) \leq \frac{1}{\eta},$$
$\bm{w}_t^{0} \in \mathcal{W}_t^N (B)$ holds. By definition of $\widehat{\bm{w}}_t$,
\begin{align}
    d_{\mathcal{M}}(\mathbbm{P}_{t,n}^{\widehat{\bm{w}}_t}, \mathbbm{P}_{n}) \leq  d_{\mathcal{M}}(\mathbbm{P}_{t,n}^{\bm{w}_t^{0}}, \mathbbm{P}_{n}). \label{smallIPM_ATE}
\end{align}

For any $m(\cdot)\in \mathcal{M}$ we have
\begin{align}
\sup_{m\in \mathcal{M}} 
\left| \sum_{i= 1}^n \frac{m(\bm{X}_i)}{n} - 
\mathbb{E}(m(\bm{X})) \right| \to 0 \label{asconv1_ATE}
\end{align}
by Lemma \ref{lemma2}.
Also, since
\begin{align*}
\mathbb{E} (\mathbb{I}(T = t) u_t (\bm{X}) m(\bm{X}) ) 
= \mathbb{E} ( m(\bm{X}))
\end{align*}
and
\begin{align*}
\mathbb{E} (\mathbb{I}(T = t) u_t (\bm{X})) = 1,
\end{align*}
we obtain
\begin{align*}
    \sum_{i : T_i = t} w_{t,i}^{0} m(\bm{X}_i)
=& \frac{\sum_{i : T_i = t} u_t(\bm{X}_i) m(\bm{X}_i)}{\sum_{i : T_i = t} u_t(\bm{X}_i)}\\
=& \frac{\sum_{i=1}^n \mathbb{I}(T=t) u_t(\bm{X}_i) m(\bm{X}_i)}{\sum_{i=1}^n \mathbb{I}(T=t) u_t(\bm{X}_i)}\\
= & \frac{\frac{1}{n}\sum_{i=1}^n \mathbb{I}(T=t) u_t(\bm{X}_i) m(\bm{X}_i)}{\frac{1}{n}\sum_{i=1}^n \mathbb{I}(T=t) u_t(\bm{X}_i)} \\
\to & \frac{\mathbb{E} (\mathbb{I}(T = t) u_t(\bm{X}) m(\bm{X}) ) }{\mathbb{E} (\mathbb{I}(T = t) u_t(\bm{X})) } \\
= & \mathbb{E}(m(\bm{X})). 
\end{align*}
Hence, 
\begin{align}
    \sup_{m\in \mathcal{M}} 
\left| \sum_{i : T_i = t} w_{t,i}^{0} m(\bm{X}_i) - 
\mathbb{E}(m(\bm{X})) \right| \to 0 \label{asconv2_ATE}
\end{align}
holds by Lemma \ref{lemma2}.

To sum up, for $t \in \{ 0,1 \}$,
\begin{align*}
    d_{\mathcal{M}}(\mathbbm{P}_{t,n}^{\widehat{\bm{w}}}, \mathbbm{P}_{n}) \leq & d_{\mathcal{M}}(\mathbbm{P}_{t,n}^{\bm{w}_t^{0}}, \mathbbm{P}_{n}) \\
    = & \sup_{m\in \mathcal{M}} 
 \left| \sum_{i : T_i = 0} w_{t,i}^{0} m(\bm{X}_i) - \sum_{i = 1}^n \frac{m(\bm{X}_i)}{n} \right| \\
\to &  0 
\end{align*}
holds by (\ref{smallIPM_ATE}), (\ref{asconv1_ATE}) and (\ref{asconv2_ATE}).
Since $d_{\mathcal{M}_0}(\cdot, \cdot) \lesssim d_{\mathcal{M}}(\cdot, \cdot)$ by Assumption \ref{assum_msub}, we obtain
\begin{align}
\operatorname{Er}_{\text{bal}}^{\widehat{\bm{w}}_0, \widehat{\bm{w}}_1}
= & \left( \sum_{i= 1}^n \frac{m_0 (\bm{X}_i)}{n} - \sum_{i : T_i = 0} \widehat{w}_{0,i} m_0 (\bm{X}_i) \right) 
     - \left(\sum_{i= 1}^n \frac{m_1 (\bm{X}_i)}{n} - \sum_{i : T_i = 1} \widehat{w}_{1,i} m_1 (\bm{X}_i) \right) \nonumber\\
\leq & d_{\mathcal{M}_0}(\mathbbm{P}_{0,n}^{\widehat{\bm{w}}_0}, \mathbbm{P}_{n})
+ d_{\mathcal{M}_0}(\mathbbm{P}_{1,n}^{\widehat{\bm{w}}_1}, \mathbbm{P}_{n}) \nonumber \\
\leq & \xi(d_{\mathcal{M}}(\mathbbm{P}_{0,n}^{\widehat{\bm{w}}_0}, \mathbbm{P}_{n}))
+ \xi(d_{\mathcal{M}}(\mathbbm{P}_{1,n}^{\widehat{\bm{w}}_1}, \mathbbm{P}_{n})) \nonumber \\
\to & 0. \label{balzero_ATE}
\end{align}

Let $\mathcal{C}^{(n)} = \{(\bm{X}_i, T_i)\}_{i=1}^n$.
Since $\widehat{w}_{t,i}$ is a measurable function of $\mathcal{C}^{(n)}$ for every $i \in [n]$ and $t \in \{ 0,1 \}$, we can easily show
\begin{align} \operatorname{Er}_{\text{obs}}^{\widehat{\bm{w}}_0, \widehat{\bm{w}}_1} =&
-  \sum_{i : T_i = 0} w_{0,i} (Y_i - m_0(\bm{X}_i)) + \sum_{i : T_i = 1} w_{1,i} (Y_i - m_1(\bm{X}_i)) 
\to 0 \label{obszero_ATE}
\end{align}
in probability, by following the proof of (\ref{obszero}).
Now, by (\ref{balzero_ATE}), (\ref{obszero_ATE}) and the fact $\operatorname{SATE} \to \operatorname{ATE}$ in probability, we conclude $\widehat{\operatorname{ATE}}^{\widehat{\bm{w}}_0 , \widehat{\bm{w}}_1} \to \operatorname{ATE}$ in probability.

\end{proof}

\subsection{Experiments}
\begin{table*}[h]
\renewcommand{\arraystretch}{1.3}
\caption{\textbf{Kang-Schafer example with the linear/nonlinear propensity scores, for the ATE.} 
\label{table_ATE}
We generate 1000 simulations and report the bias and RMSE as the performance measures.
For each pair of dataset and performance measure (i.e. for each row), the two best results are marked by bold letters.}
\centering
\scalebox{1.0}{\small
\begin{tabular}{|c|c|c|ccc|ccc|ccc|}
\hline
\multirow{2}{*}{$\operatorname{logit}(\pi(\cdot))$} & \multirow{2}{*}{Measure} &  \multirow{2}{*}{n} & \multicolumn{3}{c|}{Existing methods} & \multicolumn{3}{c|}{P-CBIPM} & \multicolumn{3}{c|}{N-CBIPM} \\
   \cline{4-12}
& &   & GLM & Boost & \multicolumn{1}{c|}{CBPS} & Wass & MMD & \multicolumn{1}{c|}{SIPM} & Wass & MMD & SIPM \\
    \hline \hline 
\multirow{4}{*}{Linear} & \multirow{2}{*}{Bias}  & 200 & 0.287 & 0.873 & -0.108 & -0.208 & \textbf{0.016} & \textbf{0.093}  & -0.222 & 0.638 & -0.847 \\
\cline{3-12}
& & 1000 & -0.059 & 0.497 & -0.069 & \textbf{-0.053} & -0.011 & \textbf{-0.015} & -0.409 & 0.448 & -1.323 \\
\cline{2-12}
& \multirow{2}{*}{RMSE} & 200 & 3.387 & 3.037 & 2.885 & 2.793 & 2.635 & 2.652 & \textbf{2.622} & \textbf{2.548} & 2.934 \\
\cline{3-12}
& & 1000 & 1.994 & \textbf{1.190} & 1.408 & 1.240 & 1.219 & 1.209 & 1.261 & \textbf{1.063} & 1.874 \\
\hline \hline
\multirow{4}{*}{Nonlinear} & \multirow{2}{*}{Bias}  & 200 & \textbf{-1.848} & -10.624 & -5.116 & -4.934 & -4.875 & -4.929 & -4.563 & \textbf{-2.568} & -3.461  \\
\cline{3-12}
& & 1000 & \textbf{1.606} & -7.585 & -6.012 & -5.325 & -5.261 & -5.101 & -4.707 & \textbf{-2.827} & -3.737  \\
\cline{2-12}
& \multirow{2}{*}{RMSE} & 200 & 9.462 & 11.101 & 5.995 & 5.582 & 5.605 & 5.634 & 5.305 & \textbf{3.720} & \textbf{4.446} \\
\cline{3-12}
& & 1000 & 11.079 & 7.680 & 6.268 & 5.531 & 5.447 & 5.263 & 4.874 & \textbf{3.026} & \textbf{4.028} \\
\hline
\end{tabular}
}
\end{table*}

\newpage

Table \ref{table_ATE} presents the bias and RMSE for the ATE estimators. Generally, the results are similar 
to those for the ATT. An exception is that the biases of GLM for the nonlinear propensity score are small, which
we think occurs by chance since the RMSEs are very large.
In addition, we draw the boxplots of the estimated ATE values in Figure \ref{proposal_compare}
as the compliments to the results of Table \ref{table_ATE}.

\begin{figure}[h]
\centering
\subfigure[Linear $\operatorname{logit}(\pi(\cdot))$, $n=200$]{\includegraphics[width=0.48\linewidth]{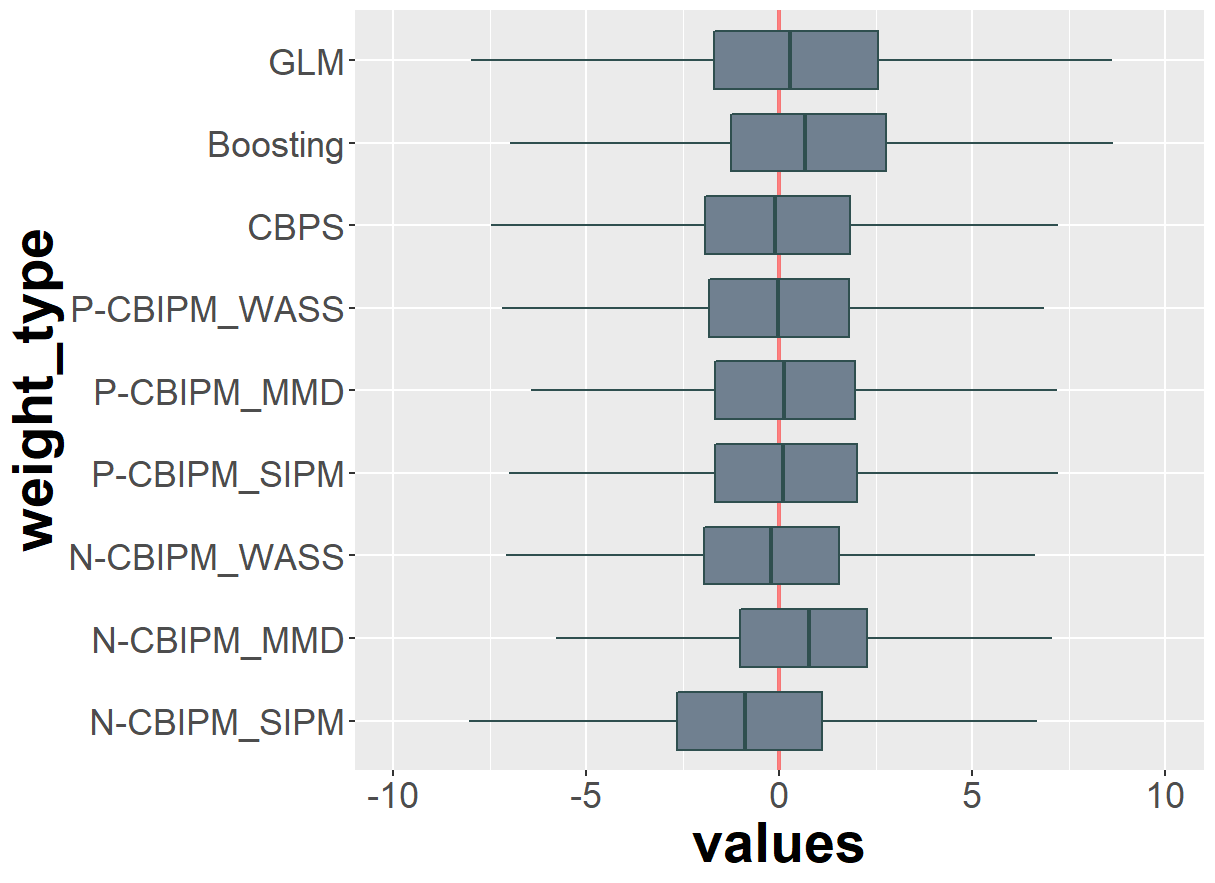}}
\hfill
\subfigure[Linear $\operatorname{logit}(\pi(\cdot))$, $n=1000$]{\includegraphics[width=0.48\linewidth]{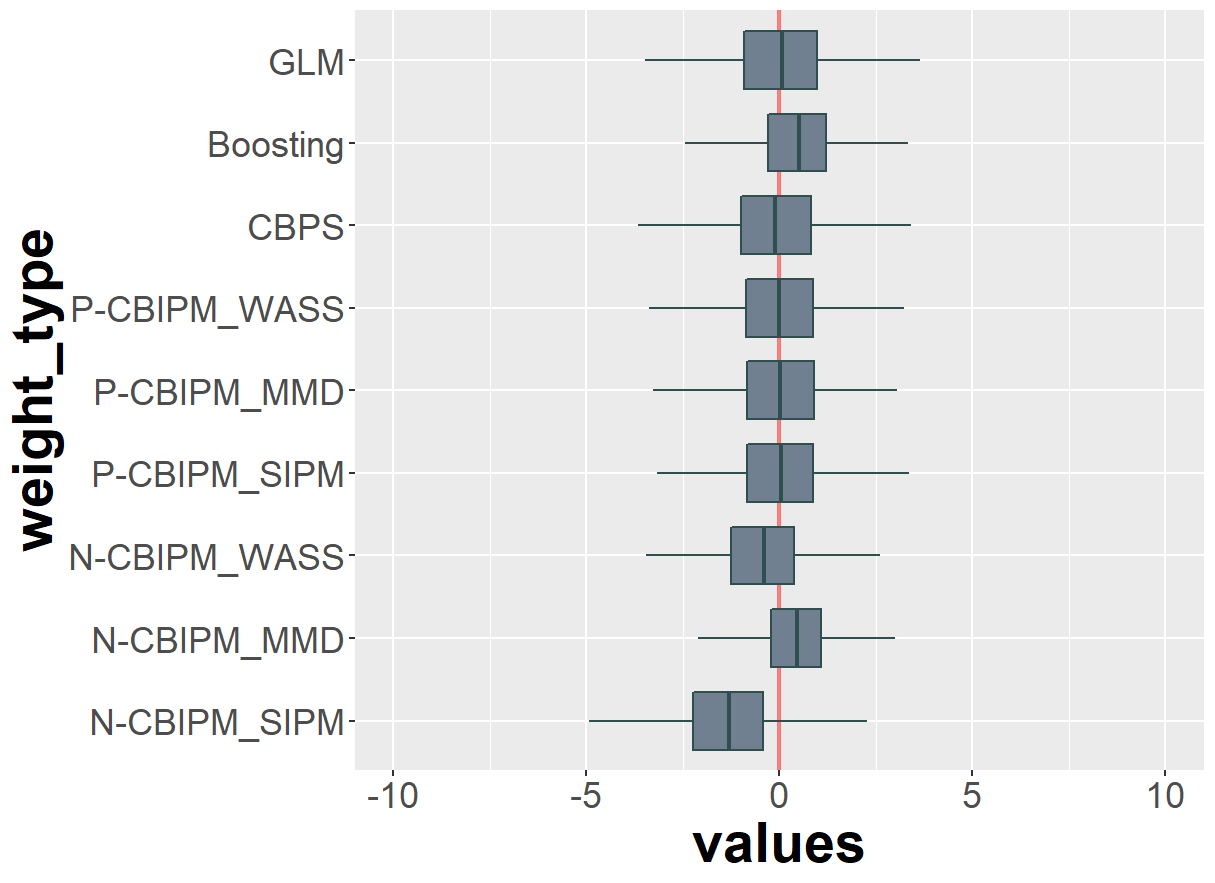}}
\hfill
\subfigure[Nonlinear $\operatorname{logit}(\pi(\cdot))$, $n=200$]{\includegraphics[width=0.48\linewidth]{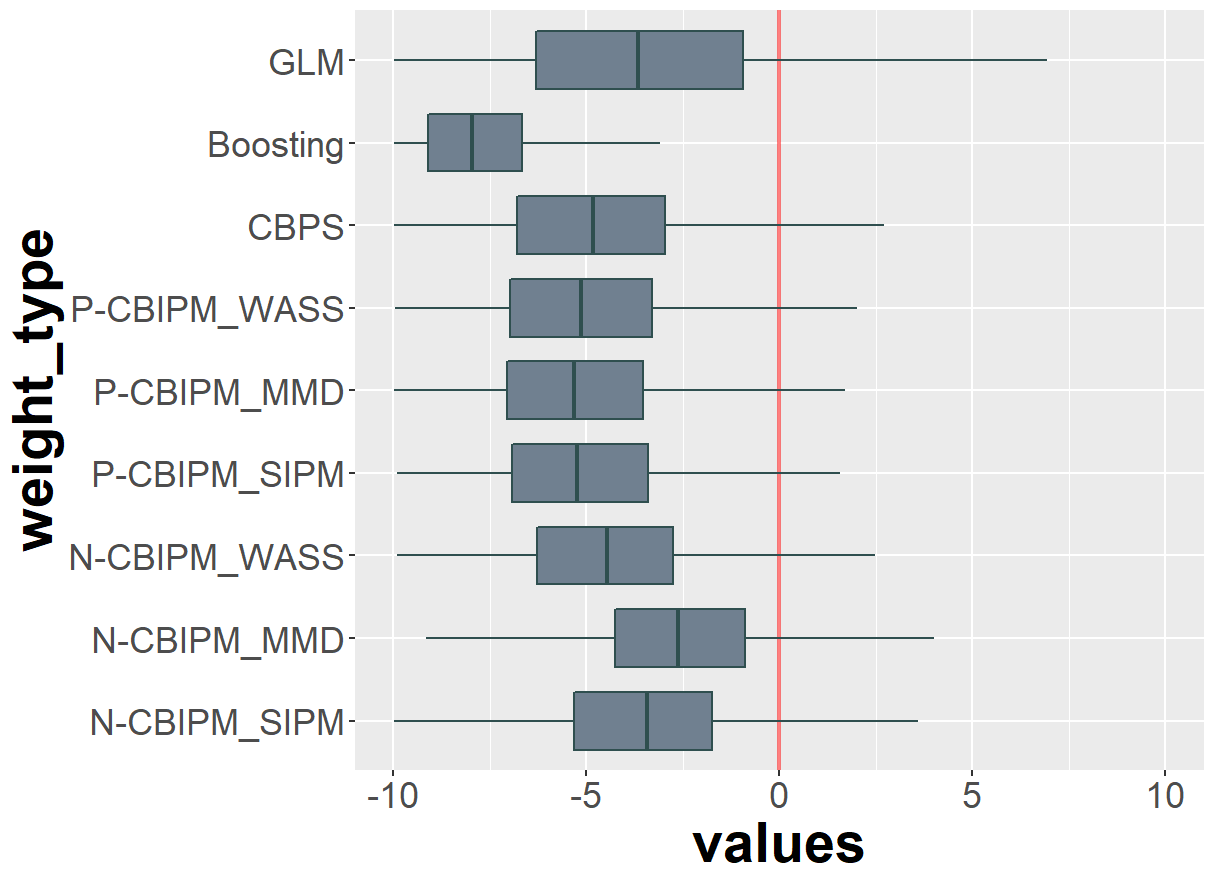}}
\hfill
\subfigure[Nonlinear $\operatorname{logit}(\pi(\cdot))$, $n=1000$]{\includegraphics[width=0.48\linewidth]{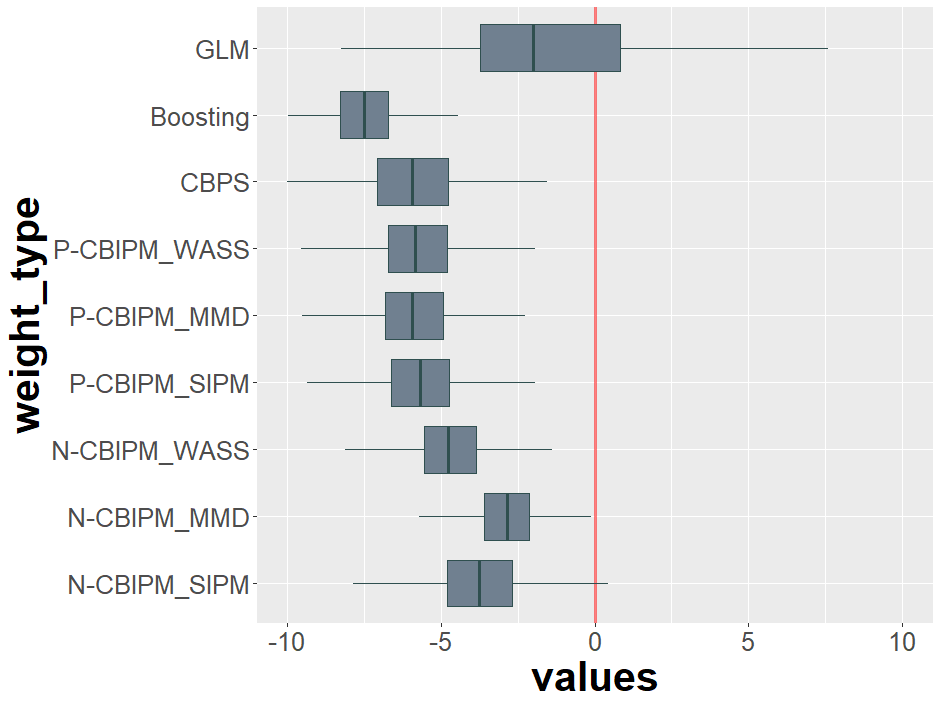}}
\caption{\textbf{Boxplots of the estimated ATE values on  Kang-Schafer example.} 
The boxplots are drawn based on the ATE estimates obtained from 1000 simulated datasets for the Kang-Schafer example. 
X-axis and Y-axis represent the estimated ATE values and the weighting methods, respectively.
} \label{proposal_compare}
\end{figure}

\newpage
\section{Experimental details} \label{app_C}
We use R (ver. 4.0.2), Python (ver. 3.6), and NVIDIA TITAN Xp GPUs to obtain the estimates of the ATT and the ATE. 
Detailed settings for each method are as follows:

\paragraph{GLM} We use linear logistic regression, where the regression coefficients are estimated by the MLE.

\paragraph{Boosted CART}
We follow the experimental setting in \citet{lee2010improving}.
To be more specific, we use twang package \cite{ridgeway2017toolkit}, with 20,000 iterations and the shrinkage parameter of 0.0005, with the iteration stopping point that minimizes the mean of the Kolmogorov-Smirnov test statistics.

\paragraph{CBPS} CBPS is implemented using CBPS package \cite{fong2022package} with the default parameters.

\paragraph{EB} EB is implemented using EB package \cite{hainmueller2022package} with the default parameters.

\paragraph{The CBIPM with the Wassersterin distance} 
We implement the CBIPM with the Wasserstein distance using techniques suggested by \citet{arjovsky2017wasserstein} and \citet{gulrajani2017improved}. 
    More specifically, we randomly sample $\tilde{\bm{x}}_1, \dots, \tilde{\bm{x}}_R$ uniformly from along straight lines between pairs of points sampled from treated and control groups.
Then, on these samples, we calculate the gradient of the discriminator with respect to its input and penalize the norm of the gradient to go towards $1$.
Formula of $\mathcal{L}_{\textup{adv}}(\bm{\theta}, \bm{\psi})$ and $\mathcal{L}(\bm{\theta}, \bm{\psi})$ for Algorithm \ref{algATT} is as follow: 
\begin{align*}
\bm{x}^{(c)}_{r} &\sim \mathbbm{P}_{0,n}, \ \bm{x}^{(t)}_{r} \sim  \mathbbm{P}_{1,n}, \ u_r \sim \mathcal{U}(0,1), \ \ \ \ \quad r \in [R],\\
\tilde{\bm{x}}_r &:= u_r \bm{x}^{(c)}_{r} + (1-u_r) \bm{x}^{(t)}_{r}, \qquad \qquad \qquad  r \in [R],\\
\mathcal{L}_{\textup{adv}}(\bm{\theta}, \bm{\psi}) &:= \ell_n(\bm{\theta}, \bm{\psi}) 
- \frac{\tau}{R} \sum_{r=1}^R ( || \nabla_{\tilde{\bm{x}}_r} 
 m(\tilde{\bm{x}}_r ; \bm{\psi}) ||_2 -1 )^2, \\
\mathcal{L}(\bm{\theta}, \bm{\psi}) &:= \ell_n(\bm{\theta}, \bm{\psi}),
\end{align*}
where $m(\cdot ; \bm{\psi})$ is a neural network parameterized by $\bm{\psi},$ and $\tau$ is regularization parameters. 
For both the P-CBIPM and the N-CBIPM, we use a neural network with 100 hidden nodes with leaky relu. 
We use Adam \cite{kingma2014adam} optimizer with $\textup{lr} = 0.03$ and $T=1000$ for gradient descent steps, and Adam optimizer with $\textup{lr}_{\textup{adv}} = 0.3$, $T_{\text{adv}} = 5$ for gradient ascent steps.
$\tau = 0.3$ and $R=100$ are used.    
For additional stability, we clip the weights and biases of $m(\cdot ; \bm{\psi})$ to $0.1$ after each gradient ascent step.

\paragraph{The CBIPM with MMD}   
RBF kernel $k_{\gamma}: \mathbb{R}^d \times \mathbb{R}^d \to \mathbb{R}$ is defined as
\begin{align*}
    k_{\gamma}(\bm{x},\bm{x}^{\prime}) = \exp\left(-\frac{\|\bm{x} - \bm{x}^{\prime}\|_2^2}{\gamma^2}\right).
\end{align*}
For $\bm{w} = (w_1 , \dots, w_n)^{\top} \in \mathcal{W}^{+}$, $d_{\mathcal{M}_{k_\gamma,B}}(\mathbbm{P}_{0,n}^{\bm{w}}, \mathbbm{P}_{1,n} )^2$ is expressed as
\begin{align*}
d_{\mathcal{M}_{k_\gamma,B}}(\mathbbm{P}_{0,n}^{\bm{w}}, \mathbbm{P}_{1,n} )^2 = \sum_{i : T_i = 0 } \sum_{j : T_j = 0 } w_i w_j k_{\gamma}(\bm{X}_i , \bm{X}_j) 
- 2 \sum_{i : T_i = 0 } \sum_{j : T_j = 1 } \frac{w_i}{n_1} k_{\gamma}(\bm{X}_i , \bm{X}_j)
+ \sum_{i : T_i = 1 } \sum_{j : T_j = 1 } \frac{1}{n_1^2}  k_{\gamma}(\bm{X}_i , \bm{X}_j).
\end{align*}

In turn, formula of $\mathcal{L}(\bm{\theta}, \bm{\psi})$ for Algorithm \ref{algATT} is
\begin{align*}
\mathcal{L}(\bm{\theta}, \bm{\psi}) = \mathcal{L}(\bm{\theta}) = d_{\mathcal{M}_{k_\gamma,B}}(\mathbbm{P}_{0,n}^{\bm{w}(\bm{\theta})}, \mathbbm{P}_{1,n} )^2,
\end{align*}
and we don't need line $2\sim4 $ of Algorithm \ref{algATT}.
We use Adam optimizer with $\textup{lr} = 0.03$, $T=1000$ for gradient descent steps, and $\gamma = 10$.

\paragraph{The CBIPM with the SIPM}
Since $\mathcal{M}_{sig}$ is a parameter family, we can simply iterate gradient ascent steps and descent steps.  
However, we confirm that mode collapse \cite{salimans2016improved, che2019mode} occurs frequently when we use the sigmoid IPM for covariate balancing.
To resolve this difficulty, we ensemble the multiple discriminators.

In the gradient ascent step, we update $\bm{\psi} = (\bm{\psi}_1 , \dots, \bm{\psi}_S)$ using the sum of the loss functions.
In the gradient descent step, we update $\bm{\theta}$ using the discriminator which has the highest loss value.
The formula of $\mathcal{L}_{\textup{adv}}(\bm{\theta}, \bm{\psi})$ and $\mathcal{L}(\bm{\theta}, \bm{\psi})$ for Algorithm \ref{algATT} are
\begin{align*}
\mathcal{L}_{\textup{adv}}(\bm{\theta}, \bm{\psi}) :=& \sum_{s=1}^S \ell_n (\bm{\theta}, \bm{\psi}_s), \\
\mathcal{L}(\bm{\theta}, \bm{\psi}) :=& \max_{s\in[S]} \ell_n (\bm{\theta}, \bm{\psi}_s),
\end{align*}
where $\bm{\psi}_s = (\bm{\rho}_s^{\top},\mu_s)^{\top}$ and
$m(\bm{x} ; \bm{\psi}_s) = \sigma(\bm{\rho}_s^\top \bm{x} +\mu_s)$.

In Appendix \ref{sec_nsipm}, we show that using multiple discriminators dramatically improves the accuracies without increasing computing time much. 
That's because discriminators $m(\cdot ; \bm{\psi}_1), \dots, m(\cdot ; \bm{\psi}_S) \in \mathcal{M}_{sig}$ can be expressed using single linear function from $\mathbb{R}^d$ to $\mathbb{R}^S$, and hence parallel calculations using gpu can be used.

For P-CBIPM, we use Adam optimizer with $lr=0.03$, $T=1000$ for gradient descent steps, SGD optimizer with $\textup{lr}_{\textup{adv}} = 0.01$, and $T_{\text{adv}} = 1$ for gradient ascent steps.
For N-CBIPM, we use Adam optimizer with $lr=0.1$, $T=1000$ for gradient descent steps, SGD optimizer with $\textup{lr}_{\textup{adv}} = 1.0$, and $T_{\text{adv}} = 3$ for gradient ascent steps.
For both methods, we use $S=100$.

\newpage
\section{Comparison with doubly robust estimator}
\renewcommand{\theequation}{D.\arabic{equation}}
The weighting methods can be combined with an estimation of the outcome regression model to become doubly robust.
The augmented IPW (AIPW) \cite{robins1994estimation} is such an approach.
The AIPW first estimates the outcome regression model $\widehat{m}_0$ using only control samples and 
then applies the IPW to $Y_i- \widehat{m}_0(\bm{X}_i)$.
For general $\bm{w} \in \mathcal{W}^{+}$, the augmented estimator for the ATT can be expressed as
\begin{equation*}
    \widehat{\operatorname{ATT}}^{\bm{w}}_{\operatorname{Aug}} = \sum_{i : T_i = 1} \frac{Y_i - \widehat{m}_0(\bm{X}_i)}{n_1}-\sum_{i : T_i = 0} w_i (Y_i- \widehat{m}_0(\bm{X}_i)).
\end{equation*}
Similar with (\ref{decom}), the error of $\widehat{\operatorname{ATT}}^{\bm{w}}_{\operatorname{Aug}}$ can be decomposed as
\begin{align*}
    \widehat{\operatorname{ATT}}^{\bm{w}}_{\operatorname{Aug}} - \operatorname{ATT} 
    =  (\operatorname{err}_{\text{bal}}^{\bm{w}})_{\operatorname{Aug}} + \operatorname{err}_{\text{obs}}^{\bm{w}} + (\operatorname{SATT} - \operatorname{ATT}),
\end{align*}
where $\operatorname{err}_{\text{obs}}^{\bm{w}}$ is defined same as before and 
\begin{align*}
    (\operatorname{err}_{\text{bal}}^{\bm{w}})_{\operatorname{Aug}} =&  \sum_{i : T_i = 1} \frac{m_0 (\bm{X}_i) - \widehat{m}_0(\bm{X}_i)}{n_1}
   - \sum_{i : T_i = 0} w_i (m_0 (\bm{X}_i)-\widehat{m}_0(\bm{X}_i)).
\end{align*}
Note that $(\operatorname{err}_{\text{bal}}^{\bm{w}})_{\operatorname{Aug}}$ is close to zero no matter what $\bm{w}$ is used
when $\widehat{m}_0 \approx m_0$ and thus the augmented estimator is doubly robust. 
That is, the augmented estimator is consistent without modeling the weights correctly. In this sense,
the augmented estimator is similar to the N-CBIPM estimator.
However, the two methods work quite differently.
The key difference is that the balancing process of the CBIPM only uses $\{(X_i, T_i)\}_{i=1}^n$, 
but the outcomes $\{Y_i\}_{i=1}^n$ are also needed for augmentation. 
With pre-calculated weights only using $\{(X_i, T_i)\}_{i=1}^n$, the CBIPM can be used more flexibly 
in practice such as the time-varying outcome regression model situations. 

More important advantage of the N-CBIPM over augmentation is
that the accuracy of the N-CBIPM estimator is less influenced by the complexity
of estimating $m_0$ because the N-CBIPM does not use the outcomes when it estimates the weights.
To confirm this conjecture, we do a small experiment to compare the weighting methods with and without augmentation.
For augmentation, we use the ordinary least square estimator (OLS) and Bayesian additive regression tree
(BART) of \citet{chipman2010bart}.  To control the difficulty of estimation of $m_0,$
we use the two values (1 and 10) for the standard deviation of the noise in the Kang-Schafer example.

\begin{table*}[h!]
\renewcommand{\arraystretch}{1.3}
\caption{\textbf{Kang-Schafer example with small/large noises.} 
We generate 1000 simulations and report the bias and RMSE as the performance measures.
For each pair of dataset and performance measure (i.e. for each of three rows), the two best results are marked in bold letters.} \label{table2}
\centering
\scalebox{1.0}{\small
\begin{tabular}{|c|c|c|cccc|ccc|ccc|}
\hline
\multirow{2}{*}{std of noise} & \multirow{2}{*}{Measure} & \multirow{2}{*}{Aug} & \multicolumn{4}{c|}{Existing methods} & \multicolumn{3}{c|}{P-CBIPM} & \multicolumn{3}{c|}{N-CBIPM} \\
   \cline{4-13}
& &   & GLM & Boost & CBPS & \multicolumn{1}{c|}{EB} & Wass & MMD & \multicolumn{1}{c|}{SIPM} & Wass & MMD & SIPM \\
    \hline \hline 
\multirow{6}{*}{1} & \multirow{3}{*}{Bias}  & $\times$ & -7.233 & -8.375 & -4.745 &  -4.806 & -5.015 & -4.869 & -5.086 & -3.945 & \textbf{-2.732} & \textbf{-3.569} \\
& & OLS & -6.233 & -6.095 & -4.739 & -4.755 & -4.885 & -4.830 & -4.979 & -3.939 & \textbf{-2.686} & \textbf{-3.347} \\
& & BART & -3.868 & -3.792 & -3.688 & -3.687 & -3.716 & -3.699 & -3.721 & -3.619 & \textbf{-3.519} & \textbf{-3.582} \\
\cline{2-13}
& \multirow{3}{*}{RMSE} & $\times$ & 8.275 & 9.204 & 5.354 & 5.395 & 5.681 & 5.455 & 5.697 & 4.632 & \textbf{4.422} & \textbf{4.491} \\
& & OLS & 7.161 & 6.898 & 5.344 & 5.352 & 5.518 & 5.422 & 5.593 & 4.624 & \textbf{4.394} & \textbf{4.223} \\
& & BART & 4.746 & 4.667 & 4.520 & 4.517 & 4.544 & 4.532 & 4.554 & 4.436 & \textbf{4.397} & \textbf{4.400} \\
\hline \hline 
\multirow{6}{*}{10} & \multirow{3}{*}{Bias}  & $\times$ & -7.164 & -8.306 & -4.671 & -4.732 & -4.942 & -4.795 & -4.994 & -3.886 & \textbf{-2.739} & \textbf{-3.517}  \\
& & OLS & -6.157 & -6.005 & -4.664 & -4.681 & -4.812 & -4.755 & -4.903 & -3.88 & \textbf{-2.693} & \textbf{-3.296}\\
& & BART & -4.698 & -4.516 & -4.250 & -4.251 & -4.291 & -4.279 & -4.320 & -4.057 & \textbf{-3.802} & \textbf{-3.961} \\
\cline{2-13}
& \multirow{3}{*}{RMSE} & $\times$ & 8.387 & 9.289 & 5.575 & 5.615 & 5.886 & 5.669 & 5.897 & \textbf{4.997} & 5.237 & \textbf{5.016} \\
& & OLS & 7.294 & 7.041 & 5.566 & 5.574 & 5.733 & 5.636 & 5.795 & \textbf{4.988} & 5.212 & \textbf{4.812} \\
& & BART & 5.942 & 5.757 & 5.405 & 5.402 & 5.457 & 5.432 & 5.480 & \textbf{5.209} & 5.290 & \textbf{5.210} \\
\hline
\end{tabular}
}
\end{table*}

The results with the nonlinear propensity score
are presented in Table \ref{table2}.
while it is helpful when the variance of the noise is small,
the augmentation using BART depreciates the performance of the N-CBIPM. 
That is, augmentation is only helpful when $m_0$ is easy to estimate.
Also, note that the N-CBIPM outperforms the other weighting methods with large margins
even with augmentation.

\newpage
\section{Additional experimental results}
\subsection{Results on other simulation designs} \label{App_another}

We consider various simulations models whose results are presented in this section.

\paragraph{Kang-Schafer example with small overlap}
To verify the CBIPM also works well for a case of small overlap, we modify the Kang-Schafer example as follows.
we generate the binary treatment indicators $T_i \in \{ 0,1 \}$ from 
\begin{align*}
\mathbbm{P}(T=1|\bm{Z}_i) = \sigma(-2 Z_{i1} + Z_{i2} -0.5 Z_{i3} -0.2 Z_{i4}).
\end{align*} in Kang-Schafer example,
which is obtained by multiplying 2 to the logit of the propensity score
of the original Kang-Schafer example. 
By multiplying 2, we make the overlap between data of trained and controlled groups
smaller.
Table \ref{table_overlap} presents the bias and RMSE for the ATT and the ATE estimators
for this model, which amply show that the N-CBIPMs outperform the other methods with large margins in terms of both the bias and RMSE.

\begin{table*}[h!]
\renewcommand{\arraystretch}{1.3}
\caption{\textbf{Kang-Schafer example with small overlap} 
We generate 1000 simulations and report bias and RMSE.
For each dataset and measure, the two best values are marked in bold letters.
We omit EB because its solution often doesn't converge in the small overlap situation.} \label{table_overlap}
\centering
\scalebox{1.0}{\small
\begin{tabular}{|c|c|c|ccc|ccc|ccc|}
\hline
\multirow{2}{*}{Interest} & \multirow{2}{*}{Measure} &  \multirow{2}{*}{n} & \multicolumn{3}{c|}{Existing methods} & \multicolumn{3}{c|}{P-CBIPM} & \multicolumn{3}{c|}{N-CBIPM} \\
   \cline{4-12}
& &   & GLM & Boost & \multicolumn{1}{c|}{CBPS} & Wass & MMD & \multicolumn{1}{c|}{SIPM} & Wass & MMD & SIPM \\
    \hline \hline 
\multirow{4}{*}{ATT} & \multirow{2}{*}{Bias}  & 200 & -12.708 & -14.277 & -8.586 & -10.836 & -9.833 & -8.709 & \textbf{-6.381} & \textbf{-6.940} & -8.139\\
\cline{3-12}
& & 1000 & -13.021 & -11.590 & -6.676 & -7.780 & -7.034 & -6.836 & \textbf{-4.546} & \textbf{-5.280} & -5.303 \\
\cline{2-12}
& \multirow{2}{*}{RMSE} & 200 & 14.178 & 15.186 & 9.638 & 12.322 & 11.006 & 9.614 & \textbf{7.752} & \textbf{8.248} & 10.699 \\
\cline{3-12}
& & 1000 & 13.391 & 12.169 & 6.927 & 8.379 & 7.423 & 6.973 & \textbf{4.708} & 5.563 & \textbf{5.559} \\
\hline \hline
\multirow{4}{*}{ATE} & \multirow{2}{*}{Bias}  & 200 & \textbf{-2.683} & -18.393 & -8.292 & -9.186 & -9.517 & -9.456 & -7.766 & \textbf{-4.836} & -5.526 \\
\cline{3-12}
& & 1000 & 6.785 & -14.486 & -9.447 & -9.628 & -9.843 & -9.687 & -7.991 & \textbf{-5.041} & \textbf{-4.826} \\
\cline{2-12}
& \multirow{2}{*}{RMSE} & 200 & 14.875 & 18.783 & 9.258 & 9.917 & 10.212 & 10.175 & 8.480 & \textbf{6.015} & \textbf{6.881} \\
\cline{3-12}
& & 1000 & 20.432 & 14.586 & 9.712 & 9.800 & 10.004& 9.838 & 8.129 & \textbf{5.285} & \textbf{5.092} \\
\hline
\end{tabular}
}
\end{table*}

\paragraph{Heterogeneous treatment effect example} 
To verify that CBIPM also works well for heterogeneous treatment effects, we consider a new simulation model as follows.
We generate binary treatments $T_i$ from
$$ \mathbbm{P}(T_i=1 | \bm{X}_i) = \big( 1 + \text{exp}(-T_i^{\prime}) \big)^{-1}, $$
where $T_i ^{\prime} \sim \mathcal{N}(\mu_{i1}, 0.5)$,
\begin{align*}
\mu_{i1} \, =& \, \frac{\sin \big( \max(X_{i1}, X_{i2}, X_{i3}) \big)  +   \max(X_{i3}, X_{i4}, X_{i5})^2 }{2+(X_{i1}+X_{i5})^2} \\
&+ 4 X_{i1}^3 \sin (3 X_{i3}) \big( 1+\exp(X_{i4}-0.5 X_{i3} \big) + X_{i3}^2 + 2 (X_{i5}^2) \sin(X_{i4})  -3,
\end{align*}
and $\bm{X}_i = (X_{i1}, X_{i2}, X_{i3}, X_{i4}, X_{i5}, X_{i6})^\top \in \mathbbm{R}^6$ 
generating from $\text{Unif}(-2, 2)$. 
Also, we generate their corresponding outcomes from $Y_i \sim \mathcal{N}(\mu_{i2}, 0.1)$, where
$$ \mu_{i2} = 2(X_{i1}-2)^2 + 5 \cos(2 X_{i5}) \cdot \mathbbm{I}(T_i = 1)  + \frac{1}{X_{i2}^2+1} \cdot \frac{  \text{max}(X_{i1}, X_{i6})^3}{(1+2X_{i3}^2)} \cdot \sin(X_{i2}) + 3(2X_{i4}-1)^2.$$ 
Using Monte Carlo approximation with $10^5$ samples, we obtain the true values of the ATT and the ATE that are $-0.684$ and $-0.925$, respectively.

Table \ref{table_heter} presents the bias and RMSE for the ATT and the ATE estimators. 
For most setting, the N-CBIPMs outperform the other methods with large margins in terms of both the bias and RMSE.

\begin{table*}[h!]
\renewcommand{\arraystretch}{1.3}
\caption{\textbf{Heterogeneous treatment effect example} 
We generate 1000 simulations and report bias and RMSE.
For each dataset and measure, the two best values are marked in bold letters.
Similar to Table \ref{table_overlap}, we omit EB because its solution often does not converge.} \label{table_heter}
\centering
\scalebox{1.0}{\small
\begin{tabular}{|c|c|c|ccc|ccc|ccc|}
\hline
\multirow{2}{*}{Interest} & \multirow{2}{*}{Measure} &  \multirow{2}{*}{n} & \multicolumn{3}{c|}{Existing methods} & \multicolumn{3}{c|}{P-CBIPM} & \multicolumn{3}{c|}{N-CBIPM} \\
   \cline{4-12}
& &   & GLM & Boost & \multicolumn{1}{c|}{CBPS} & Wass & MMD & \multicolumn{1}{c|}{SIPM} & Wass & MMD & SIPM \\
    \hline \hline 
\multirow{4}{*}{ATT} & \multirow{2}{*}{Bias}  & 200 & -4.158 & -2.550 & -3.198 & -3.177 & -2.956 & -3.403 & -1.941 & \textbf{-0.033} & \textbf{-0.785}\\
\cline{3-12}
& & 1000 & -0.817 & -1.032 & -0.992 & -0.997 & -0.993 & -0.972 & \textbf{-0.318} & \textbf{0.375} & 0.987 \\
\cline{2-12}
& \multirow{2}{*}{RMSE} & 200 & 11.966 & 6.155 & 6.748 & 6.796 & 6.515 & 7.092 & \textbf{5.988} & \textbf{4.912} & 6.260 \\
\cline{3-12}
& & 1000 & 1.197 & 1.325 & 1.383 & 1.392 & 1.383 & 1.367 & \textbf{1.016} & \textbf{0.496} & 1.464 \\
\hline \hline
\multirow{4}{*}{ATE} & \multirow{2}{*}{Bias}  & 200 & -3.950 & -2.107 & -2.965 & -2.929 & -2.806 & -3.175 & -2.957 & \textbf{0.529} & \textbf{-1.046}  \\
\cline{3-12}
& & 1000 & -0.114 & -2.379 & -0.175 & -0.245 & -0.149 & -0.100 & \textbf{-0.033} & \textbf{0.044} & -0.196  \\
\cline{2-12}
& \multirow{2}{*}{RMSE} & 200 & 11.425 & \textbf{5.764} & 6.410 & 6.143 & 6.250 & 6.802 & 6.158 & \textbf{4.519} & 5.964 \\
\cline{3-12}
& & 1000 & 1.027 & 2.485 & 1.014 & 1.036 & 0.957 & \textbf{0.955} & 1.159  & \textbf{0.162} & 1.013 \\
\hline
\end{tabular}
}
\end{table*}

\newpage
\subsection{Semi-synthetic experiments} \label{App_semisyn}
We conduct semi-synthetic experiments using ACIC 2016 datasets and show the results in Table \ref{table_ACIC}. 
The ACIC 2016 datasets contain covariates, simulated treatment, and simulated response variables for the causal inference challenge in the 2016 Atlantic Causal Inference Conference \cite{dorie2019automated}. For each of 20 conditions, treatment and response data were simulated from real-world data corresponding to 4802 individuals and 58 covariates. Among 77 simulation settings, we select the last five ones and analyze 100 simulated data sets for each simulation setting. 

It is interesting to see that no IPM dominate others. While it works well for ATE, MMD is much inferior for ATT. On the other hand, SIPM is opposite (works well for ATT but not for ATE). Wasserstein IPM performs stably. The results indicate that the choice of the discriminator in the IPM is important for accurate estimation of the causal effect.

\begin{table*}[h!]
\caption{\textbf{ACIC 2016 datasets} 
For each dataset and measure, the two best values are marked in bold letters.
Similar to Table \ref{table_overlap}, we omit EB because its solution often does not converge.} \label{table_ACIC}
\begin{tabular}{|c|c|cc|ccc||cc|ccc|}
        \hline
        \multirow{3}{*}{Dataset} & \multirow{3}{*}{Measure} & \multicolumn{5}{c||}{ATT} & \multicolumn{5}{c|}{ATE}\\
        \cline{3-12}
         & & \multicolumn{2}{c|}{Existing methods} & \multicolumn{3}{c||}{N-CBIPM} & \multicolumn{2}{c|}{Existing methods} & \multicolumn{3}{c|}{N-CBIPM} \\
           \cline{3-12}
        & & GLM & \multicolumn{1}{c|}{CBPS} & Wass & MMD & \multicolumn{1}{c||}{SIPM} & GLM & \multicolumn{1}{c|}{CBPS} & Wass & MMD & \multicolumn{1}{c|}{SIPM} \\
        \hline \hline 
         \multirow{2}{*}{1} & \multirow{1}{*}{Bias} & 0.432 & 0.468 & \textbf{0.368} & 0.466 & \textbf{0.425} & 0.433 & 0.455 & \textbf{0.366} & \textbf{0.379} & 0.391\\
         \cline{2-12}
         & \multirow{1}{*}{RMSE} & 0.712 & 0.740 & \textbf{0.601} & 0.699 & \textbf{0.676} & 0.695 & 0.716 & \textbf{0.592} & \textbf{0.570} & 0.606\\
         \hline
         \multirow{2}{*}{2} & \multirow{1}{*}{Bias} & 0.100 & 0.102 & \textbf{0.095} & 0.128 & \textbf{0.095} & 0.109 & \textbf{0.056} & \textbf{0.098} & 0.107 & 0.106 \\
         \cline{2-12}
         & \multirow{1}{*}{RMSE} & 0.368 & 0.354 & \textbf{0.315} & 0.330 & \textbf{0.324} & 0.345 & 0.669 & \textbf{0.287} & \textbf{0.270} & 0.292\\
         \hline
         \multirow{2}{*}{3} & \multirow{1}{*}{Bias} & 0.248 & 0.272 & \textbf{0.219} & 0.253 & \textbf{0.234} &  0.223 & 0.252 & \textbf{0.192} & \textbf{0.189} & 0.202 \\
         \cline{2-12}
         & \multirow{1}{*}{RMSE} & 0.625 & 0.648 & \textbf{0.545} & 0.586 & \textbf{0.566} & 0.541 & 0.586 & \textbf{0.450} & \textbf{0.440} & 0.455 \\
         \hline
         \multirow{2}{*}{4} & \multirow{1}{*}{Bias} & 0.294 & 0.305 & \textbf{0.242} & 0.313 & \textbf{0.292} & 0.303 & 0.314 & \textbf{0.234} & \textbf{0.256} & 0.278 \\
         \cline{2-12}
         & \multirow{1}{*}{RMSE} & 0.534 & 0.526 & \textbf{0.435} & 0.501 & \textbf{0.490} & 0.494 & 0.505 & \textbf{0.402} & \textbf{0.397} & 0.412 \\
         \hline
         \multirow{2}{*}{5} & \multirow{1}{*}{Bias} & \textbf{0.340} & 0.421 & \textbf{0.355} & 0.406 & 0.388 & 0.358 & 0.363  & \textbf{0.285} & \textbf{0.290} & 0.302\\
         \cline{2-12}
         & \multirow{1}{*}{RMSE} & 0.783 & 0.812 & \textbf{0.649} & 0.724 & \textbf{0.717} & 0.679 & 0.708 & \textbf{0.562} & \textbf{0.550} & 0.570 \\
         \hline 
\end{tabular}
\end{table*}
\newpage

\subsection{Boxplots for experimental results in Section \ref{sec5.1}}

In Figure \ref{proposal_compare1}, we draw the boxplots of the estimated ATT obtained from the simulation in Section \ref{sec5.1} as the compliments to the
results of Table \ref{table1}.

\begin{figure}[h]
\centering
\subfigure[Linear $\operatorname{logit}(\pi(\cdot))$, $n=200$]{\includegraphics[width=0.45\linewidth]{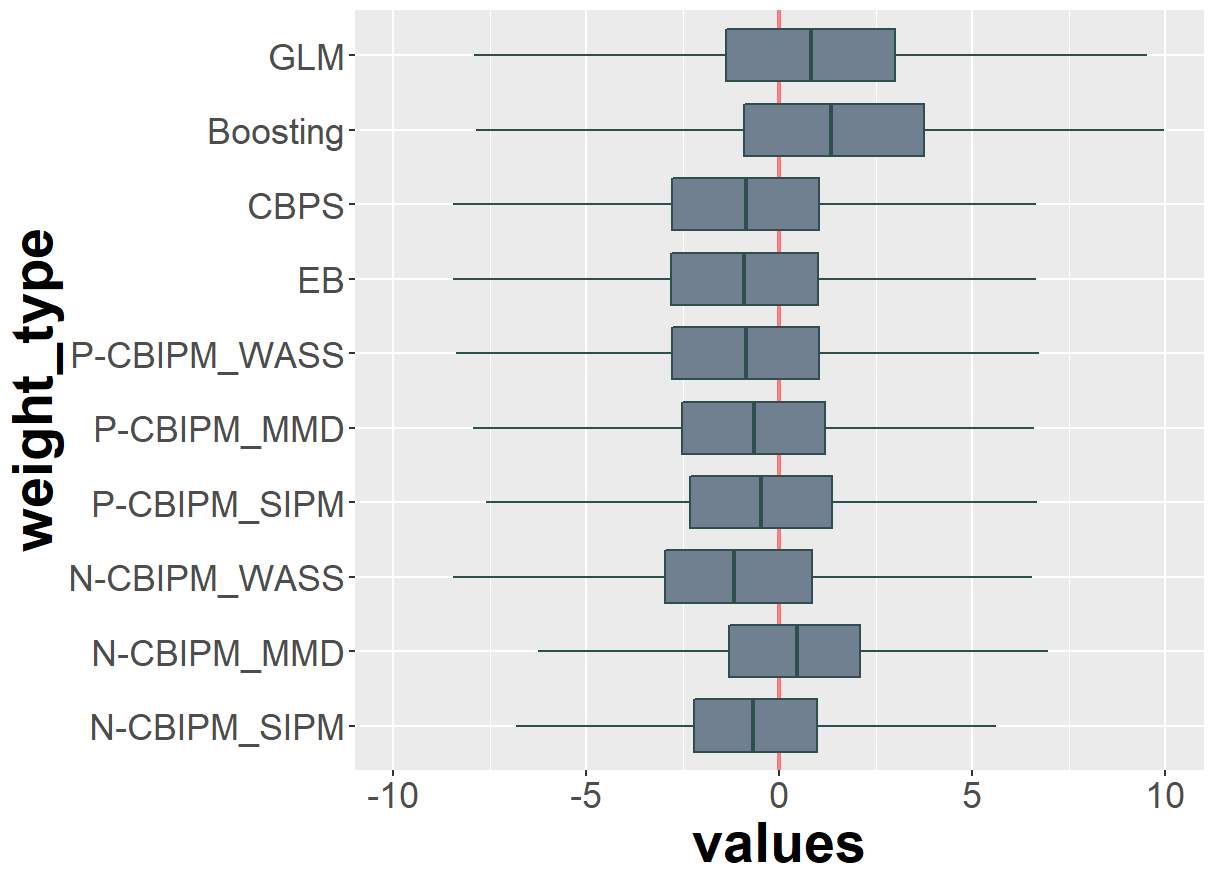}}
\hfill
\subfigure[Linear $\operatorname{logit}(\pi(\cdot))$, $n=1000$]{\includegraphics[width=0.45\linewidth]{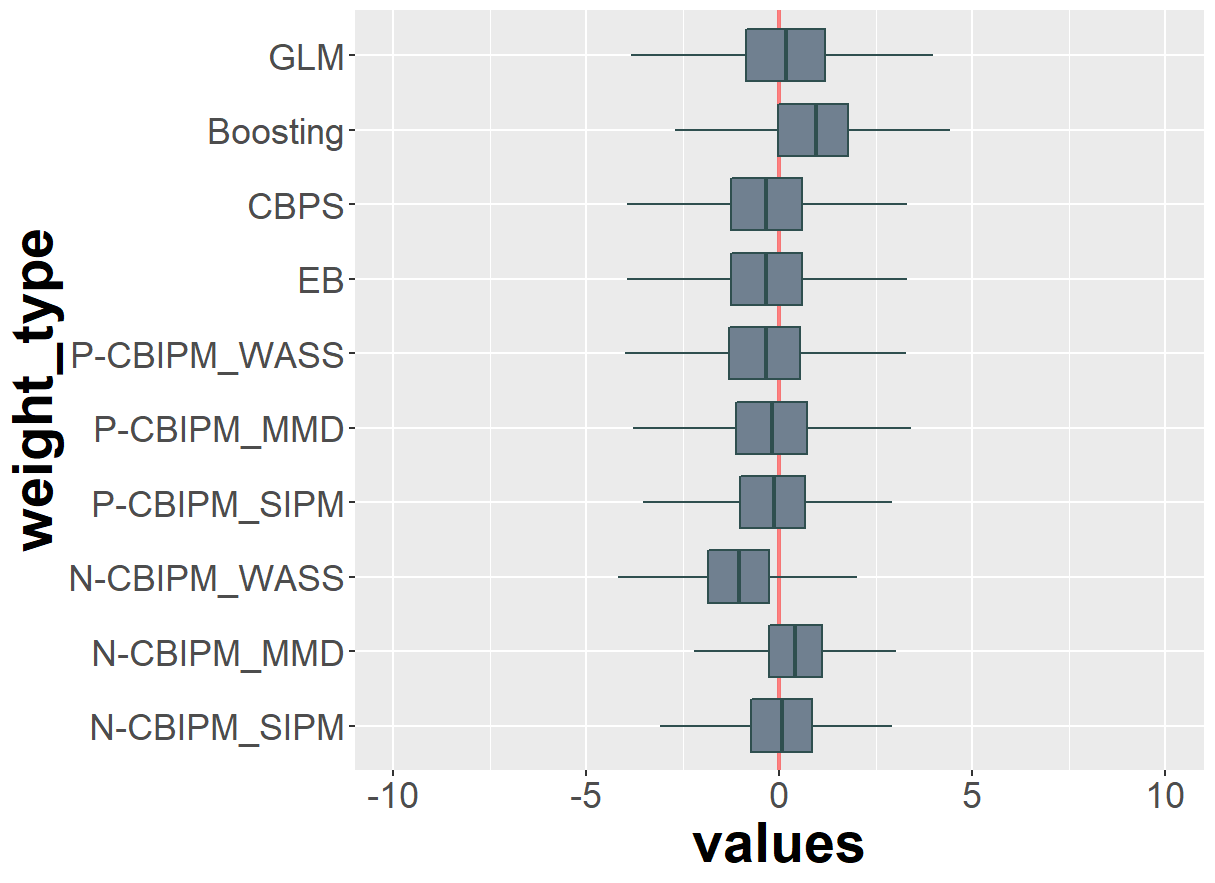}}
\hfill
\subfigure[Nonlinear $\operatorname{logit}(\pi(\cdot))$, $n=200$]{\includegraphics[width=0.45\linewidth]{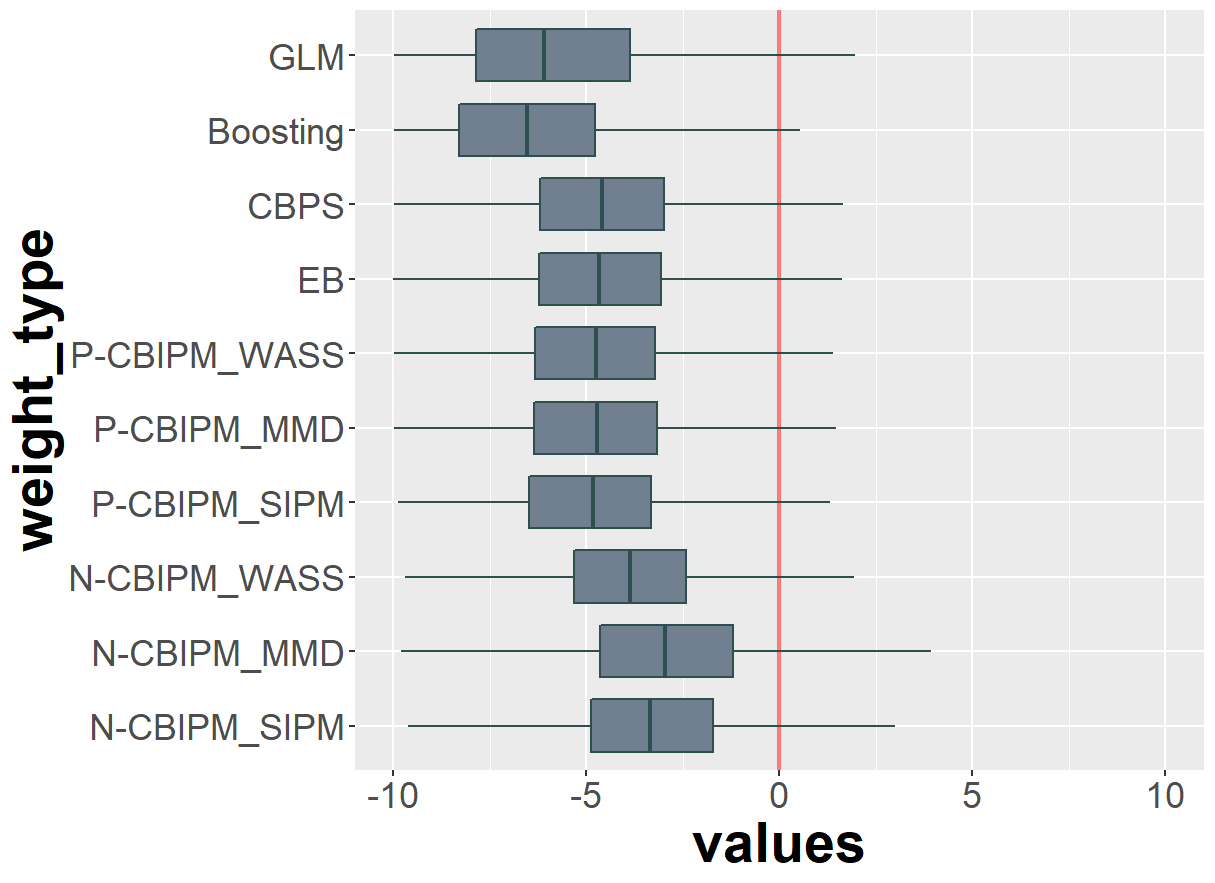}}
\hfill
\subfigure[Nonlinear $\operatorname{logit}(\pi(\cdot))$, $n=1000$]{\includegraphics[width=0.45\linewidth]{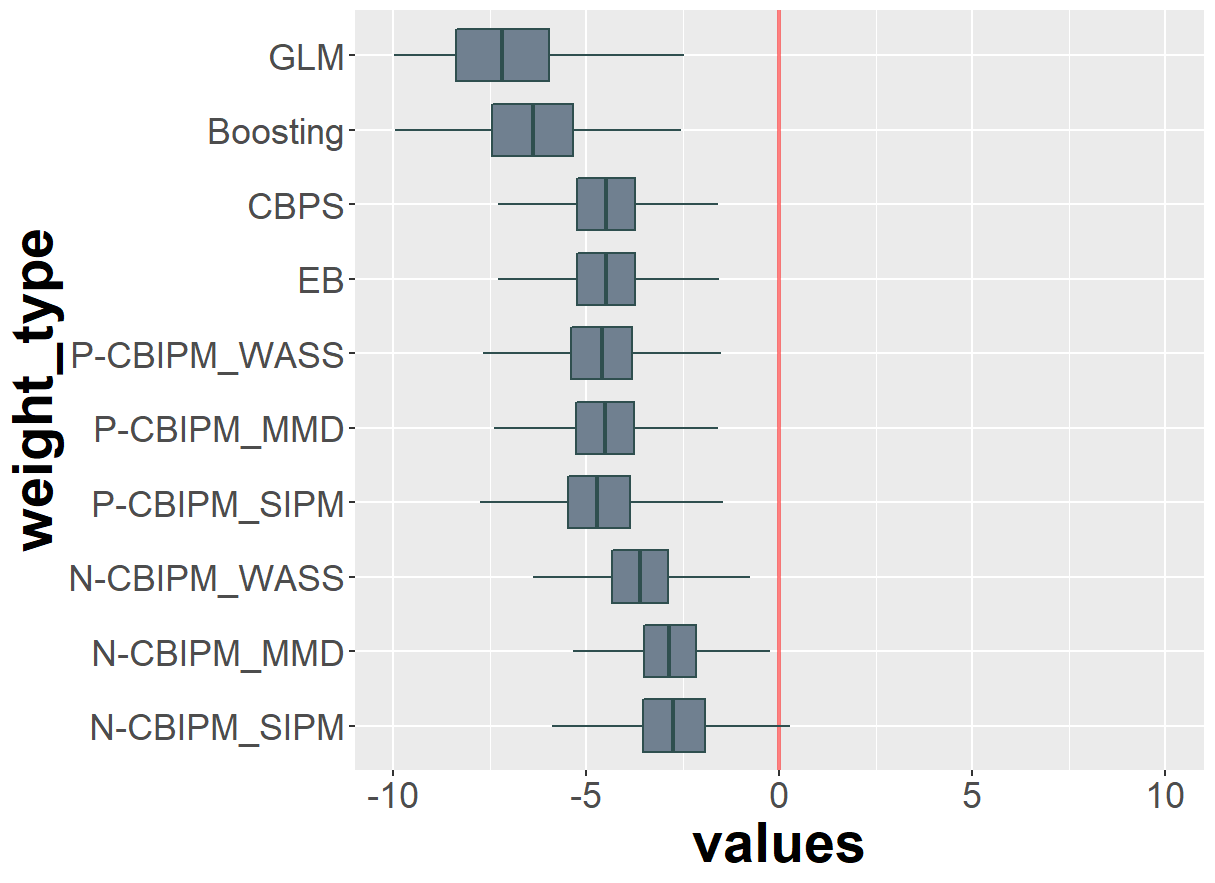}}
\caption{\textbf{Boxplots of the estimated ATT values on  Kang-Schafer example.} 
The boxplots are drawn based on the ATT estimates obtained from 1000 simulated datasets for the Kang-Schafer example. 
X-axis and Y-axis represent the estimated ATT values and the weighting methods, respectively.} \label{proposal_compare1}
\end{figure}

\subsection{Hypothesis test for experimental results in  Section \ref{sec5.2}} \label{App_realdata}

For the complements to Figure \ref{real_data}, we calculate the test statistics and corresponding p-values of the two sample Kolmogorov-Smirnov test between the (weighted) empirical distributions of the treated and control groups, whose results
are presented in Table \ref{table_KS}.
For most variables, especially for \textit{SENIORS}, N-CBIPM achieves better covariate balancing. 

\begin{table*}[h]
\renewcommand{\arraystretch}{1.3}
\caption{\textbf{Two-sample Kolmogorov-Smirnov test for STAR data} 
\label{table_KS}
We apply the two-sample Kolmogorov-Smirnov test to measure how well the weighting methods achieve covariate balancing
for STAR data.
For N-CBIPM, we use MMD.}
\centering
\scalebox{1.0}{\small
\begin{tabular}{|c|cccc|cccc|}
\hline
\multirow{2}{*}{Variables} & \multicolumn{4}{c|}{Test stat.} & \multicolumn{4}{c|}{p-value} \\
   \cline{2-9}
 & Eq.w & GLM & CBPS & N-CBIPM & Eq.w & GLM & CBPS & N-CBIPM \\
    \hline \hline 
ENRLMENT & 0.299 & 0.109 & 0.095 & \textbf{0.057} & 0.008 & 0.905 & 0.967 & \textbf{1.000} \\
\cline{1-9}
SENIORS & 0.332 & 0.139 & 0.122 & \textbf{0.073} & 0.002 & 0.671 & 0.823 & \textbf{1.000}  \\
\cline{1-9}
MNRTYPCT & 0.128 & 0.104 & 0.099 & \textbf{0.085} & 0.699 & 0.929 & 0.954 & \textbf{0.997} \\
\cline{1-9}
FRLCHPCT & 0.192 & 0.073 & \textbf{0.068} & 0.089 & 0.205 & 0.999 & \textbf{1.000} & 0.995 \\
\cline{1-9}
\hline
\end{tabular}
}
\end{table*}

\newpage

\newpage
\subsection{Abolation study : the number of ensemble models in the SIPM} \label{sec_nsipm}

In Appendix \ref{app_C}, we propose to use an ensemble technique for the SIPM to avoid model collapse.
To illustrate the efficiency of the ensemble techniques, we investigate the accuracy (RMSE) and computing time
of the ensemble SIPM algorithm with the various numbers of ensemble models, whose results are presented
in Figure \ref{proposal_compare2}. The ensemble technique improves
the accuracies significantly without increasing computing time much. 

\begin{figure}[h]
\centering
\subfigure[P-CBIPM with SIPM, $n=200$]{\includegraphics[width=0.45\linewidth]{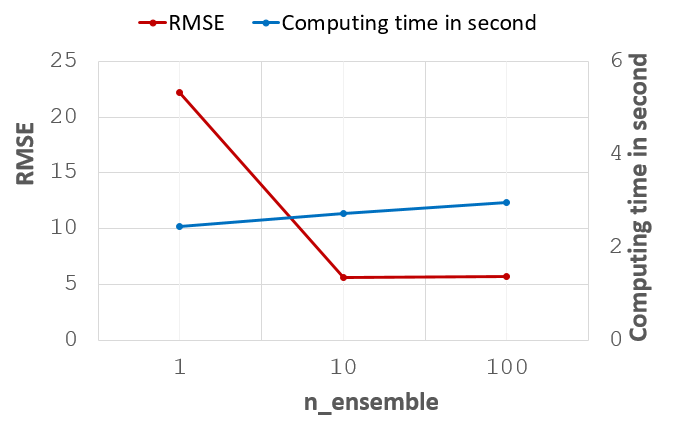}}
\hfill
\subfigure[P-CBIPM with SIPM, $n=1000$]{\includegraphics[width=0.45\linewidth]{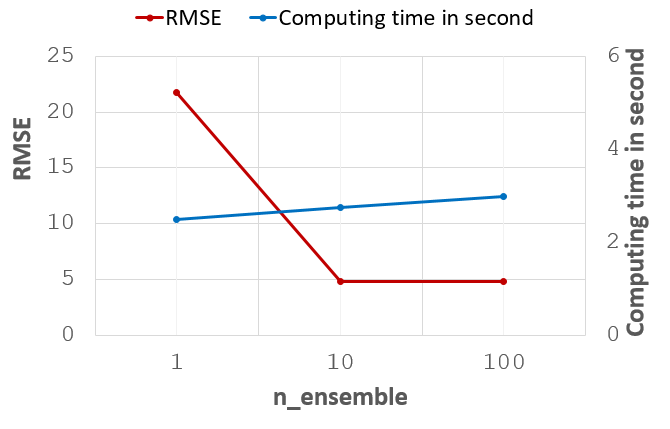}}
\hfill
\subfigure[N-CBIPM with SIPM, $n=200$]{\includegraphics[width=0.45\linewidth]{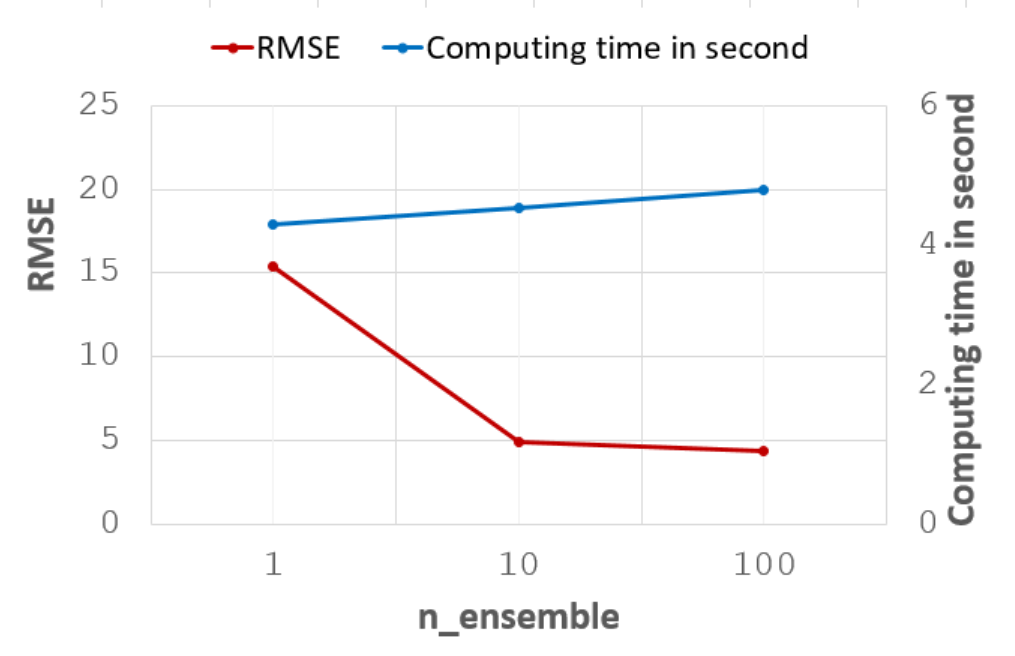}}
\hfill
\subfigure[N-CBIPM with SIPM, $n=1000$]{\includegraphics[width=0.45\linewidth]{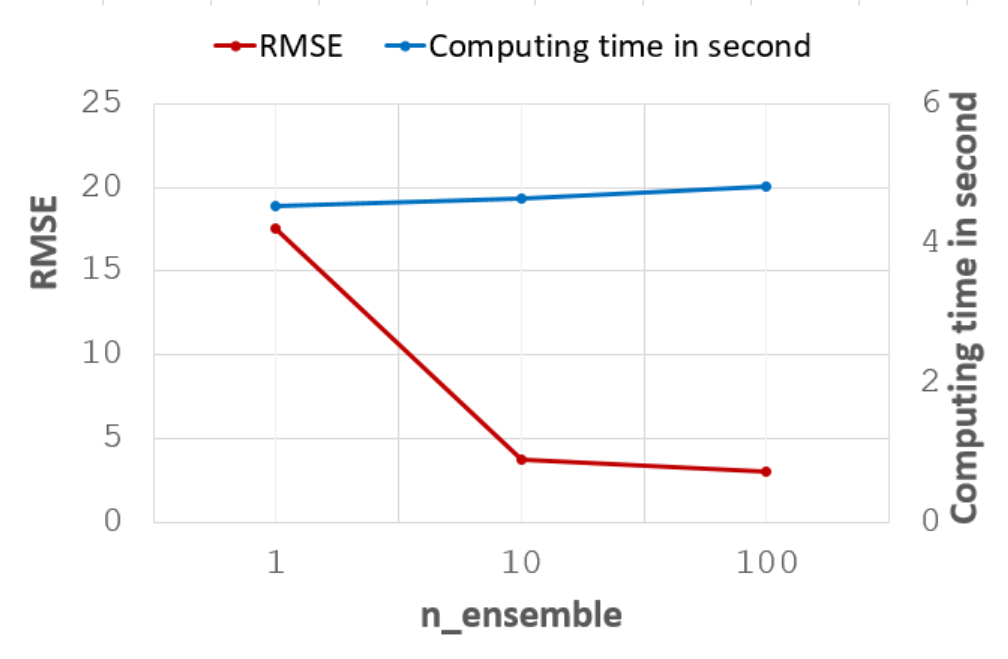}}
\caption{\textbf{RMSE and Computing time for ensemble SIPM.}
Accuracies and computing times of the P-CBIPM and N-CBIPM with the SIPM according to the number of the ensembles are compared for the Kang-Schafer example considered in Section \ref{sec5.1}.} \label{proposal_compare2}
\end{figure}

\newpage
\section{Additional proofs for the manuscript}
\renewcommand{\theequation}{F.\arabic{equation}}

\subsection{Unbiasedness of CBPS when true outcome model is linear} \label{CBPS_unbiased}
For $\bm{\phi}\left(\bm{X}\right) = \bm{X}$, constraint (\ref{eq:CBPS-eq}) becomes
\begin{align*}
\frac{1}{n} \sum_{i : T_i = 0} \frac{\pi_{\hat{\bm{\beta}}} (\bm{X}_i)}{1-\pi_{\hat{\bm{\beta}}} (\bm{X}_i) } \bm{X}_i = \frac{1}{n} \sum_{i : T_i = 1} \bm{X}_i.
\end{align*}
Hence, if $m_0 ( \cdot) : \mathbb{R}^d \to \mathbb{R}$ is a linear function, we get
\begin{align*}
\sum_{i : T_i = 0} \frac{\pi_{\hat{\bm{\beta}}} (\bm{X}_i)}{1-\pi_{\hat{\bm{\beta}}} (\bm{X}_i) } m_0(\bm{X}_i) = \sum_{i : T_i = 1} m_0(\bm{X}_i).
\end{align*}            
Hence, for given $\mathcal{C}^{(n)} = \{(\bm{X}_i, T_i)\}_{i=1}^n$, we obtain
\begin{align*}
    &\mathbbm{E}(\mathbbm{E}(\sum_{i : T_i = 1} \frac{1}{n_1} Y_i- \sum_{i : T_i = 0} \frac{1}{n_1}\frac{\pi_{\hat{\bm{\beta}}} (\bm{X}_i)}{1-\pi_{\hat{\bm{\beta}}} (\bm{X}_i)} Y_i \mid \mathcal{C}^{(n)}))\\
    =& \mathbbm{E}( \frac{1}{n_1}\sum_{i : T_i = 1} m_1(\bm{X}_i)- \frac{1}{n_1} \sum_{i : T_i = 0} \frac{\pi_{\hat{\bm{\beta}}} (\bm{X}_i)}{1-\pi_{\hat{\bm{\beta}}} (\bm{X}_i)} m_0(\bm{X}_i) ) \\
    =& \mathbbm{E}( \frac{1}{n_1} \sum_{i : T_i = 1} (m_1(\bm{X}_i)-  m_0(\bm{X}_i) )) \\
    =& \mathbbm{E}(\operatorname{SATT}) = \operatorname{ATT}.
\end{align*}

\subsection{Derivation for error decomposition} \label{decom_proof}
We obtain (\ref{decom}) by
\begin{align*}
    &\widehat{\operatorname{ATT}}^{\bm{w}} - \operatorname{ATT}\\
    =& (\sum_{i : T_i = 1}   \frac{1}{n_1}Y_i- \sum_{i : T_i = 0} w_i Y_i)
    - \operatorname{SATT} + (\operatorname{SATT} - \operatorname{ATT})\\
    =& (\sum_{i : T_i = 1}   \frac{1}{n_1}Y_i- \sum_{i : T_i = 0} w_i Y_i)
    - \frac{1}{n_1}\sum_{i : T_i = 1} ( m_1 (\bm{X}_i) - m_0 (\bm{X}_i)) + (\operatorname{SATT} - \operatorname{ATT})\\
    =& \frac{1}{n_1}\sum_{i : T_i = 1} m_0 (\bm{X}_i) + \frac{1}{n_1}\sum_{i : T_i = 1} \frac{Y_i - m_1(\bm{X}_i)}{n_1} - \sum_{i : T_i = 0} w_i Y_i + (\operatorname{SATT} - \operatorname{ATT})\\
    =& \frac{1}{n_1}\sum_{i : T_i = 1} m_0 (\bm{X}_i) - \sum_{i : T_i = 0} w_i m_0 (\bm{X}_i) + \frac{1}{n_1}\sum_{i : T_i = 1} \frac{Y_i - m_1(\bm{X}_i)}{n_1} - \sum_{i : T_i = 0} w_i (Y_i - m_0 (\bm{X}_i)) + (\operatorname{SATT} - \operatorname{ATT}) \\
    =& \operatorname{err}_{\text{bal}}^{\bm{w}} + \operatorname{err}_{\text{obs}}^{\bm{w}} + (\operatorname{SATT} - \operatorname{ATT}).
\end{align*}
\subsection{CBPS as the special case of P-CBIPM} \label{CBPS_equ}
Consider solving (\ref{solve_P_CBIPM}) over
\begin{align*}
    f(\bm{x} ; \bm{\theta} ) &=  \bm{\theta}^{\top} \bm{x}, \qquad \bm{\theta} \in \mathbb{R}^d,\\
    \mathcal{M}^{\text{linear}} &= \left\{m(\cdot) : m(\bm x) = \bm{\alpha}^\top \bm{x}, \bm \alpha \in \mathbb{R}^d, ||\bm{\alpha}||_{\infty} \leq 1 \right\}.            
\end{align*}
Then, this formulation of P-CBIPM is indeed the same as that of CBPS. 
More specific,  
\begin{align*}
d_{\mathcal{M}^{\text{linear}}}(\mathbbm{P}_{0,n}^{\bm{w}}, \mathbbm{P}_{1,n} ) =& \sup_{m\in \mathcal{M}^{\text{linear}}} \left|  \sum_{i : T_i = 0} w_i m (\bm{X}_i) - \sum_{i : T_i = 1} \frac{m (\bm{X}_i)}{n_1} \right| \nonumber\\
=& \sup_{||\bm{\alpha}||_{\infty} \leq 1} \left|  \sum_{i : T_i = 0} w_i \bm{\alpha}^\top \bm{X}_i - \sum_{i : T_i = 1} \frac{\bm{\alpha}^\top \bm{X}_i }{n_1} \right|\\
=& \sup_{||\bm{\alpha}||_{\infty} \leq 1} \left| \sum_{i : T_i = 0} \sum_{j=1}^d w_i \alpha_j X_{ij} - \sum_{i : T_i = 1} \sum_{j=1}^d \frac{ \alpha_j X_{ij} }{n_1}  \right|\\
=& \sup_{||\bm{\alpha}||_{\infty} \leq 1} \left| \sum_{j=1}^d \alpha_j \left( \sum_{i : T_i = 0} w_i X_{ij} - \sum_{i : T_i = 1} \frac{ X_{ij} }{n_1} \right) \right|\\
=& \sum_{j=1}^d \left|  \sum_{i : T_i = 0} w_i X_{ij} - \sum_{i : T_i = 1} \frac{ X_{ij} }{n_1} \right|,
\end{align*}     
where $\bm{X}_i = (X_{i1}, \dots, X_{id})^{\top}$.
Since $w_i^{\bm{\theta}}$ is expressed as
\begin{align*}
     w_i^{\bm{\theta}} := \frac{\mathbb{I}(T_i = 0) \exp(\bm{\theta}^\top \bm{X}_i )}{ \sum_{i : T_i = 0} \exp({\bm{\theta}^\top \bm{X}_i})} && i \in [n]
\end{align*}
for given $\bm{\theta} \in \mathbb{R}^d$, $\bm{w}$ has a degree of freedom of $d$.           
Hence, there exists a unique solution $\widehat{\bm{\theta}}_n$ such that $d_{\mathcal{M}^{\text{linear}}}(\mathbbm{P}_{0,n}^{\bm{w}^{\widehat{\bm{\theta}}_n}}, \mathbbm{P}_{1,n})$ equals zero, which is identical to the solution of CBPS.

\end{document}